\documentclass[11pt]{article}
\usepackage[T1]{fontenc}
\usepackage{times}
\usepackage[letterpaper,textwidth=6in,textheight=9in]{geometry}
\usepackage[authoryear,round]{natbib}
\usepackage{microtype}
\newlength{\figurewidth}
\usepackage{hyperref}
\hypersetup{hypertexnames=false,hidelinks,pdftitle={Jacobian Rank Collapse in Decision-Focused Learning},pdfauthor={Aojie Yuan, Haiyue Zhang, Zijian Su},pdfsubject={Predictor geometry and decision-focused learning}}
\usepackage{url}
\usepackage{amsmath,amssymb,amsfonts,amsthm}
\usepackage{booktabs}
\newtheorem{proposition}{Proposition}
\newtheorem{theorem}{Theorem}
\newtheorem{corollary}{Corollary}
\newtheorem{lemma}{Lemma}
\newtheorem{definition}{Definition}
\usepackage{graphicx}

\newsavebox{\publicationtablebox}
\newcommand{\publicationtable}[1]{%
  \sbox{\publicationtablebox}{#1}%
  \ifdim\wd\publicationtablebox>\linewidth
    \resizebox{\linewidth}{!}{\usebox{\publicationtablebox}}%
  \else
    \usebox{\publicationtablebox}%
  \fi}

\usepackage{algorithm}
\usepackage{algorithmic}
\usepackage{multirow}
\usepackage{xcolor}
\usepackage{subcaption}
\usepackage{enumitem}

\usepackage{fancyhdr}
\usepackage{placeins}
\definecolor{publicationink}{HTML}{243746}
\definecolor{publicationaccent}{HTML}{6542A6}
\definecolor{publicationmuted}{HTML}{64717A}
\renewcommand{\headrulewidth}{0.35pt}
\fancypagestyle{plain}{\fancyhf{}\fancyfoot[C]{\small\thepage}\renewcommand{\headrulewidth}{0pt}}
\makeatletter
\renewcommand{\headrule}{\hbox to\headwidth{\color{publicationaccent}\leaders\hrule height \headrulewidth\hfill}}
\renewcommand\section{\@startsection{section}{1}{\z@}{-2.6ex plus -.7ex minus -.2ex}{1.3ex plus .2ex}{\normalfont\large\sffamily\bfseries\color{publicationaccent}}}
\renewcommand\subsection{\@startsection{subsection}{2}{\z@}{-2ex plus -.5ex minus -.2ex}{.9ex plus .2ex}{\normalfont\normalsize\sffamily\bfseries}}
\renewcommand\@maketitle{\newpage\null\vskip -36pt
 \institutionlogos\par\vskip 12pt
 {\color{publicationaccent}\hrule height .45pt}\vskip 12pt
 {\raggedright\fontsize{18}{22}\selectfont\color{publicationink}\mbox{\@title}\par}
 \vskip 8pt
 {\centering\normalsize\@author\par}
 \vskip 5pt {\centering\small\authoraffiliations\par}
 \vskip 4pt {\centering\fontsize{8.5}{10}\selectfont\authoremails\par}
 \vskip 9pt}
\makeatother

\newcommand{\bw}{\mathbf{w}}
\newcommand{\br}{\mathbf{r}}
\newcommand{\bx}{\mathbf{x}}
\newcommand{\bSigma}{\hat{\Sigma}}
\newcommand{\bsigma}{\boldsymbol{\sigma}}
\newcommand{\cS}{\mathcal{S}}
\newcommand{\cL}{\mathcal{L}}
\newcommand{\btheta}{\boldsymbol{\theta}}

\title{Jacobian Rank Collapse in Decision-Focused Learning}

\author{Aojie Yuan\textsuperscript{1}\quad Haiyue Zhang\textsuperscript{1}\quad Zijian Su\textsuperscript{2}}
\newcommand{\institutionlogos}{%
\noindent\makebox[\linewidth]{%
\raisebox{-.5\height}{\includegraphics[width=155pt]{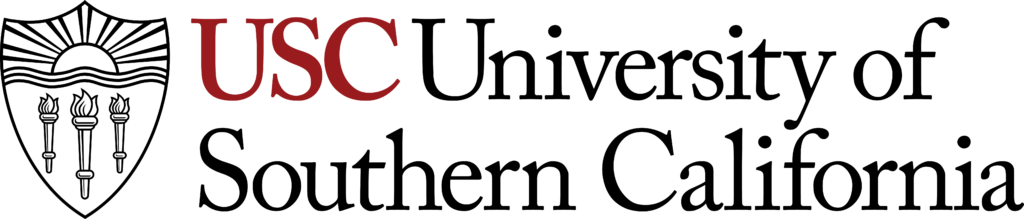}}\hfill
\raisebox{-.5\height}{\includegraphics[width=195pt]{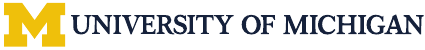}}}}
\newcommand{\authoraffiliations}{%
\textsuperscript{1}University of Southern California\quad
\textsuperscript{2}University of Michigan}
\newcommand{\authoremails}{%
\href{mailto:aojieyua@usc.edu}{\texttt{aojieyua@usc.edu}}\quad
\href{mailto:haiyuez@usc.edu}{\texttt{haiyuez@usc.edu}}\quad
\href{mailto:simoon@umich.edu}{\texttt{simoon@umich.edu}}}

\date{}

\newsavebox{\abstractframe}
\renewenvironment{abstract}{%
 \par\setlength{\fboxsep}{9pt}\setlength{\fboxrule}{.45pt}%
 \begin{lrbox}{\abstractframe}\begin{minipage}{\dimexpr\linewidth-2\fboxsep-2\fboxrule\relax}%
 \small\setlength{\parindent}{0pt}\setlength{\parskip}{5pt}%
 {\centering\normalsize\bfseries\color{publicationaccent}Abstract\par}\vspace{3pt}%
}{\end{minipage}\end{lrbox}\noindent\fcolorbox{publicationaccent}{white}{\usebox{\abstractframe}}\par\vspace{9pt}}

\usepackage{alphalph}
\begin{document}
\raggedbottom
\maketitle

\begin{abstract}
Decision-focused learning (DFL) trains predictors through downstream objectives, but a different loss need not provide an independent parameter-update direction. We characterize this restriction through the predictor Jacobian, using sparse index tracking to distinguish the covariance entries read by the optimizer from the parameter directions available to learning. Rank-one Jacobians make nonzero per-example gradients collinear; a conditional spectral bound describes near-collinearity. A batch-subspace characterization and counterexamples show why these local statements imply neither common minimizers nor collinear batch updates.

Experiments examine when geometry translates into decision quality. Across 38 one-parameter equity configurations, DFL gains over MSE remain below 1.8\%; a 385-parameter conditional predictor also has pointwise rank one. In validation-tuned shortest-path and knapsack experiments, full-capacity SPO+ reduces mean regret by 11.6\% and 10.6\%, respectively; only knapsack survives correction across eight comparisons. The capacity contrast persists on fresh datasets across batch orders and training budgets. Holding expressivity fixed, invertible coordinate scaling lowers spectral effective rank and ordinary SGD gains; compensating for the scaling restores the original trajectories. Financial forward-target controls separate forecast accuracy from decision quality; a matched neural comparison finds no aggregate DFL advantage in the tested architecture. These findings distinguish local rank restrictions, coordinate-dependent optimization and predictive accuracy. Predictor geometry helps explain available learning directions, while held-out decision quality remains the test of practical benefit.
\end{abstract}
\begin{figure}[!h]
\centering
\includegraphics[width=\figurewidth]{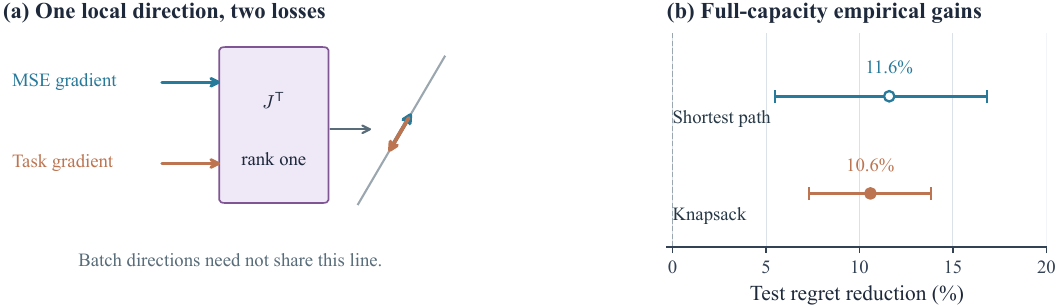}
\caption{\textbf{Local geometry restricts directions; task gains require evidence.} Left: rank-one parameter Jacobians map nonzero loss gradients onto one line, but batch directions can differ across examples. Right: full-capacity regret reductions in the controlled extension, with pointwise 95\% paired-dataset bootstrap intervals over ten datasets. Only knapsack passes Holm correction across eight comparisons. Filled markers denote Holm-adjusted significance.}
\label{fig:hero}
\end{figure}
\clearpage

\section{Introduction}

When should a practitioner pay for decision-focused learning (DFL) rather than fit a predictor and optimize its outputs? Differentiable optimization and decision-aware surrogates offer several ways to train end to end \citep{elmachtoub2022smart,donti2017task,wilder2019melding,mandi2024decision}. Their benefit depends on both the downstream objective and the predictor: a decision-relevant error direction is useful only if the model can express an update along it. We study this second restriction.

At a fixed input, the central object is the predictor Jacobian $\mathbf{J}=\partial\mathrm{vec}(\Sigma)/\partial\theta$. Both MSE and task gradients pass through $\mathbf{J}^\top$ before updating the parameters. At rank one, their nonzero images lie on one line (Proposition~\ref{prop:dim1}); a spectral bound controls angular separation when both the singular-value gap and the gradients' leading components permit it (Theorem~\ref{thm:spectral}). Rank collapse therefore excludes a new \emph{independent direction}, but does not exclude different signs, stationary points or final decisions. We distinguish this exact local statement from the empirical question of whether DFL improves test performance.

\paragraph{Why finance, and what should transfer?}
Sparse index tracking separates three restrictions: selecting assets determines which covariance entries the optimizer reads; the predictor maps output gradients into parameter directions; aggregation combines directions across examples. These operations need not impose the same geometry.

Across 38 one-parameter equity configurations, DFL gains over MSE remain below $1.8\%$. Yet a 385-parameter conditional predictor also has pointwise rank one: parameter count alone misses the bottleneck. The financial baseline reconstructs trailing covariance rather than forecasting future covariance (Section~\ref{sec:methodology}). Figure~\ref{fig:hero} connects the mechanism to the controlled evidence; Figure~\ref{fig:pipeline} shows the training paths.

We test transfer in shortest-path and knapsack tasks using shared initializers, separate validation and exact decision oracles. A capacity sweep first measures the performance contrast; an invertible reparameterization then holds expressivity fixed to examine coordinate effects. Finally, forward-covariance baselines test whether the financial findings depend on a reconstruction target. This sequence separates three questions: which directions are available, how optimization uses them, and whether the resulting predictions improve decisions.

\paragraph{Contributions.}
\begin{enumerate}[leftmargin=*,itemsep=3pt]
\item \textbf{A geometric account from output support to batch updates.} A support-energy inequality, conditional spectral bound and batch-subspace characterization separate what a decision loss observes from the directions a predictor can follow. Counterexamples rule out inferring common minimizers or collinear batch updates from pointwise rank one.
\item \textbf{Controlled tests beyond the financial setting.} Two synthetic tasks use matched initial predictors, independently selected hyperparameters and saved models for direct verification. A fresh-data follow-up varies minibatch order and training budget while allowing either loss to retain the unchanged baseline. An invertible reparameterization then holds expressivity fixed: spectral rank and ordinary SGD gains fall together, while compensation restores the original updates. This distinguishes a coordinate-dependent optimization effect from lost expressivity.
\item \textbf{Financial evidence with explicit scope.} Equity comparisons separate parameter count, local geometry and observed decision quality. A chronological forward-target control compares future-covariance MSE, task validation, Ledoit--Wolf and EWMA; better covariance forecasts need not yield better tracking. A matched neural forward-target comparison reports a small aggregate difference and frequent validation selection of the unchanged initializer. Dynamic-selection experiments and negative results delimit these claims. Regularization coefficients and benefit thresholds remain validation choices rather than consequences of rank collapse.
\end{enumerate}

\section{Related Work}

\paragraph{Sparse Index Tracking.}
Sparse portfolio construction has a rich history \citep{kolm2014years}: evolutionary heuristics \citep{beasley2003evolutionary}, mixed-integer programming \citep{canakgoz2009mixed}, $\ell_1$-penalized formulations \citep{benidis2018sparse,brodie2009sparse}, and cardinality-constrained optimization \citep{xu2016sparse}. These approaches address sparse portfolio construction; our focus is on training covariance predictors through the resulting decisions.

\paragraph{Decision-Focused Learning.}
DFL optimizes downstream objectives using differentiable QPs \citep{donti2017task,amos2017optnet,agrawal2019differentiable}, SPO+ \citep{elmachtoub2022smart}, combinatorial surrogates \citep{wilder2019melding,vlastelica2020differentiation}, or implicit differentiation \citep{blondel2022efficient,paulus2024lpgd}. PG losses have asymptotic decision-quality guarantees under misspecification \citep{huang2024decision}; PEAR characterizes regret gradients through active-constraint tangent spaces and local curvature \citep{lee2026pear}. Other work studies regret decomposition \citep{aldridge2026regret}, online DFL \citep{capitaine2026online}, and prediction inflation in portfolios \citep{wang2026decision}. Our analysis concerns a different stage: the predictor Jacobian that maps these output-space signals into trainable parameter directions.

\paragraph{Portfolio Optimization and Covariance Estimation.}
End-to-end portfolio construction \citep{butler2023integrating,zhang2020deep,kim2025covariance} and high-dimensional covariance estimation \citep{ledoit2004well,fan2013large,friedman2008sparse,engle2002dynamic} are both active areas. Concurrent work by \citet{jeon2026sparse} applies DFL to sparse tangent portfolios and reports gains in larger universes---complementary to our study of local gradient geometry and low-capacity predictors.

\section{Problem Formulation}
\label{sec:formulation}

Let $\br_t \in \mathbb{R}^N$ denote asset returns, $\bw_{\text{idx}}$ the index weights, and $\Sigma = \text{Cov}(\br_t)$. Tracking error minimization reduces to $\min_\bw (\bw - \bw_{\text{idx}})^\top \Sigma (\bw - \bw_{\text{idx}})$. The joint selection-and-weighting problem is NP-hard \citep{beasley2003evolutionary}, so we decompose into two stages.

\paragraph{Stage 1: Subset selection.} Choose $\cS \subseteq \{1,\dots,N\}$ with $|\cS| = K$ via market-cap ranking (default) or tracking-score: $s_i = w_i^{\text{idx}} \cdot (\bSigma_{i,:} \bw_{\text{idx}})^2 / \bSigma_{ii}$. Under tracking-score selection, $\cS$ depends on $\bSigma$, so off-block estimation errors propagate to stock selection.

\paragraph{Stage 2: Weight optimization (QP).} Given $\cS$, solve:
\begin{equation}
  \min_{\bw_\cS} \; \bw_\cS^\top \bSigma_{\cS\cS} \bw_\cS - 2\bw_\cS^\top \bSigma_{\cS,:}\bw_{\text{idx}}
  \;\; \text{s.t.} \; \mathbf{1}^\top \bw_\cS {=} 1,\, \bw_\cS {\geq} 0.
  \label{eq:qp}
\end{equation}
Selection $\cS$ is detached from the graph (Appendix~\ref{app:design}). The QP reads at most $2NK - K^2$ of $N^2$ entries; this asymmetry drives our gradient alignment analysis.

\section{Methodology}
\label{sec:methodology}

\begin{figure}[t]
\centering
\includegraphics[width=\figurewidth]{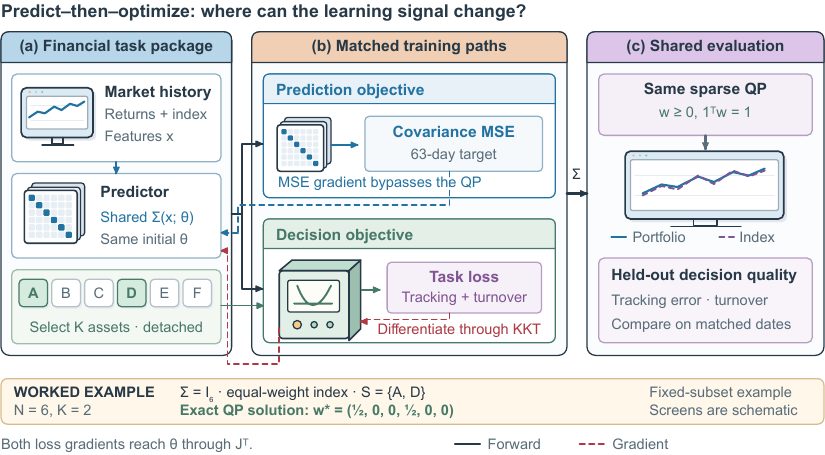}
\caption{\textbf{Two objectives, one predictor.} MSE directly supervises covariance prediction; task loss differentiates through the QP. Both reach $\theta$ through $\mathbf{J}^{\top}$, with selection detached. The toy example illustrates sparse weighting for an equal-weight index with identity covariance.}
\label{fig:pipeline}
\end{figure}

\subsection{Two-Stage Baseline and DFL}

The standard two-stage approach trains under Frobenius loss $\cL_{\text{MSE}} = N^{-2}\|\bSigma(\bx_t; \theta) - \Sigma_{\text{realized},t}\|_F^2$, where $\Sigma_{\text{realized},t}$ is the sample covariance of the preceding 63 trading days. In the inspected training implementation this is a trailing reconstruction target, not a future covariance label. For shrinkage families it is also the input sample covariance, so this baseline should not be interpreted as an optimally specified supervised forecasting model. In DFL, we embed the QP as a differentiable layer:
\begin{equation}
  \cL_{\text{task}} = \frac{252}{H} \sum_{s=0}^{H-1} \bigl(\bw^*(\bSigma_t)^\top \br_{t+s} - r_{t+s}^{\text{idx}}\bigr)^2 + \frac{\gamma}{2} \|\bw^*_t - \bw^*_{t-1}\|_1,
  \label{eq:task}
\end{equation}
where $t$ denotes the first return in the forward training window, $\bw^*(\bSigma_t)$ is the QP solution, $H = 21$ trading days, and $\gamma$ controls turnover. Gradients flow through the KKT conditions \citep{amos2017optnet} into $\theta$. Selection $\cS$ is detached from the graph.

\subsection{Block-Diagonal DFL (BD-DFL)}

Under detached selection, the task gradient is restricted to the current support $\mathcal T$. Errors outside that support remain unpenalized by the task objective and may matter after selection changes. BD-DFL adds relative MSE regularization:
\begin{equation}
  \cL_{\text{BD-DFL}} = \cL_{\text{task}} + \beta \cdot \|\bSigma(\theta) - \Sigma_{\text{realized}}\|_F^2 / (\|\Sigma_{\text{realized}}\|_F^2+10^{-12}),
  \label{eq:bddfl_method}
\end{equation}
where $\beta = 0$ recovers pure DFL and large $\beta$ emphasizes covariance fit. A selection-robustness perspective motivates this penalty: entries outside today's support may enter tomorrow's QP. Classical DRO provides related regularization principles \citep{mohajerin2018data,blanchet2019quantifying}, but does not establish the swap-dependent coefficient claimed for this particular model. We therefore choose $\beta$ empirically and state the limitation in Appendix~\ref{app:dro_proof}.

\begin{proposition}[Uniform control of selected inputs]
\label{thm:dro}
For every error matrix $E$ and subset $\cS$, $\|E_{\cS\cS}\|_{\mathrm{op}}\leq\|E\|_F$ and $\|E_{\cS,:}w_{\mathrm{idx}}\|_2\leq\|E\|_F\|w_{\mathrm{idx}}\|_2$. Thus a Frobenius penalty controls the two perturbed QP inputs uniformly over subsets. Converting this into a decision or regret bound requires solution-stability assumptions.
\end{proposition}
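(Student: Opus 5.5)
The plan is to reduce both inequalities to two standard facts. First, the operator norm is dominated by the Frobenius norm. Second, the Frobenius norm of a submatrix is at most that of the full matrix. Let $E\in\mathbb{R}^{N\times N}$ be arbitrary and $\cS\subseteq\{1,\dots,N\}$. Write $P_\cS\in\mathbb{R}^{|\cS|\times N}$ for the coordinate-selection matrix, so that $E_{\cS\cS}=P_\cS E P_\cS^\top$ and $E_{\cS,:}=P_\cS E$. Since $P_\cS$ has orthonormal rows, $\|P_\cS\|_{\mathrm{op}}=1$, or $0$ if $\cS=\emptyset$, in which case both claims are trivial.

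For the first inequality, I will use $\|A\|_{\mathrm{op}}^2=\sigma_{\max}(A)^2\le\sum_i\sigma_i(A)^2=\|A\|_F^2$ for any matrix $A$. I apply it with $A=E_{\cS\cS}$. Then I observe that $\|E_{\cS\cS}\|_F^2=\sum_{i,j\in\cS}E_{ij}^2\le\sum_{i,j}E_{ij}^2=\|E\|_F^2$, because deleting entries only removes nonnegative terms. This yields $\|E_{\cS\cS}\|_{\mathrm{op}}\le\|E_{\cS\cS}\|_F\le\|E\|_F$. The argument uses no symmetry of $E$ and holds for every $\cS$.

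For the second inequality, I write $\|E_{\cS,:}w_{\mathrm{idx}}\|_2\le\|E_{\cS,:}\|_{\mathrm{op}}\|w_{\mathrm{idx}}\|_2$. I then bound $\|E_{\cS,:}\|_{\mathrm{op}}\le\|E_{\cS,:}\|_F\le\|E\|_F$ by the same two facts, with rows outside $\cS$ deleted. Alternatively, Cauchy--Schwarz row by row gives $|(Ew)_i|\le\|E_{i,:}\|_2\|w\|_2$; summing the squares over $i\in\cS$ gives the same bound directly. Both bounds are independent of $\cS$, which gives the stated uniformity. The interpretive sentence follows because the QP in \eqref{eq:qp} depends on $\bSigma$ only through $\bSigma_{\cS\cS}$ and $\bSigma_{\cS,:}\bw_{\text{idx}}$. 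Taking $E=\bSigma-\Sigma_{\text{realized}}$, the Frobenius term in \eqref{eq:bddfl_method} therefore bounds the perturbation of both QP inputs for any selected support.

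There is no real obstacle; the only point needing care is the last sentence of the proposition, which is a scoping remark rather than a claim. I would justify it briefly. Turning an input perturbation bound into a bound on $\|\bw^*(\bSigma)-\bw^*(\Sigma)\|$, or on regret, requires Lipschitz stability of the QP solution map. For example, that holds under strong convexity $\lambda_{\min}(\Sigma_{\cS\cS})>0$ together with a stable active set. These conditions are not implied by the inequalities above, so I would state that no decision bound is claimed without them.
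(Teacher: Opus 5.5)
Your proof is correct and is essentially the argument the paper relies on. Appendix~\ref{app:dro_proof} restates the two inequalities without further derivation; they follow from the two facts you use: the operator norm is dominated by the Frobenius norm, and the Frobenius norm of a submatrix or row block is at most that of $E$. Like you, the paper defers any decision-level conclusion to the local QP perturbation bound in Appendix~\ref{app:regret_proof}, which assumes a stable active set and $\lambda_{\min}(\Sigma_{\cS\cS})>0$. One small precision: the BD-DFL term in \eqref{eq:bddfl_method} is normalized by $\|\Sigma_{\text{realized}}\|_F^2+10^{-12}$, so it controls $\|E\|_F$ only up to that scale factor.
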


\subsection{Covariance Models}

Five architectures of increasing capacity, each trained under MSE and DFL:
\textbf{Shrinkage}~(1p): $\bSigma = (1{-}\alpha)S + \alpha \mu I$ \citep{ledoit2004well};
\textbf{Factor}~($O(NK_f)$p): $\bSigma = BFB^\top + D$;
\textbf{Neural}~($O(Nr)$p): MLP $\to LL^\top + D$;
\textbf{Structured}~(12p): learnable per-sector targets;
\textbf{Conditional}~(385p): regime-adaptive $\alpha_t = \sigma(\text{MLP}(\mathbf{z}_t))$.
Training: Adam (lr $10^{-3}$, weight decay $10^{-4}$, clip 1.0), early stopping on validation TE, cvxpylayers \citep{agrawal2019differentiable} for QP differentiation. Details in Appendix~\ref{app:training}.

\section{Local Geometry and Batch Limits}
\label{sec:geometry}

Throughout this section, $g$ denotes an output-space loss gradient, $J$ a pointwise predictor Jacobian, and $J_{\mathrm{stack}}$ the Jacobian stacked over examples. We reserve $r_{\mathrm{eff}}$ for spectral entropy rank and $d_{\mathrm{proxy}}=d h$ for the archived heterogeneity heuristic. Input sensitivity uses a different Jacobian, $J_x=\partial f_\theta(x)/\partial x$, whose spectral rank is denoted $r_{\mathrm{in}}$. Neither $r_{\mathrm{in}}$ nor $d_{\mathrm{proxy}}$ is interchangeable with the spectral rank of $J$ or $J_{\mathrm{stack}}$.

\subsection{Gradient Alignment Theory}

\begin{figure*}[t]
\centering
\includegraphics[width=\figurewidth]{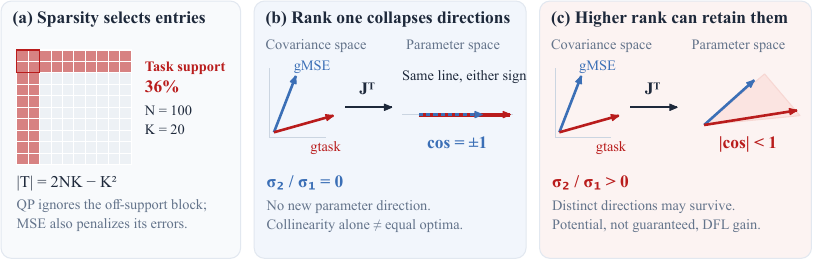}
\caption{\textbf{Where gradient directions are lost.} (a)~The task-relevant support has $2NK-K^2$ entries (36\% here). (b)~A rank-one predictor Jacobian maps nonzero gradients onto one line, with either sign. (c)~With multiple active singular directions, angular separation can survive; improved task performance is possible, not guaranteed.}
\label{fig:mechanism}
\end{figure*}

The task gradient decomposes as:
\begin{equation}
  \nabla_\theta \ell = \underbrace{\frac{\partial \ell}{\partial \bw^*}}_{\text{task}} \cdot \underbrace{\frac{\partial \bw^*}{\partial \bSigma}}_{\text{QP Jacobian}} \cdot \underbrace{\frac{\partial \bSigma}{\partial \theta}}_{\text{model}}.
  \label{eq:chain}
\end{equation}
The QP Jacobian $\partial \bw^* / \partial \bSigma$, derived from KKT differentiation \citep{amos2017optnet}, acts as a \emph{support-restricted linear map}: it zeroes out entries $\bSigma_{ij}$ where both $i \notin \cS$ and $j \notin \cS$, since the QP reads only $\bSigma_{\cS\cS}$ and $\bSigma_{\cS,:}\bw_{\text{idx}}$. MSE can place gradient energy on all entries. A small support limits alignment only when little MSE energy concentrates on that support.

\begin{proposition}[Gradient Alignment Bound]
\label{prop:alignment}
Let $\cS \subset \{1,\ldots,N\}$, $|\cS| = K$. Define the task-relevant support $\mathcal{T} = \{(i,j) : i \in \cS \text{ or } j \in \cS\}$, with $|\mathcal{T}| = 2NK - K^2$. For nonzero gradients, suppose the normalized MSE energy obeys $\mathbb{E}[\|\mathbf{g}_{\mathrm{MSE}}|_{\mathcal T}\|^2/\|\mathbf{g}_{\mathrm{MSE}}\|^2]\leq |\mathcal T|/N^2$. Then:
\begin{equation}
\mathbb{E}|\cos(\mathbf{g}_{\mathrm{MSE}},\, \mathbf{g}_{\mathrm{task}})| \leq \sqrt{\tfrac{2K}{N} - \tfrac{K^2}{N^2}} \approx \sqrt{\tfrac{2K}{N}} \;\; (K \ll N).
\label{eq:alignment}
\end{equation}
\end{proposition}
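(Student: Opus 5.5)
The plan has three parts: a support property of the task gradient, a pointwise Cauchy--Schwarz bound, and Jensen's inequality to pass the expectation inside the square root.

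\textbf{Step 1: the task gradient lives on $\mathcal T$.} In output space (the gradient with respect to $\mathrm{vec}(\bSigma)$), $\mathbf g_{\mathrm{task}}$ is supported on $\mathcal T$. By the chain rule~\eqref{eq:chain}, $\bw^*$ depends on $\bSigma$ only through $\bSigma_{\cS\cS}$ and $\bSigma_{\cS,:}\bw_{\text{idx}}$. The selection $\cS$ is detached, so perturbing an entry $\bSigma_{ij}$ with $i\notin\cS$ and $j\notin\cS$ changes neither input to~\eqref{eq:qp}. Hence $\partial\ell/\partial\bSigma_{ij}=0$ for $(i,j)\notin\mathcal T$. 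The turnover term in~\eqref{eq:task} also depends on $\bSigma$ only through $\bw^*$, so it does not break this support property. Note that the cosine is being taken between output-space gradients, not parameter gradients; this is the reading under which the bound makes sense.

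\textbf{Step 2: pointwise bound.} Since $\mathbf g_{\mathrm{task}}=\mathbf g_{\mathrm{task}}|_{\mathcal T}$,
\[
\langle \mathbf g_{\mathrm{MSE}},\mathbf g_{\mathrm{task}}\rangle=\langle \mathbf g_{\mathrm{MSE}}|_{\mathcal T},\mathbf g_{\mathrm{task}}\rangle .
\]
Cauchy--Schwarz then gives
\[
|\cos(\mathbf g_{\mathrm{MSE}},\mathbf g_{\mathrm{task}})|\le \frac{\|\mathbf g_{\mathrm{MSE}}|_{\mathcal T}\|}{\|\mathbf g_{\mathrm{MSE}}\|}=:\rho .
\]

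\textbf{Step 3: take expectations.} Because $\sqrt{\cdot}$ is concave, Jensen's inequality gives $\mathbb E\rho\le\sqrt{\mathbb E\rho^2}$. Applying the stated energy hypothesis,
\[
\mathbb E\rho\le\sqrt{|\mathcal T|/N^2}=\sqrt{(2NK-K^2)/N^2}=\sqrt{2K/N-K^2/N^2}.
\]
The count $|\mathcal T|=2NK-K^2$ follows from inclusion--exclusion: $NK$ pairs have $i\in\cS$, $NK$ have $j\in\cS$, and $K^2$ have both. Finally, $K^2/N^2=o(K/N)$ when $K\ll N$, which gives the approximation $\sqrt{2K/N}$.

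\textbf{Main obstacle.} The only delicate step is Step~1. One has to check that the KKT-differentiated map assigns exactly zero gradient off $\mathcal T$, including when the QP solution is degenerate or the differentiation is taken in a generalized sense. One also has to be explicit about the symmetric parameterization: whether the gradient is taken with respect to $\mathrm{vec}(\bSigma)$ entrywise, or symmetrized, so that the $(i,j)$ and $(j,i)$ entries both lie in $\mathcal T$ consistently. $\mathcal T$ is symmetric, so symmetrization preserves the support. Cases where either gradient vanishes are excluded by hypothesis, or handled by setting the cosine to $0$.
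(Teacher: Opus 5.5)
Your proposal is correct and follows the paper's proof essentially step for step. You establish the support of $\mathbf g_{\mathrm{task}}$ on $\mathcal T$ from the detached selection and the two QP inputs, bound $|\cos|\le\|\mathbf g_{\mathrm{MSE}}|_{\mathcal T}\|/\|\mathbf g_{\mathrm{MSE}}\|$ pointwise by Cauchy--Schwarz, and finish with Jensen's inequality. The only minor differences are that you apply the stated energy hypothesis directly as an inequality, whereas the paper's Step~3 introduces normalized-energy isotropy as a sufficient condition that gives it with equality, and that you note explicitly that the detached turnover term preserves the support property, which the paper leaves implicit.
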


This support-only inequality is sharp for unrestricted ambient vectors (Appendix~\ref{app:lower_proof}); attaining it with gradients of the tracking QP is a separate question.

\begin{proposition}[Decision Invariance]
\label{prop:invariance}
The QP~\eqref{eq:qp} depends on~$\bSigma$ only through $\bSigma_{\cS\cS}$ and $\bSigma_{\cS,:}\bw_{\mathrm{idx}}$; off-block errors are irrelevant to both the solution and regret (Proposition~\ref{prop:regret}; proofs in Appendix~\ref{app:invariance_proof}).
\end{proposition}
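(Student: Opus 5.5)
The plan is to show directly that the optimal solution set of the QP~\eqref{eq:qp} is a function of the pair $(\bSigma_{\cS\cS},\,\bSigma_{\cS,:}\bw_{\mathrm{idx}})$ alone. Regret then inherits this invariance, because it is evaluated at that solution.

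\textbf{Step 1: the feasible set.} The constraints $\mathbf{1}^\top \bw_\cS = 1$ and $\bw_\cS \geq 0$ involve no entry of $\bSigma$. So the simplex $\Delta_K$ is the same for every covariance estimate.

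\textbf{Step 2: the objective.} Expand $\bw_\cS^\top \bSigma_{\cS\cS}\bw_\cS$ and $\bw_\cS^\top \bSigma_{\cS,:}\bw_{\mathrm{idx}}$ entrywise. The first reads $\hat\Sigma_{ij}$ only for $i,j\in\cS$. The second reads the vector $c:=\bSigma_{\cS,:}\bw_{\mathrm{idx}}\in\mathbb{R}^K$, and through it only the rows indexed by $\cS$.

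Now take two estimates $\bSigma,\bSigma'$ with $\bSigma_{\cS\cS}=\bSigma'_{\cS\cS}$ and $\bSigma_{\cS,:}\bw_{\mathrm{idx}}=\bSigma'_{\cS,:}\bw_{\mathrm{idx}}$. Their objectives coincide pointwise on $\Delta_K$. Hence their argmin sets, and therefore $\bw^*$, coincide.

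This applies in particular when the two estimates differ only in entries $(i,j)$ with $i,j\notin\cS$. Such an error $E$ has $E_{\cS\cS}=0$ and $E_{\cS,:}=0$, so both inputs are unchanged. I should note that the invariance is stronger than off-block: some row-$\cS$ perturbations with $E_{\cS,:}\bw_{\mathrm{idx}}=0$ are also invisible. I would state this as a remark. It does not contradict the gradient statement in Section~\ref{sec:geometry}, because that statement concerns the support $\mathcal T$, a superset.

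\textbf{Step 3: uniqueness and selection.} If $\bSigma_{\cS\cS}\succ0$, the objective is strictly convex and $\bw^*$ is unique, so the solution map is a genuine function of the two inputs. Without this condition I would phrase the result at the level of the argmin set, or for a fixed deterministic tie-breaking rule that itself depends only on the objective and constraints, for example the cvxpylayers solver given identical data. This is the main subtlety I expect: in practice the solver is handed the full $\bSigma_{\cS\cS}$ and $c$, so equal inputs give equal outputs, but the claim needs to be stated for the mathematical argmin rather than for solver internals.

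\textbf{Step 4: regret.} The selection $\cS$ is treated as fixed and detached, as in Stage~1 under market-cap ranking. Under tracking-score selection, $\cS$ itself depends on $\bSigma$, and the statement must be read conditionally on $\cS$. Realized tracking error, and regret against the oracle solution built from the true $\Sigma$, depend on $\bSigma$ only through $\bw^*$. Regret is therefore unchanged, which links the proposition to Proposition~\ref{prop:regret}.

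Differentiating the identity $\bw^*(\bSigma+E)=\bw^*(\bSigma)$ for all off-block $E$ also yields the corollary used in the chain rule~\eqref{eq:chain}: wherever $\bw^*$ is differentiable, $\partial\bw^*/\partial\bSigma_{ij}=0$ for $i,j\notin\cS$.
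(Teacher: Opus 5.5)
Your proof is correct and follows the paper's argument: the feasible simplex does not involve $\bSigma$, and the objective is determined by $\bSigma_{\cS\cS}$ and $\bSigma_{\cS,:}\bw_{\mathrm{idx}}$, so off-block perturbations leave unchanged both the minimizer (under a common tie-breaking rule) and any regret evaluated at it. The only difference is the starting point: the paper expands the full tracking variance $\mathbf{d}^\top\bSigma\mathbf{d}$, with $\mathbf{d}=\bw-\bw_{\mathrm{idx}}$, into blocks and uses that $\mathbf{d}_{\bar\cS}=-\bw_{\mathrm{idx},\bar\cS}$ is fixed, so the $\bar\cS\bar\cS$ block contributes only a constant and the cross block contributes only the $K$-vector $\bSigma_{\cS\bar\cS}\bw_{\mathrm{idx},\bar\cS}$; this also confirms that \eqref{eq:qp} is the tracking problem up to an additive constant, whereas you read the dependence directly off the reduced QP.
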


This argument concerns the support of the output gradient. Its concentration within the selected block and its image under the predictor Jacobian are separate quantities; neither follows from cardinality alone.

\subsection{Predictor Jacobian and Gradient Alignment}
\label{sec:eff_dim}

The support bound concerns covariance-space geometry. We now characterize its image in parameter space; experimental capacity proxies are considered separately in Section~\ref{sec:proxy_evidence}.

\begin{proposition}[Jacobian Rank Collapse]
\label{prop:dim1}
Let $\mathbf J\in\mathbb R^{N^2\times d}$ be the Jacobian of a differentiable covariance predictor. For two nonzero parameter gradients,
\begin{equation}
\cos(\nabla_\theta\ell_{\mathrm{MSE}},\nabla_\theta\ell_{\mathrm{task}})
=\frac{\mathbf g_{\mathrm{MSE}}^\top\mathbf J\mathbf J^\top\mathbf g_{\mathrm{task}}}
{\|\mathbf J^\top\mathbf g_{\mathrm{MSE}}\|\,\|\mathbf J^\top\mathbf g_{\mathrm{task}}\|}.
\end{equation}
If $\mathrm{rank}(\mathbf J)=1$, this cosine belongs to $\{-1,+1\}$, regardless of the covariance-space angle.
\end{proposition}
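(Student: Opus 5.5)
The plan is to prove the cosine identity by the chain rule and then read off the rank-one case from an outer-product factorization of $\mathbf J$.

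\textbf{Step 1: the identity.} By the chain rule~\eqref{eq:chain}, every loss that depends on $\theta$ only through $\mathrm{vec}(\bSigma)$ has parameter gradient
\[
\nabla_\theta\ell=\mathbf J^\top\mathbf g,
\qquad \mathbf g=\partial\ell/\partial\,\mathrm{vec}(\bSigma).
\]
This applies to both MSE and the task loss; for the task loss, $\mathbf g_{\mathrm{task}}$ absorbs the QP Jacobian and the downstream derivative. Substituting $\nabla_\theta\ell_{\mathrm{MSE}}=\mathbf J^\top\mathbf g_{\mathrm{MSE}}$ and $\nabla_\theta\ell_{\mathrm{task}}=\mathbf J^\top\mathbf g_{\mathrm{task}}$ into the definition of cosine, the numerator becomes
\[
(\mathbf J^\top\mathbf g_{\mathrm{MSE}})^\top(\mathbf J^\top\mathbf g_{\mathrm{task}})=\mathbf g_{\mathrm{MSE}}^\top\mathbf J\mathbf J^\top\mathbf g_{\mathrm{task}},
\]
and the denominator consists of the stated norms. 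The hypothesis that both parameter gradients are nonzero is exactly what makes the denominator positive, so the cosine is well defined.

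\textbf{Step 2: the rank-one case.} If $\mathrm{rank}(\mathbf J)=1$, write $\mathbf J=\mathbf u\mathbf v^\top$ with $\mathbf u\in\mathbb R^{N^2}$ and $\mathbf v\in\mathbb R^d$ both nonzero. Then
\[
\mathbf J^\top\mathbf g=(\mathbf u^\top\mathbf g)\,\mathbf v,
\]
so every parameter gradient is a scalar multiple of the single vector $\mathbf v$. Set $a=\mathbf u^\top\mathbf g_{\mathrm{MSE}}$ and $b=\mathbf u^\top\mathbf g_{\mathrm{task}}$; nonzero parameter gradients force $a\neq0$ and $b\neq0$. The numerator equals $ab\|\mathbf v\|^2$ and the denominator equals $|a||b|\|\mathbf v\|^2$, so
\[
\cos(\nabla_\theta\ell_{\mathrm{MSE}},\nabla_\theta\ell_{\mathrm{task}})=\mathrm{sign}(ab)\in\{-1,+1\}.
\]

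\textbf{Step 3: independence from the covariance-space angle.} The result depends on $\mathbf g_{\mathrm{MSE}}$ and $\mathbf g_{\mathrm{task}}$ only through the two scalars $a$ and $b$. Their angle in $\mathbb R^{N^2}$, including the support-restricted angle bounded in Proposition~\ref{prop:alignment}, plays no role: $\mathbf J^\top$ discards every component orthogonal to $\mathbf u$. To make this explicit, I will note that two gradients orthogonal in covariance space can still have $a,b\neq0$ when each has a component along $\mathbf u$. An example is $\mathbf u=e_1+e_2$, $\mathbf g_{\mathrm{MSE}}=e_1$ and $\mathbf g_{\mathrm{task}}=e_2$, which gives cosine $+1$ in parameter space.

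\textbf{Main obstacle.} There is no real technical difficulty. The only care needed is in Step 1: check that the task gradient genuinely factors through $\mathbf J^\top$. This holds because selection $\cS$ is detached, so no other path from $\theta$ to $\ell$ exists. It also requires that the QP solution map be differentiable at the point in question, and I will state this as an assumption at points where the KKT system is nonsingular.
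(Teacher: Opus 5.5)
Your proof is correct and follows essentially the same route as the paper: factor the rank-one Jacobian as an outer product (the paper uses the normalized form $\sigma\mathbf u\mathbf v^\top$), observe that $\mathbf J^\top\mathbf g$ is a scalar multiple of $\mathbf v$, and read off the cosine as the sign of the product of the two coefficients, with the same caveat about regularity of the QP solution map. Your explicit orthogonal-gradient example in Step 3 is a useful concrete addition that the paper leaves implicit.
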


\emph{Proof sketch.} Write $\mathbf{J}=\sigma\mathbf{u}\mathbf{v}^\top$. Then $\mathbf{J}^\top\mathbf{g}=\sigma(\mathbf{u}^\top\mathbf{g})\mathbf{v}$, so both nonzero images are scalar multiples of $\mathbf{v}$. Their signs and zeros can differ. Equality of minimizers requires further assumptions (Appendix~\ref{app:dim1_proof}). \qed

The following theorem gives the exact form of parameter-space alignment and bounds how quickly it collapses to $\pm 1$ as the Jacobian approaches rank one.

\begin{theorem}[Spectral Gap Controls Gradient Alignment]
\label{thm:spectral}
Let $\mathbf{J} = U S V^\top \in \mathbb{R}^{N^2 \times d}$ be the Jacobian of $\mathrm{vec}(\Sigma(\btheta))$, with $r=\mathrm{rank}(\mathbf J)\geq1$, thin factors $U\in\mathbb R^{N^2\times r}$ and $V\in\mathbb R^{d\times r}$ having orthonormal columns, and $S=\mathrm{diag}(\sigma_1,\ldots,\sigma_r)$. Assume both parameter gradients and both leading components $\mathbf u_1^\top\mathbf g$ are nonzero, and set $\sigma_2=0$ when $r=1$. Then:
\begin{enumerate}[leftmargin=*]
\item \emph{(Exact form.)} Parameter-space alignment is the alignment of the $\sigma$-weighted coordinates of the two gradients along the singular directions:
\begin{equation}
\cos(\nabla_{\btheta} \ell_{\mathrm{MSE}}, \nabla_{\btheta} \ell_{\mathrm{task}}) = \cos\!\bigl(S U^\top \mathbf{g}_{\mathrm{MSE}},\; S U^\top \mathbf{g}_{\mathrm{task}}\bigr).
\label{eq:spectral_align}
\end{equation}
\item \emph{(Near-rank-one collapse.)} For each gradient let $\rho = \min\{1, (\sigma_2/\sigma_1)/|\cos(\mathbf{g}, \mathbf{u}_1)|\}$, and let $\Theta = \arcsin \rho_{\mathrm{MSE}} + \arcsin \rho_{\mathrm{task}}$. If $\Theta < \pi/2$, then
\begin{equation}
|\cos(\nabla_{\btheta} \ell_{\mathrm{MSE}}, \nabla_{\btheta} \ell_{\mathrm{task}})| \;\geq\; \cos\Theta,
\qquad
\mathrm{sign}\,\cos(\cdot,\cdot) = \mathrm{sign}\bigl[(\mathbf{u}_1^\top \mathbf{g}_{\mathrm{MSE}})(\mathbf{u}_1^\top \mathbf{g}_{\mathrm{task}})\bigr].
\label{eq:spectral_bound}
\end{equation}
\item \emph{(Endpoints.)} $\sigma_2{=}0$ gives $|\cos|{=}1$ (Proposition~\ref{prop:dim1}); $\mathbf J\mathbf J^\top=cI$ with $c>0$ preserves output-space alignment. The expected support bound then transfers only under Proposition~\ref{prop:alignment}'s energy assumption.
\end{enumerate}
\end{theorem}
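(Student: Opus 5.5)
The plan is to reduce everything to the $r$-dimensional coordinates $\mathbf a=SU^\top\mathbf g\in\mathbb R^r$. For part 1, write $\nabla_{\btheta}\ell=\mathbf J^\top\mathbf g=VSU^\top\mathbf g=V\mathbf a$. Since $V$ has orthonormal columns, $V^\top V=I_r$, so inner products and norms are preserved: $\langle V\mathbf a,V\mathbf b\rangle=\langle\mathbf a,\mathbf b\rangle$ and $\|V\mathbf a\|=\|\mathbf a\|$. This gives \eqref{eq:spectral_align} immediately. The assumption that both parameter gradients are nonzero guarantees $\mathbf a_{\mathrm{MSE}},\mathbf a_{\mathrm{task}}\neq0$, so the cosine is defined. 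This also recovers Proposition~\ref{prop:dim1}'s formula.

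For part 2, I bound the angle between each $\mathbf a$ and the first coordinate axis $\mathbf e_1$. Write $\mathbf a=(\sigma_1 c_1,\sigma_2 c_2,\dots,\sigma_r c_r)$ with $c_i=\mathbf u_i^\top\mathbf g$. Then the tail satisfies
\[
\|\mathbf a_{2:r}\|\le\sigma_2\|(c_2,\dots,c_r)\|\le\sigma_2\|\mathbf g\|,
\]
while the head is $|a_1|=\sigma_1|c_1|=\sigma_1|\cos(\mathbf g,\mathbf u_1)|\,\|\mathbf g\|$. The sine of the angle $\phi$ between $\mathbf a$ and the line $\mathrm{span}(\mathbf e_1)$ is $\|\mathbf a_{2:r}\|/\|\mathbf a\|$. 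Using $\|\mathbf a\|\ge|a_1|$, this gives $\sin\phi\le(\sigma_2/\sigma_1)/|\cos(\mathbf g,\mathbf u_1)|$, and trivially $\sin\phi\le1$, so $\sin\phi\le\rho$. Take $\phi\in[0,\pi/2]$, measured to $\pm\mathbf e_1$ according to the sign of $a_1$; then $\phi\le\arcsin\rho$. By the triangle inequality for angles on the sphere, the angle between $\mathrm{sign}(a_1)\mathbf a$ for the two gradients is at most $\phi_{\mathrm{MSE}}+\phi_{\mathrm{task}}\le\Theta<\pi/2$. Hence the cosine of these sign-normalized vectors is at least $\cos\Theta>0$. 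Undoing the normalization multiplies the cosine by $\mathrm{sign}(a_1^{\mathrm{MSE}}a_1^{\mathrm{task}})$, which equals the sign of $(\mathbf u_1^\top\mathbf g_{\mathrm{MSE}})(\mathbf u_1^\top\mathbf g_{\mathrm{task}})$ because $\sigma_1>0$. This yields both claims in \eqref{eq:spectral_bound}. The nonzero leading components are needed so that $a_1\neq0$ and the sign is defined.

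For part 3, if $\sigma_2=0$ then $\rho=0$ for both gradients and $\Theta=0$, giving $|\cos|\ge1$, hence $=1$; this is consistent with Proposition~\ref{prop:dim1}. If $\mathbf J\mathbf J^\top=cI$, the formula in Proposition~\ref{prop:dim1} becomes $c\,\mathbf g_{\mathrm{MSE}}^\top\mathbf g_{\mathrm{task}}/(c\|\mathbf g_{\mathrm{MSE}}\|\|\mathbf g_{\mathrm{task}}\|)$, since $\|\mathbf J^\top\mathbf g\|^2=c\|\mathbf g\|^2$. So parameter-space alignment equals output-space alignment, and Proposition~\ref{prop:alignment}'s expectation bound transfers verbatim, under its own energy hypothesis only.

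The main obstacle is the angular step in part 2. One must handle the sign ambiguity carefully (working with angles to the line rather than to the vector $\mathbf e_1$), justify the spherical triangle inequality, and make sure the $\min\{1,\cdot\}$ cap keeps $\arcsin$ well defined. The hypothesis $\Theta<\pi/2$ then ensures the cosine lower bound is positive and determines the sign.
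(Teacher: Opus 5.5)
Your proposal is correct and follows the paper's proof essentially step for step. It uses the isometry $V^\top V=I$ for the exact form, then bounds $\|\mathbf a_\perp\|\le\sigma_2\|\mathbf g\|$ against $|a_1|=\sigma_1|\mathbf u_1^\top\mathbf g|$, sign-normalizes toward $\mathbf e_1$, and applies the spherical triangle inequality to get the near-rank-one bound and the sign rule. The only cosmetic difference is at the isotropic endpoint: you substitute $\mathbf J\mathbf J^\top=cI$ directly into the cosine formula of Proposition~\ref{prop:dim1}, whereas the paper argues via $S=\sigma I$ with a square orthogonal $U$; your version matches the stated hypothesis slightly more directly.
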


\emph{Proof sketch.} $\nabla_{\btheta}\ell = V(SU^\top\mathbf{g})$ with $V^\top V=I$ gives~\eqref{eq:spectral_align}; each $SU^\top\mathbf{g}$ lies within angle $\arcsin\rho$ of $\pm\mathbf{e}_1$, and the angle is a metric on the sphere (Appendix~\ref{app:dim1_proof}).~$\square$

\paragraph{Rank, angles, and a proxy are different quantities.}
The spectral effective rank is $r_{\mathrm{eff}}=\exp(-\sum_i p_i\log p_i)$, where $p_i=\sigma_i^2/\sum_j\sigma_j^2$. For a zero Jacobian we use the convention $r_{\mathrm{eff}}=0$. Concentration of this distribution at one forces $\sigma_2/\sigma_1\to0$, but does not ensure that a particular task gradient has a nonzero leading component. The empirical quantity $d_{\mathrm{proxy}}=d h$ is a separate heterogeneity-based proxy; no equality between it and $r_{\mathrm{eff}}$ is assumed. Corollary~\ref{cor:positive} requires the two projected gradients to be non-collinear, not merely rank greater than one.

\subsection{From a Single Example to a Batch}
\label{sec:batch}
Let $f_i(\theta)$ be the prediction for example $i$ and $J_i$ its Jacobian. A batch loss has gradient $G=\sum_i J_i^\top g_i$. The relevant parameter subspace is therefore shared across examples, rather than determined by the rank of any one $J_i$.

\begin{proposition}[Batch gradient subspace]
\label{prop:batch}
Let $J_{\mathrm{stack}}=[J_1^\top,\ldots,J_n^\top]^\top\ne0$. Every batch gradient lies in $\mathcal V=\operatorname{range}(J_{\mathrm{stack}}^\top)$, whose dimension is $\operatorname{rank}(J_{\mathrm{stack}})$. All possible nonzero batch gradients are collinear if and only if this rank is one. In particular, rank one for each $J_i$ is insufficient unless their nonzero row spaces share a common line.
\end{proposition}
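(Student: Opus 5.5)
The plan is to reduce every claim in Proposition~\ref{prop:batch} to elementary linear algebra on the stacked Jacobian. I would first write the batch gradient in block form:
\[
G=\sum_i J_i^\top g_i=J_{\mathrm{stack}}^\top \mathbf g,
\qquad
\mathbf g=(g_1^\top,\ldots,g_n^\top)^\top.
\]
Every batch gradient is therefore an image of $J_{\mathrm{stack}}^\top$, so it lies in $\mathcal V=\operatorname{range}(J_{\mathrm{stack}}^\top)$. The identity $\dim\operatorname{range}(A^\top)=\operatorname{rank}(A)$ gives $\dim\mathcal V=\operatorname{rank}(J_{\mathrm{stack}})$. Since the statement concerns \emph{all possible} batch gradients, I will read it as letting $\mathbf g$ range over all of $\mathbb R^{nN^2}$. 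Under that reading the set of attainable batch gradients is exactly $\mathcal V$, not merely contained in it. I will state this interpretation explicitly, because the ``only if'' direction depends on it.

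The equivalence then follows in two directions.

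\textbf{If the rank is one.} Then $\mathcal V=\operatorname{span}\{v\}$ for a single vector $v$. Every nonzero batch gradient is a nonzero multiple of $v$, so any two are collinear, with cosine $\pm1$, exactly as in Proposition~\ref{prop:dim1}.

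\textbf{If all nonzero batch gradients are collinear.} Because $J_{\mathrm{stack}}\neq0$, $\mathcal V$ has dimension at least one. Suppose its dimension were at least two. Pick two linearly independent vectors $v_1,v_2\in\mathcal V$ and choose preimages $\mathbf g^{(1)},\mathbf g^{(2)}$ with $J_{\mathrm{stack}}^\top\mathbf g^{(k)}=v_k$. These give two nonzero, non-collinear batch gradients, a contradiction. Hence the rank is one.

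For the final sentence, I would use the row-space decomposition
\[
\operatorname{range}(J_{\mathrm{stack}}^\top)=\sum_i\operatorname{range}(J_i^\top).
\]
If each $J_i$ has rank at most one, then $\operatorname{range}(J_i^\top)=\operatorname{span}\{v_i\}$ whenever $J_i\neq0$, so $\mathcal V=\operatorname{span}\{v_i : J_i\neq0\}$. This sum is one-dimensional if and only if all the nonzero $v_i$ are parallel, that is, the nonzero row spaces share a common line. Combined with the equivalence above, this shows pointwise rank one gives batch collinearity exactly under that condition. To make the insufficiency concrete, I would give a counterexample with $d=2$: take $J_1=u\,e_1^\top$ and $J_2=u\,e_2^\top$, so $\operatorname{rank}(J_{\mathrm{stack}})=2$.

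The only real subtlety is the quantifier over output gradients in the ``only if'' direction. If the $g_i$ were restricted to gradients of a specific loss, the attainable batch gradients could fill less than $\mathcal V$, and the equivalence could fail. I will therefore phrase the result over arbitrary output-space gradients $g_i$. I would also remark that for realistic losses such as MSE, whose output gradients $2(f_i-y_i)$ range over the whole space as the targets vary, the same conclusion holds. The remaining steps are routine.
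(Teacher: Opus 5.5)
Your proposal is correct and follows the paper's argument. Like the paper, you stack the output gradients so that $G=J_{\mathrm{stack}}^\top\mathbf g$, identify the attainable batch gradients with $\operatorname{range}(J_{\mathrm{stack}}^\top)$ by letting the output gradients range over all values (the paper also quantifies over arbitrary differentiable output losses), and use the fact that a nonzero subspace contains only collinear nonzero pairs exactly when it is one-dimensional; your identity $\operatorname{range}(J_{\mathrm{stack}}^\top)=\sum_i\operatorname{range}(J_i^\top)$ additionally gives an explicit proof of the final ``in particular'' clause, which the paper's proof does not argue directly and instead illustrates with the counterexample in Appendix~\ref{app:batch}.
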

\begin{proof}
Stacking the output gradients gives $G=J_{\mathrm{stack}}^\top[g_1^\top,\ldots,g_n^\top]^\top$. Its possible values form exactly $\mathcal V$. A nonzero vector space contains only collinear pairs precisely when its dimension is one. Here ``possible'' ranges over differentiable output losses at the fixed parameter value; it does not assert that a particular pair of losses realizes every direction.
\end{proof}

\paragraph{Two counterexamples.}
A scalar predictor $f(\theta)=\theta$ has rank one, yet losses $(f-1)^2$ and $(f-2)^2$ have different minimizers. For a batch, take $f_1(\theta)=\theta_1$ and $f_2(\theta)=\theta_2$. At $\theta=0$, the two losses $\frac12[(f_1+1)^2+(f_2+1)^2]$ and $\frac12[(f_1+1)^2+(f_2-1)^2]$ have orthogonal gradients $(1,1)$ and $(1,-1)$ although each example Jacobian has rank one. Appendix~\ref{app:batch} gives the full construction and the covariance-model interpretation.

This distinction matters for conditional shrinkage: $\Sigma(x;\theta)$ depends on a scalar $\alpha(x;\theta)$ for each $x$, but $\nabla_\theta\alpha(x;\theta)$ can change direction across examples. A pointwise rank-one measurement does not establish a one-dimensional training problem. The results concern raw Euclidean gradients at a fixed parameterization; adaptive optimizer histories and changes in $\theta$ require additional analysis.

\paragraph{A measured architectural bottleneck.}
Figure~\ref{fig:rank_collapse} tests this distinction directly. The conditional predictor's 385 parameters produce a rank-one covariance Jacobian at each measured input. The structured model has several active directions, but the near-rank-one bound is vacuous for its sampled task gradients. Appendix~\ref{app:bound_check} reports the numerical scope; neither observation is a test-loss guarantee.

\begin{figure}[h]
\centering
\includegraphics[width=\figurewidth]{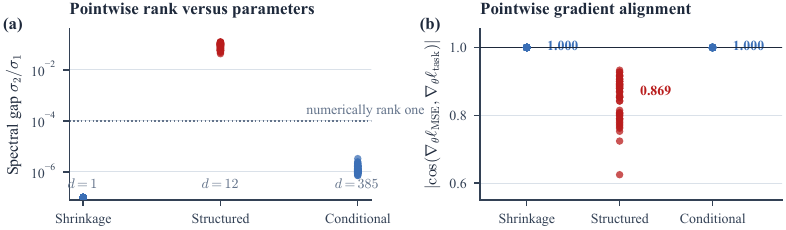}
\caption{\textbf{Pointwise Jacobian measurements on 54 inputs per model class.} A 385-parameter conditional predictor is numerically rank one because its output passes through a scalar shrinkage intensity. Its gradient direction can still vary across inputs. These measurements distinguish parameter count from local rank; they do not measure the stacked batch rank.}
\label{fig:rank_collapse}
\end{figure}

\section{Experimental Setup}

\subsection{Data}

Top 100 S\&P~500 stocks by market cap ($N{=}97$ after ${\geq}90\%$ coverage filter), daily returns 2006--2025. Rolling-window protocol: 3-year train, 6-month validation, 1-year test, 9 folds covering 2016--2025 in the primary neural evaluation. Features: 21-day volatility, 63-day momentum, sector, market-cap quintile. Scaling: S\&P~500 full ($N{=}478$, 5 folds), S\&P~1500 ($N{=}1258$, 8 folds), 20-year backtest ($N{=}100$, 17 folds). Cross-market: FTSE~100, Nikkei~225, Euro~Stoxx~50, ASX~200, Hang~Seng. Cross-domain: NOAA weather stations ($N{\in}\{100, 189, 415\}$), EPA PM2.5 monitors ($N{\in}\{124, 132\}$). Unless stated otherwise, financial paired comparisons use Wilcoxon signed-rank tests across folds: one-sided for directional hypotheses and two-sided for difference tests. Equivalence uses TOST with the stated margin. Correlation tests, synthetic dataset-level inference and descriptive forward-target controls are identified separately.

\subsection{Comparison Units and Evidence Provenance}
The unit of comparison is a matched fold or seed within one experiment, not an individual daily return. Chronological folds share training history and may be dependent. We report their variability descriptively; unadjusted signed-rank $p$-values do not by themselves establish generalization across markets or tasks. A non-significant difference is not evidence of equivalence; only comparisons with an explicit TOST margin receive an equivalence interpretation. Relative changes are computed from unrounded source values, so they need not equal ratios of rounded table entries.

The release distinguishes three records: corrected equity evaluations with bounded test windows; an archived diagnostic scoring record predating those corrections; and separate spatial and combinatorial experiments. The 33 archived diagnostic cases are not an independent validation of the corrected harness. Configurations can share folds and data, so the reported count of paired folds is not a count of independent datasets. Appendix~\ref{app:provenance} maps evidence types and reproducibility scope.

\subsection{Baselines}

We compare: \emph{Naive} (top-$K$ by index weight), \emph{K-only LW} ($K{\times}K$ covariance only), \emph{Ledoit-Wolf} (analytical shrinkage \citep{ledoit2004well}), \emph{POET} (factor + adaptive thresholding \citep{fan2013large}), \emph{GraphicalLasso} ($\ell_1$-penalized precision \citep{friedman2008sparse}), \emph{ValTuned} (grid search on validation TE), \emph{SPO+} \citep{elmachtoub2022smart}, \emph{LODL} (locally optimized decision losses \citep{shah2022decision}), and MSE-trained factor/neural models. Primary metric: annualized TE = $\text{std}(\bw^\top \br_t - r_t^{\text{idx}}) \times \sqrt{252}$.

\section{Results}

\subsection{Small Gains in One-Parameter Equity Models}

\noindent\textit{Main finding: DFL adds a ${\sim}3{,}300\times$ training-cost overhead with little measured benefit in the tested low-capacity settings.}

\smallskip\noindent At $N{=}100$, fitted one-parameter models achieve similar test performance across different training objectives. This is an empirical result, complementary to the local collinearity statement: the theorem restricts available directions, whereas test-loss equivalence must be measured.

\begin{table}[t]
\centering
\caption{\textbf{One-parameter comparison at $N{=}100$.} ValTuned, SPO+ and DFL meet the TOST criterion ($\delta=20$ bps) in all 12 pairwise comparisons (three pairs, four sparsity levels; max $|d|=0.66$). POET and GLasso are different model families, shown for reference. At $K=50$ ($K/N=0.52$), their ordering changes and the $\sqrt{2K/N}$ bound is vacuous; no equivalence with these baselines is claimed.}
\label{tab:exp3}

\fontsize{9}{10.8}\selectfont
\setlength{\tabcolsep}{5pt}
\renewcommand{\arraystretch}{1.08}
\publicationtable{%
\begin{tabular}{@{}ccccccc@{}}
\toprule
$K$ & LW & POET & GLasso & ValTuned & SPO+ & DFL \\
\midrule
5  & 0.0860 & \textbf{0.0765} & 0.0768 & 0.0773 & 0.0766 & 0.0766 \\
10 & 0.0511 & \textbf{0.0457} & 0.0461 & 0.0458 & 0.0458 & 0.0458 \\
20 & 0.0331 & \textbf{0.0267} & 0.0269 & 0.0269 & 0.0269 & 0.0268 \\
50 & 0.0096 & 0.0104 & 0.0124 & 0.0088 & \textbf{0.0088} & 0.0089 \\
\bottomrule
\end{tabular}}
\end{table}

Table~\ref{tab:exp3} reports equivalence under TOST with a 20-bps margin for all 12 pairwise comparisons among ValTuned, SPO+, and DFL. Each improves on Ledoit--Wolf by 7--19\%, so fitting the parameter matters even when these objectives yield similar test errors. POET and GLasso are separate model families; their agreement at smaller $K$ is empirical. LODL also matches DFL, indicating that explicit KKT differentiation is not necessary to obtain this performance. Higher-capacity models show different outcomes (Table~\ref{tab:n500}); the neural gain is significant at $K\leq30$ but absent at $K{=}50$ (Appendix~\ref{app:sparsity}).

\subsection{Architecture and Loss Choice at Larger Scale}

\noindent\textit{Main finding: Architecture changes can matter more than the choice of training loss.}

\smallskip\noindent At $N{=}478$, we compare changes in predictor architecture with changes in training loss (Table~\ref{tab:n500}).

\begin{table}[t]
\centering
\caption{\textbf{Higher-capacity comparison.} $N{=}478$, 5 folds, one evaluation window. $d$: Cohen's $d$ on per-fold relative change; $p$: one-sided Wilcoxon, for which $0.031$ is the floor at $n{=}5$ (i.e.\ 5/5 folds).}
\label{tab:n500}

\fontsize{9}{10.8}\selectfont
\setlength{\tabcolsep}{5pt}
\renewcommand{\arraystretch}{1.08}
\publicationtable{%
\begin{tabular}{@{}llcccc@{}}
\toprule
Model & $K$ & MSE TE & DFL TE & $d$ & $p$ \\
\midrule
GLasso & 20 & \multicolumn{2}{c}{0.0575} & --- & --- \\
POET & 20 & \multicolumn{2}{c}{0.0503} & --- & --- \\
Shrink.\ (1p) & 20 & 0.0517 & 0.0514 & $-1.07$ & $0.031^{*}$ \\
Cond.\ (385p) & 20 & \textbf{0.0454} & \textbf{0.0448} & $-0.83$ & 0.062 \\
Struct.\ (12p) & 20 & 0.0513 & 0.0508 & $-1.60$ & $0.031^{*}$ \\
Struct.\ (12p) & 50 & 0.0309 & 0.0305 & $-1.03$ & 0.062 \\
\bottomrule
\end{tabular}}
\end{table}

Table~\ref{tab:n500} separates architectural improvements from within-model loss changes. GLasso has 14.3\% higher TE than POET, while the MSE-trained conditional predictor has 9.8\% lower TE than POET. Switching from MSE to DFL yields smaller relative improvements: $0.68\%$, $1.07\%$ and $1.34\%$ for the 1-, 12- and 385-parameter models, respectively. These parameter counts do not order pointwise Jacobian rank: the conditional predictor remains rank one. The one-parameter model improves in all five folds, but its $0.68\%$ gain remains within the range observed at smaller scale. With five folds, $p=0.031$ is the smallest attainable one-sided Wilcoxon value; consistency of sign should therefore be read alongside effect size.

\subsection{Cross-Market Evidence for One-Parameter Models}
\label{sec:crossmarket}

\noindent Across six markets and the reported scale variants, one-parameter gains remain modest (Table~\ref{tab:crossmarket}). Across all 38 configurations, gains lie in $[-0.57\%,+1.76\%]$ over $K/N\in[0.008,0.43]$, with no detected monotone association with sparsity ($\rho=-0.16$, $p=0.35$). This observation is empirical; rank-one collinearity does not imply it. The structured model gives larger relative TE reductions on ASX ($2.95\%$), Hang Seng ($2.77\%$) and S\&P~1500 ($2.41\%$, all $p\leq0.008$), while gains on Euro Stoxx, Nikkei and FTSE are below $1\%$. Architecture alone does not predict the size of the improvement (Appendix~\ref{app:ftse}).

\begin{table}[t]
\centering
\caption{\textbf{Cross-market tracking gains.} Positive $\Delta\%$ denotes lower TE than matched MSE. The listed comparisons total 81 paired folds; the broader 38-configuration analysis contains 316 paired folds, with shared data and histories. Nikkei, Euro Stoxx, ASX and Hang Seng use one common four-market configuration with eight folds per market. $^{*}p<0.05$, $^{**}p<0.01$; one-sided Wilcoxon for the structured-model comparison.}
\label{tab:crossmarket}

\fontsize{9}{10.8}\selectfont
\setlength{\tabcolsep}{5pt}
\renewcommand{\arraystretch}{1.08}
\publicationtable{%
\begin{tabular}{@{}lrrrrr@{}}
\toprule
Market & $N$ & $K$ & Shrink $\Delta\%$ & Struct $\Delta\%$ & $p_{\text{struct}}$ \\
\midrule
S\&P 500 (20y) & 100 & 20 & $+0.29$ & $+0.40$ & $0.020^{*}$ \\
FTSE 100 & 91 & 10 & $-0.09$ & $+0.37$ & $0.039^{*}$ \\
Nikkei 225 & 218 & 10 & $+0.17$ & $+0.87$ & $0.027^{*}$ \\
Euro Stoxx 50 & 47 & 10 & $+0.29$ & $+0.95$ & $0.008^{**}$ \\
ASX 200 & 159 & 10 & $+0.98$ & $\mathbf{+2.95}$ & $\mathbf{0.004^{**}}$ \\
Hang Seng & 62 & 10 & $+1.76$ & $+2.77$ & $0.008^{**}$ \\
S\&P 1500 & 1258 & 10 & $+0.32$ & $+2.41$ & $0.004^{**}$ \\
S\&P 1500 & 1258 & 20 & $+0.44$ & $+2.21$ & $0.004^{**}$ \\
S\&P 1500 & 1258 & 50 & $-0.13$ & $+0.98$ & $0.039^{*}$ \\
\bottomrule
\end{tabular}}
\end{table}

\subsection{BD-DFL under Dynamic Selection}
\label{sec:bddfl}

\label{sec:bddfl_capacity}
The preceding comparisons hold the selection protocol fixed. We next examine changing asset support, where errors outside the current support can affect subsequent decisions. The effect of regularization depends on both the model and selection rule. With dynamic tracking-score selection at $N{=}100$, neural DFL increases mean TE by $8.6\%$ at $K{=}10$; BD-DFL with $\beta=0.001$ reduces it by $8.2\%$ relative to MSE. At $N{=}478$, the corresponding reduction is $7.4\%$ ($p=0.063$, five folds). In the corrected $N{=}451$ run, pure DFL already improves by $13.0\%$ at $K{=}10$ and BD-DFL improves by $15.6\%$ ($p=0.0024$, 9/11 wins). At $K{=}20$, pure DFL increases TE by $4.5\%$, while BD-DFL reduces it by $5.5\%$ ($p=0.103$). These results motivate validation of regularization, not a universal benefit or coefficient (Appendix~\ref{app:bddfl_detail}).

\subsection{Limits of Transfer}
\label{sec:proxy_evidence}
\label{sec:shortest_path}
Historical experiments test the limits of transfer. A heterogeneity-based capacity proxy changes with the chosen partition and is not a Jacobian rank (Appendix~\ref{app:diagnostic}). NOAA and EPA show opposite performance changes with only one or two evaluation windows per configuration. The archived prediction record also predates evaluation-harness corrections; its scoring is not prospective validation of the corrected results.

The older shortest-path sweeps measure the \emph{input} Jacobian, whereas the propositions concern the \emph{parameter} Jacobian. Separate PyEPO probes use a sampled stacked parameter Jacobian, but their empirical benefit threshold does not transfer from equities (Appendices~\ref{app:shortest_path}--\ref{app:pyepo}). These distinct measurements motivate the matched, all-output parameter probes and independent validation used next.

\subsection{Validation-Tuned Transfer Beyond Finance}
\label{sec:controlled}
To avoid relying on hyperparameters transferred from finance, we tune shortest-path and knapsack models separately (Appendix~\ref{app:controlled}). Each task uses ten independently generated datasets, a shared train-only ridge initializer, and identical validation budgets for MSE and SPO+ at every capacity. The probes measure all output coordinates at 32 held-out examples; they distinguish pointwise from stacked spectral rank.

Full-capacity SPO+ reduces mean test regret by $11.59\%$ on shortest path and $10.59\%$ on knapsack. Across the eight task--capacity comparisons, Holm-adjusted two-sided Wilcoxon $p$-values are $0.0684$ and $0.0156$, respectively. At one, two and eight update directions, mean changes range from $-0.61\%$ to $+1.89\%$, with pointwise bootstrap intervals crossing zero. Figure~\ref{fig:controlled} shows the accompanying loss of batch-gradient alignment as capacity increases. These measurements support a capacity-dependent contrast in this protocol, not a universal gain threshold or a causal identification of rank alone.

A fresh-data sensitivity study crosses three minibatch orders with 20/80-epoch budgets and allows both losses to select the unchanged ridge predictor. Full-capacity mean gains remain $12.12/12.54\%$ for path and $13.67/13.76\%$ for knapsack; scalar gains stay below $0.6\%$. Inference averages training seeds within each independent dataset, rather than counting them as additional datasets. Figure~\ref{fig:robustness} shows the follow-up: only full-capacity knapsack passes Holm correction (Appendix~\ref{app:robustness}).

\begin{figure}[t]
\centering
\includegraphics[width=\figurewidth]{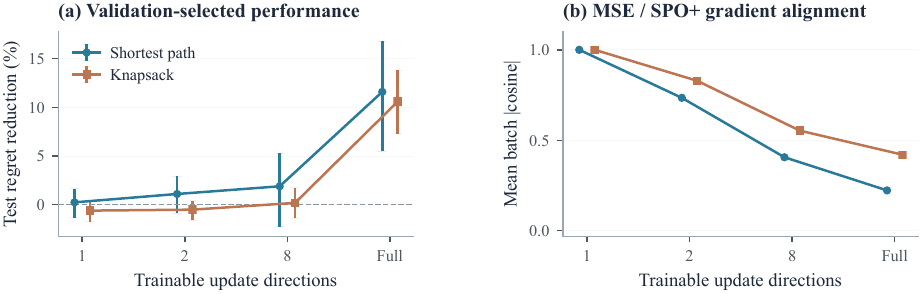}
\caption{\textbf{Validation-tuned cross-domain extension.} (a) Mean regret reduction with pointwise 95\% paired-dataset bootstrap intervals (ten datasets; 10,000 resamples). (b) Mean absolute MSE/SPO+ batch-gradient cosine at the selected MSE model, using 32 held-out examples. Each capacity is tuned separately; full capacity has 144/72 update directions for path/knapsack. Only full-capacity knapsack passes Holm correction across eight tests.}
\label{fig:controlled}
\end{figure}

\begin{figure}[t]
\centering
\includegraphics[width=\figurewidth]{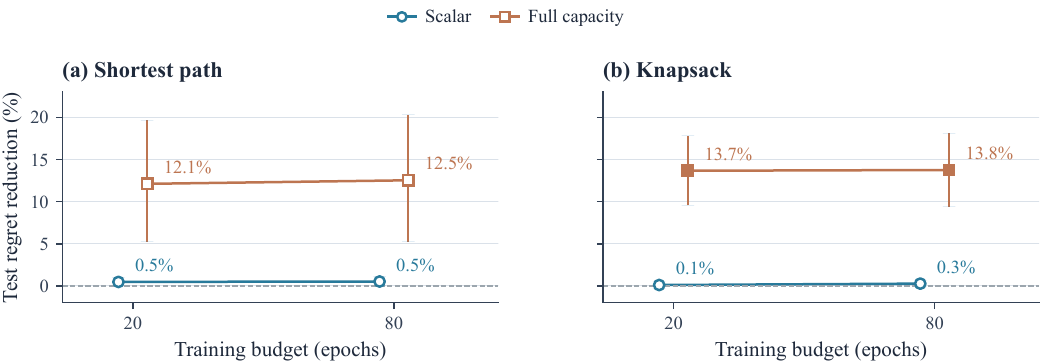}
\caption{\textbf{Capacity contrast on fresh data and longer training.} Mean test-regret reduction versus validation-selected MSE; whiskers are pointwise 95\% paired-dataset bootstrap intervals. Each task uses ten new datasets and three minibatch orders, averaged within dataset before inference. Both losses may select the unchanged initializer. Filled markers denote comparisons passing Holm correction across eight tests; only full-capacity knapsack passes. Scalar gains below 0.6\% are not evidence of equivalence.}
\label{fig:robustness}
\end{figure}

\subsection{Holding Expressivity Fixed}
\label{sec:fixed_class}
The capacity sweep changes both geometry and the function class. We therefore add an invertible coordinate transformation to a full affine predictor, $P=P_0+D_\varepsilon\Theta$, where $D_\varepsilon=\mathrm{diag}(1,\varepsilon,\ldots,\varepsilon)$. Every positive $\varepsilon$ represents the same predictors and preserves exact Jacobian rank. Shrinking $\varepsilon$ nevertheless lowers spectral effective rank and attenuates ordinary SGD updates in most output directions.

On ten fresh datasets per task, reducing $\varepsilon$ from $1$ to $0.01$ lowers mean SPO+ regret reduction from $11.79\%$ to $1.19\%$ for path and from $10.17\%$ to $0.52\%$ for knapsack. Compensating the parameter update by $D_\varepsilon^{-2}$ restores the identity-coordinate trajectories and their gains (Figure~\ref{fig:reparameterization}). Thus expressivity alone cannot explain this contrast. Spectral rank and conditioning still change together: this control identifies a removable coordinate effect, not a causal effect of rank independent of optimization. Appendix~\ref{app:reparameterization} gives the derivation and full protocol.

\begin{figure}[t]
\centering
\includegraphics[width=\figurewidth]{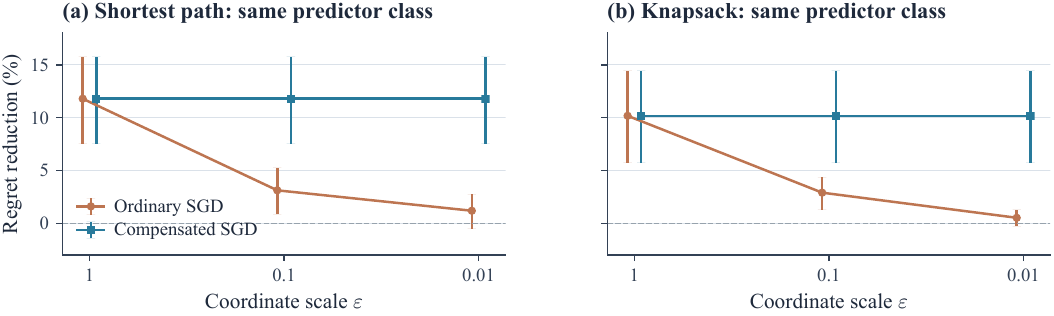}
\caption{\textbf{Same function class, different ordinary SGD behavior.} Invertible coordinate scaling reduces spectral effective rank while preserving exact rank and expressivity. Compensated SGD recovers the original predictor updates. Whiskers are pointwise 95\% paired-dataset bootstrap intervals over ten fresh datasets per task, not multiplicity-adjusted tests.}
\label{fig:reparameterization}
\end{figure}

\subsection{Forecast Accuracy Is a Separate Test}
\label{sec:forecast_control}
The fixed-class control addresses optimization coordinates; it leaves the prediction target unchanged. We next revisit the historical financial MSE target, which reconstructs trailing covariance. To test that choice, a separate chronological control fits the same scalar shrinkage family to future covariance, with forward labels purged at the training boundary. Across ten annual test folds (2016--2025), validation-selected future-target MSE reduces relative covariance error by $37.25\%$ but increases mean tracking error by $4.13\%$. Task-validation tuning reduces mean tracking error by $1.46\%$; Ledoit--Wolf reduces it by $0.69\%$. These descriptive results expose a forecast-to-decision gap rather than establish DFL superiority. The fixed, survivorship-biased universe and scalar predictor limit this comparison (Appendix~\ref{app:forecast_control}).

We then compare future-target MSE with DFL in the same 197-parameter residual covariance network, using matched initializations, forward windows and validation budgets. Across the same ten test years, with three initialization seeds averaged within year, mean annualized TE is $5.711\%$ for MSE and $5.715\%$ for DFL: DFL is $0.07\%$ worse in relative terms, despite improving in six years. Validation retains the unchanged initializer in 17/30 MSE and 12/30 DFL fits. Thus this neural control does not reproduce the larger historical DFL gains. It is neither an equivalence test nor evidence about all neural predictors; Appendix~\ref{app:neural_forward} reports the architecture, yearly results and training limits.

\FloatBarrier
\section{Conclusion}
Decision losses and predictors impose distinct restrictions: output support determines which errors a decision observes, the predictor Jacobian maps gradients into parameter directions, and stacking determines the batch subspace. Rank one restricts local directions without fixing training outcomes.

The experiments identify three distinct sources of variation. Capacity-dependent gains persist across tested batch orders and budgets, with stronger statistical evidence for knapsack. Invertible coordinate scaling changes ordinary SGD outcomes without changing expressivity, while better covariance forecasts can worsen tracking. A useful assessment of DFL must therefore examine geometry, optimization and held-out decision quality together. The matched neural forward-target control shows no aggregate DFL advantage under its finite budget. Broader architectures, longer training and point-in-time financial universes are needed to establish robustness; linking training-trajectory geometry to generalization remains open.


\subsection*{Ethics statement}
This work studies structural limitations of decision-focused learning in predict-then-optimize pipelines. Our experiments use publicly available financial, weather and air-quality data, together with synthetic optimization tasks. No human subjects are involved. Runtime comparisons describe computational cost in the tested settings; they do not measure energy savings. Financial backtests can be misinterpreted as prospective investment evidence. The reported tracking metrics do not establish profitability, deployment safety, or robustness to future market conditions.

\subsection*{AI use statement}
We used Claude (Anthropic) and Codex (OpenAI) to assist with analysis and plotting code, result auditing, literature checks, drafting and revising prose, and figure design. AI-assisted auditing identified evaluation-harness defects and unsupported theoretical steps, which prompted corrections and narrower claims in this revision. The accompanying audit records distinguish checked data, historical results, and remaining limitations. The authors remain responsible for the scientific content and the final submitted version.

\subsection*{Reproducibility statement}
\label{page:endmain}
The released bundle contains saved results, editable figures, numerical checks and a training-source snapshot. Three new combinatorial studies include 880 selected models, data arrays and rerunnable scripts. A scalar forecast-target study saves 70 annual portfolio evaluations; a matched neural study adds 120 candidate fits and 60 selected models. Historical real-data training has not been independently repeated in this release. Appendix~\ref{app:provenance} distinguishes current corrected equity results, archived diagnostics and new synthetic runs; Appendix~\ref{app:training} gives the equity training protocol.

The current equity comparisons use a repaired evaluation harness that confines holdings to each test window and uses actual trading dates. Earlier defects could extend validation holdings into test periods and changed some reported results. The correction history and remaining limits are documented in Appendix~\ref{app:provenance}. Archived diagnostic scores are not presented as independently certified preregistration or as a prospective test of the corrected harness.

\bibliographystyle{plainnat}
\bibliography{references}

\clearpage
\appendix
\section*{Appendix Guide}
The appendices separate mathematical arguments from experimental evidence. Appendix~\ref{app:geometry_group} gives proofs and counterexamples; Appendix~\ref{app:protocol_group} documents provenance, training and cost; Appendix~\ref{app:equity_group} collects equity comparisons; Appendix~\ref{app:diagnostic_group} examines the diagnostic and spatial transfer; Appendix~\ref{app:combinatorial_group} reports combinatorial experiments; Appendix~\ref{app:demonstration_group} separates expressivity, optimization coordinates and predictive targets; Appendix~\ref{app:limits_group} summarizes open questions.

\paragraph{Reading by claim.}
\begin{description}[leftmargin=0pt,style=nextline,itemsep=4pt]
\item[Local directions and batch limits.] Appendix~\ref{app:geometry_group}: proofs, conditional bounds and counterexamples. These are fixed-parameter statements, not guarantees about training trajectories or test performance.
\item[Financial utility and robustness.] Appendix~\ref{app:equity_group}: matched-fold tracking errors, risk, transaction costs and regularization. Rolling folds can share history; configurations are not independent datasets. Appendix~\ref{app:provenance} identifies corrected evaluations and remaining provenance limits.
\item[Controlled evidence outside finance.] Appendices~\ref{app:controlled}--\ref{app:robustness}: exact decision oracles, independently generated datasets, saved selected models and budget sensitivity. The statistical unit is a dataset; changing capacity also changes expressivity.
\item[Demonstration controls.] Appendix~\ref{app:demonstration_group}: fixed-function-class reparameterization and chronological scalar and neural future-target baselines. The neural comparison has no aggregate DFL gain in its tested architecture; this is a descriptive result, not an equivalence test.
\item[Exploration and failure boundaries.] Appendix~\ref{app:diagnostic_group} and the earlier combinatorial studies retain proxy failures, sparse spatial evaluations and input-Jacobian measurements. They are distinct from the new parameter-geometry tests; Appendix~\ref{app:limits_group} collects unresolved findings.
\end{description}

\FloatBarrier
\section{Geometry and Proofs}
\label{app:geometry_group}

\subsection{Proof of Proposition~\ref{prop:dim1} (Jacobian Rank Collapse)}
\label{app:dim1_proof}

\begin{proof}
At a fixed input and parameter value, write the nonzero rank-one Jacobian as $J=\sigma u v^\top$, with unit vectors $u,v$ and $\sigma>0$. By the chain rule, $\nabla_\theta\ell=J^\top g=\sigma(u^\top g)v$. Every nonzero gradient is thus a scalar multiple of $v$, with cosine equal to the sign of the product of its two coefficients. This proves the statement for any parameter dimension $d$, including conditional predictors with many parameters. If either coefficient vanishes, its parameter gradient is zero and cosine is undefined. Differentiability is local; for a QP it requires an appropriate regular solution map.
\end{proof}
\begin{equation}
\nabla_\theta\ell=J^\top g.
\label{eq:param_grad}
\end{equation}

\begin{corollary}[Vacuousness of the $\sqrt{K/N}$ bound for $d{=}1$]
\label{cor:vacuous}
Proposition~\ref{prop:alignment} bounds covariance-space alignment under its support-energy assumption. For a scalar predictor parameter, any two nonzero parameter gradients instead have cosine $\pm1$. Their signs and magnitudes can still depend on $K$ and $N$; the covariance-space angle does not survive as a separate parameter-space angle.
\end{corollary}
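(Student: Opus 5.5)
The corollary specializes Proposition~\ref{prop:dim1} to $d=1$ and contrasts it with Proposition~\ref{prop:alignment}. I would argue in three short steps.

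\emph{Step 1: rank at most one.} For a scalar parameter the Jacobian $\mathbf J\in\mathbb R^{N^2\times 1}$ is a single column $\mathbf j=\partial\,\mathrm{vec}(\bSigma)/\partial\theta$, so $\mathrm{rank}(\mathbf J)\le 1$. If $\mathbf j=0$, every parameter gradient vanishes, and the hypothesis that both are nonzero excludes this case. Otherwise the rank is exactly one, with the factorization $\sigma\mathbf u\mathbf v^\top$ where $\sigma=\|\mathbf j\|$, $\mathbf u=\mathbf j/\|\mathbf j\|$ and $\mathbf v=1$.

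\emph{Step 2: the cosine is $\pm1$.} By \eqref{eq:param_grad}, each parameter gradient is the real number $\mathbf j^\top\mathbf g$. I would apply Proposition~\ref{prop:dim1} directly, or compute the cosine of two nonzero reals as $\mathrm{sign}\bigl[(\mathbf j^\top\mathbf g_{\mathrm{MSE}})(\mathbf j^\top\mathbf g_{\mathrm{task}})\bigr]\in\{-1,+1\}$. This holds whatever the covariance-space cosine of $\mathbf g_{\mathrm{MSE}}$ and $\mathbf g_{\mathrm{task}}$ is. In particular, the bound $\sqrt{2K/N-K^2/N^2}$ from Proposition~\ref{prop:alignment} does not transfer: whenever that bound is below one, the parameter-space $|\cos|=1$ exceeds it. So the covariance-space angle leaves no trace as a parameter-space angle, which is the vacuousness claim.

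\emph{Step 3: signs and magnitudes still vary.} To show that signs and magnitudes can depend on $K$ and $N$, I would note that the scalars $\mathbf j^\top\mathbf g$ involve only the entries of $\mathbf g$ weighted by $\mathbf j$. For the task gradient these entries are supported on $\mathcal T$ (Proposition~\ref{prop:invariance}), and $|\mathcal T|=2NK-K^2$. A brief example, or simply the observation that $\mathbf g_{\mathrm{task}}$ changes with $\cS$, suffices, since the claim is only that such dependence \emph{can} occur. I would state it that way rather than derive an explicit formula.

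The only delicate point is bookkeeping about hypotheses. The cosine is defined only for nonzero gradients. Differentiability of the QP solution map is needed for $\mathbf g_{\mathrm{task}}$ to exist, and the proof of Proposition~\ref{prop:dim1} already flags this. Beyond that, the argument is immediate.
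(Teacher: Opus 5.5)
Your proposal is correct and follows the paper's own argument: for $d=1$ each parameter gradient $\mathbf J^\top\mathbf g$ is a real scalar, so the cosine of two nonzero gradients is the sign of their product, whatever the covariance-space angle that Proposition~\ref{prop:alignment} bounds. Your Step~1 (excluding a zero Jacobian) and Step~3 only make explicit what the paper leaves implicit or as commentary; the paper does not prove the ``can depend on $K$ and $N$'' clause either, and a change in $\mathbf g_{\mathrm{task}}$ alone does not force a change in $\mathbf j^\top\mathbf g_{\mathrm{task}}$, so establishing that clause as a claim would require exhibiting one concrete instance.
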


\begin{proof}
The optimizer updates $\theta$ using the parameter-space gradient $\nabla_\theta \ell$, which lies in $\mathbb{R}$ for a scalar parameter. The $N^2$-space misalignment between $\mathbf{g}_{\mathrm{MSE}}$ and $\mathbf{g}_{\mathrm{task}}$ is projected onto this subspace via~\eqref{eq:param_grad}. For any two vectors $\mathbf{u}, \mathbf{v}$ with angle $\phi$ in $\mathbb{R}^{N^2}$, their projections onto a one-dimensional subspace $\mathrm{span}(\mathbf{J})$ yield scalars $\mathbf{J}^\top \mathbf{u}$ and $\mathbf{J}^\top \mathbf{v}$ with cosine $\operatorname{sign}(\mathbf{J}^\top \mathbf{u} \cdot \mathbf{J}^\top \mathbf{v}) \in \{-1, +1\}$, collapsing $\phi$ to either $0$ or $\pi$. In particular, the covariance-space angle does not survive as an independent parameter-space angle for $d{=}1$.
\end{proof}

\paragraph{Concordance vs.\ discordance.}
Proposition~\ref{prop:dim1} establishes that $d{=}1$ gradients are collinear but does not determine the sign: $\cos = +1$ (concordant, MSE descent also reduces task loss) vs.\ $\cos = -1$ (discordant, MSE descent increases task loss). We formalize conditions for concordance and verify them experimentally.

\begin{lemma}[Concordance for unimodal losses]
\label{lem:concordance}
Let $\Sigma(\alpha) = (1{-}\alpha) S + \alpha \mu I$ be the linear shrinkage family with $\alpha \in [0,1]$. Suppose:
\begin{enumerate}[label=(\roman*)]
    \item $\ell_{\mathrm{MSE}}(\alpha)$ is strictly convex with minimizer $\alpha^*_{\mathrm{MSE}}$;
    \item $\ell_{\mathrm{task}}(\alpha)$ is unimodal (single minimizer $\alpha^*_{\mathrm{task}}$) with strictly negative/positive derivative to the left/right of its interior minimizer, and an $L$-Lipschitz derivative on the interval between the two minimizers;
    \item $|\alpha^*_{\mathrm{MSE}} - \alpha^*_{\mathrm{task}}| \leq \delta$ for some $\delta > 0$.
\end{enumerate}
Then $\cos(\nabla_\alpha \ell_{\mathrm{MSE}}, \nabla_\alpha \ell_{\mathrm{task}}) = +1$ for all $\alpha \notin [\min(\alpha^*_{\mathrm{MSE}}, \alpha^*_{\mathrm{task}}), \max(\alpha^*_{\mathrm{MSE}}, \alpha^*_{\mathrm{task}})]$, and the task-loss gap satisfies $\ell_{\mathrm{task}}(\alpha^*_{\mathrm{MSE}}) - \ell_{\mathrm{task}}(\alpha^*_{\mathrm{task}}) \leq L\delta^2/2$.
\end{lemma}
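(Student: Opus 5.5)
The argument treats $\alpha$ as the single parameter, so both gradients are the scalar derivatives $\ell_{\mathrm{MSE}}'(\alpha)$ and $\ell_{\mathrm{task}}'(\alpha)$. As in Proposition~\ref{prop:dim1}, their cosine is the sign of their product whenever both are nonzero. The concordance claim therefore reduces to a sign comparison of two scalar derivatives outside the interval $I=[\min(\alpha^*_{\mathrm{MSE}},\alpha^*_{\mathrm{task}}),\max(\alpha^*_{\mathrm{MSE}},\alpha^*_{\mathrm{task}})]$. The gap bound then follows from a one-dimensional descent-lemma estimate on $I$.

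\emph{Sign agreement.} By strict convexity in (i), $\ell_{\mathrm{MSE}}'$ is strictly increasing. It vanishes at $\alpha^*_{\mathrm{MSE}}$ when that point is interior, so it is negative to the left and positive to the right; a boundary minimizer is handled the same way on the feasible side. Condition (ii) gives the same sign pattern for $\ell_{\mathrm{task}}'$ around $\alpha^*_{\mathrm{task}}$. Take $\alpha\notin I$. If $\alpha<\min$, then $\alpha$ lies left of both minimizers, so both derivatives are strictly negative. If $\alpha>\max$, both are strictly positive. In either case the product is positive, both gradients are nonzero, and the cosine equals $+1$. Inside $I$ the signs may disagree, which is why $I$ is excluded. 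One point needs care: (ii) assumes an interior task minimizer. If $\alpha^*_{\mathrm{task}}$ sits at an endpoint of $[0,1]$, the region on one side is empty and the claim holds vacuously there.

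\emph{Task-loss gap.} Set $a=\alpha^*_{\mathrm{task}}$ and $b=\alpha^*_{\mathrm{MSE}}$. Since $a$ is an interior minimizer, $\ell_{\mathrm{task}}'(a)=0$. Condition (ii) makes $\ell_{\mathrm{task}}'$ $L$-Lipschitz on $I$, so $|\ell_{\mathrm{task}}'(s)|\le L|s-a|$ for $s\in I$. Integrating,
\[
\ell_{\mathrm{task}}(b)-\ell_{\mathrm{task}}(a)=\int_a^b \ell_{\mathrm{task}}'(s)\,ds\le \int_{\min(a,b)}^{\max(a,b)} L|s-a|\,ds = \tfrac{L}{2}(b-a)^2\le \tfrac{L}{2}\delta^2 .
\]
The last step uses (iii). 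The left side is nonnegative because $a$ minimizes $\ell_{\mathrm{task}}$.

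\emph{Main obstacle.} The difficulty is bookkeeping at the boundary, not any hard estimate. First, if $\alpha^*_{\mathrm{task}}\in\{0,1\}$, then $\ell_{\mathrm{task}}'(a)$ need not vanish and the quadratic bound degrades to a linear one. I would either read (ii) as requiring an interior minimizer, as its wording suggests, or add a term $|\ell_{\mathrm{task}}'(a)|\delta$ to the bound. Second, one must check that the Lipschitz condition is needed only on $I$, which the integral above confirms.
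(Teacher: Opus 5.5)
Your proposal is correct and follows the paper's own proof: conditions (i) and (ii) give strictly same-signed scalar derivatives outside the interval between the two minimizers, so $\cos=+1$ there, and integrating $|\ell'_{\mathrm{task}}(t)|\le L|t-a|$ from $a=\alpha^*_{\mathrm{task}}$ to $b=\alpha^*_{\mathrm{MSE}}$ gives the $L\delta^2/2$ gap. Your boundary remarks are a harmless refinement: they make explicit that the step $\ell'_{\mathrm{task}}(a)=0$ needs an interior task minimizer, which the paper's proof uses without comment.
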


\begin{proof}
Condition (i) ensures $\nabla_\alpha \ell_{\mathrm{MSE}} < 0$ for $\alpha < \alpha^*_{\mathrm{MSE}}$ and $> 0$ for $\alpha > \alpha^*_{\mathrm{MSE}}$. Condition (ii) ensures $\nabla_\alpha \ell_{\mathrm{task}}$ has the same sign pattern around $\alpha^*_{\mathrm{task}}$. For $\alpha < \min(\alpha^*_{\mathrm{MSE}}, \alpha^*_{\mathrm{task}})$ or $\alpha > \max(\alpha^*_{\mathrm{MSE}}, \alpha^*_{\mathrm{task}})$, both gradients share the same sign, hence $\cos = +1$. Let $a=\alpha^*_{\mathrm{task}}$ and $b=\alpha^*_{\mathrm{MSE}}$. Since $\ell^{\prime}_{\mathrm{task}}(a)=0$ and its derivative is $L$-Lipschitz, integrating $|\ell^{\prime}_{\mathrm{task}}(t)|\leq L|t-a|$ between $a$ and $b$ bounds the loss difference by $L|b-a|^2/2\leq L\delta^2/2$.
\end{proof}

\emph{Condition verification.} We verify conditions (i)--(iii) across 20 train/test folds (4 markets: S\&P~500, Nikkei~225, Euro~Stoxx~50, Hang~Seng; $K{=}20$; $\alpha \in [0.01, 0.99]$). Condition (i): $\ell_{\mathrm{MSE}}(\alpha)$ is convex quadratic by construction \citep{ledoit2004well}. Condition (ii): $\ell_{\mathrm{task}}(\alpha)$ is unimodal in 20/20 folds (checked via monotonicity before and after the minimum, tolerating $\leq 2$ noise violations). Condition (iii): $|\alpha^*_{\mathrm{MSE}} - \alpha^*_{\mathrm{task}}|$ averages 0.154 (median 0.040), with 14/20 folds having gap $< 0.15$.

\emph{Scope.} The Jacobian is $\mathrm{vec}(\mu I-S)$ and has one column. This restricts directions, not minimizers: even two scalar convex losses $(\alpha-a)^2$ and $(\alpha-b)^2$ can prefer different parameters. The quadratic loss-gap bound above uses smoothness and nearby minimizers; the empirical unimodality check does not certify those assumptions globally.

\emph{Broader verification.} Across 38 $d{=}1$ configurations (6 equity markets, $K \in \{5, 10, 20, 50, 100\}$, $N$ from 47 to 1258, $K/N$ from $0.008$ to $0.43$), the DFL gain over MSE stays below $1.8\%$ with median $0.25\%$, consistent with the $O(\delta^2)$ gap from Lemma~\ref{lem:concordance}. It does not grow with $K/N$ ($\rho{=}{-}0.16$, $p{=}0.35$), an empirical observation compatible with, but not implied by, Corollary~\ref{cor:vacuous}: the $\Theta(\sqrt{K/N})$ mechanism is invisible through a rank-1 Jacobian, although signs, stationary points and test losses can still vary with sparsity.

\paragraph{Extension to $d > 1$.}
For a $d$-parameter model $\Sigma(\btheta)$ with $\btheta \in \mathbb{R}^d$, the Jacobian $\mathbf{J} \in \mathbb{R}^{N^2 \times d}$ maps $N^2$-space gradients to $d$-dimensional parameter-space gradients via $\nabla_{\btheta} \ell = \mathbf{J}^\top \mathbf{g}$. The parameter-space alignment becomes:
\begin{equation}
\cos(\nabla_{\btheta} \ell_{\mathrm{MSE}}, \nabla_{\btheta} \ell_{\mathrm{task}}) = \frac{\mathbf{g}_{\mathrm{MSE}}^\top \mathbf{J}\mathbf{J}^\top \mathbf{g}_{\mathrm{task}}}{\|\mathbf{J}^\top \mathbf{g}_{\mathrm{MSE}}\| \cdot \|\mathbf{J}^\top \mathbf{g}_{\mathrm{task}}\|}.
\label{eq:param_align_d}
\end{equation}
The matrix $\mathbf{J}\mathbf{J}^\top \in \mathbb{R}^{N^2 \times N^2}$ is a positive semidefinite form of rank at most $d$. When $d = 1$, it is rank-1 and collapses all angles to $\{0, \pi\}$ (Proposition~\ref{prop:dim1}). Because $\mathbf{J}\mathbf{J}^\top = U S^2 U^\top$, the relevant inner product weights each singular direction by $\sigma_i^2$; it is not the plain projection $P_J = UU^\top$ onto $\mathrm{col}(\mathbf{J})$, and the two agree only when all nonzero $\sigma_i$ are equal. In the extreme case of $N^2$ orthogonal columns of equal norm, $\mathbf{J}\mathbf{J}^\top = \sigma^2 I$ and the parameter-space alignment equals the $N^2$-space alignment $\cos(\mathbf{g}_{\mathrm{MSE}}, \mathbf{g}_{\mathrm{task}})$; the expected numerical bound requires the support-energy assumption, and full rank alone is not enough.

\paragraph{Proof of Theorem~\ref{thm:spectral}.}
\emph{Exact form.} With $\mathbf{J} = USV^\top$ (thin SVD, $V \in \mathbb{R}^{d\times r}$ with orthonormal columns), $\nabla_{\btheta}\ell = \mathbf{J}^\top\mathbf{g} = V\,\mathbf{a}$ with $\mathbf{a} = SU^\top\mathbf{g}$. Multiplication by $V$ preserves inner products and norms, so $\cos(\nabla_{\btheta}\ell_{\mathrm{MSE}}, \nabla_{\btheta}\ell_{\mathrm{task}}) = \cos(\mathbf{a}, \mathbf{b})$ with $\mathbf{b} = SU^\top\mathbf{g}_{\mathrm{task}}$.

\emph{Near-rank-one collapse.} Write $\mathbf{a} = (a_1, \mathbf{a}_\perp)$ with $a_1 = \sigma_1 \mathbf{u}_1^\top\mathbf{g}$ and $(\mathbf{a}_\perp)_i = \sigma_i \mathbf{u}_i^\top \mathbf{g}$ for $i \geq 2$. Then $\|\mathbf{a}_\perp\|^2 = \sum_{i\geq2}\sigma_i^2(\mathbf{u}_i^\top\mathbf{g})^2 \leq \sigma_2^2\|\mathbf{g}\|^2$, so
\[
\frac{\|\mathbf{a}_\perp\|}{|a_1|} \leq \frac{\sigma_2}{\sigma_1}\cdot\frac{\|\mathbf{g}\|}{|\mathbf{u}_1^\top\mathbf{g}|} = \frac{\sigma_2/\sigma_1}{|\cos(\mathbf{g},\mathbf{u}_1)|}.
\]
Let $\tilde{\mathbf{a}} = \mathrm{sign}(a_1)\,\mathbf{a}$, so $\tilde a_1 > 0$ and the angle $\theta_a$ between $\tilde{\mathbf{a}}$ and $\mathbf{e}_1$ lies in $[0,\pi/2)$ with $\sin\theta_a = \|\mathbf{a}_\perp\|/\|\mathbf{a}\| \leq \|\mathbf{a}_\perp\|/|a_1|$; hence $\theta_a \leq \arcsin\rho_{\mathrm{MSE}}$, and likewise $\theta_b \leq \arcsin\rho_{\mathrm{task}}$. The angle between unit vectors is a metric on the sphere, so the angle between $\tilde{\mathbf{a}}$ and $\tilde{\mathbf{b}}$ is at most $\theta_a + \theta_b \leq \Theta$. When $\Theta < \pi/2$ this gives $\cos(\tilde{\mathbf{a}},\tilde{\mathbf{b}}) \geq \cos\Theta > 0$, and since $\cos(\mathbf{a},\mathbf{b}) = \mathrm{sign}(a_1 b_1)\cos(\tilde{\mathbf{a}},\tilde{\mathbf{b}})$, both the bound on $|\cos|$ and the sign rule follow.

\emph{Endpoints.} If $\sigma_2 = 0$ then $\rho = 0$, $\Theta = 0$ and $|\cos| = 1$. If $\mathbf{J}$ has $N^2$ orthogonal columns of norm $\sigma$, then $S = \sigma I$ and $U$ is square orthogonal, so $\cos(\mathbf{a},\mathbf{b}) = \cos(\mathbf{g}_{\mathrm{MSE}}, \mathbf{g}_{\mathrm{task}})$.~$\square$

\emph{Numerical stress check.} The released script \texttt{verify\_geometry.py} samples 5,000 Jacobians with singular-value ratios from $10^{-6}$ to one. The exact alignment identity agrees to within $4.2\times10^{-14}$; the bound and sign rule have no violations in the 3,926 non-vacuous cases. The script also checks the batch, leading-component and support-sharpness constructions. These computations are checks of the formulas, not replacements for their proofs.

\paragraph{Connecting $d_{\mathrm{proxy}}$ to the Jacobian spectrum.}
We define the entropy-based spectral effective rank by $r_{\mathrm{eff}}=\exp(-\sum_i p_i\log p_i)$, with $p_i=\sigma_i^2/\sum_j\sigma_j^2$. The empirical proxy $d_{\mathrm{proxy}}=d\cdot h$ is distinct from this measured quantity. Neither a parameter count nor a heterogeneity coefficient alone determines the singular values or the gradients' leading components. The conditional-estimator and spectral-bound experiments (Appendices~\ref{app:bound_check} and~\ref{app:diagnostic}) should therefore be interpreted as diagnostic checks rather than a universal identity between $d\cdot h$ and rank.

\begin{corollary}[Non-collinear parameter gradients]
\label{cor:positive}
Suppose the projections of $\mathbf g_{\mathrm{MSE}}$ and $\mathbf g_{\mathrm{task}}$ onto $\mathrm{col}(\mathbf J)$ are non-collinear. Their parameter-space images are then non-collinear. Writing $c$ for their cosine, the task-gradient component orthogonal to the MSE gradient has norm
\[
\|\nabla_\theta\ell_{\mathrm{task}}\|\sqrt{1-c^2}>0.
\]
\end{corollary}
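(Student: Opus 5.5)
The plan is to reduce everything to the injectivity of $\mathbf J^\top$ on $\mathrm{col}(\mathbf J)$, and then to compute the orthogonal component by elementary Gram--Schmidt. Write $P_J=UU^\top$ for the orthogonal projection onto $\mathrm{col}(\mathbf J)$, using the thin SVD $\mathbf J=USV^\top$ from Theorem~\ref{thm:spectral}, and set $\mathbf p_{\mathrm{MSE}}=P_J\mathbf g_{\mathrm{MSE}}$ and $\mathbf p_{\mathrm{task}}=P_J\mathbf g_{\mathrm{task}}$. I read ``non-collinear'' as linearly independent, which in particular makes both projections nonzero.

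The first step is the identity $\mathbf J^\top\mathbf g=\mathbf J^\top P_J\mathbf g$. It holds because $(I-P_J)\mathbf g\in\mathrm{col}(\mathbf J)^\perp=\ker(\mathbf J^\top)$. Equivalently, in SVD coordinates,
\[
\mathbf J^\top\mathbf g=VSU^\top\mathbf g=VS\,U^\top(UU^\top\mathbf g).
\]
So the parameter gradients $\nabla_\theta\ell=\mathbf J^\top\mathbf g$ depend only on the projected output gradients.

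The second step is the main point: $\mathbf J^\top$ restricted to $\mathrm{col}(\mathbf J)$ is injective. Any $\mathbf x\in\mathrm{col}(\mathbf J)$ can be written $\mathbf x=U\mathbf c$, so $\mathbf J^\top\mathbf x=VS\mathbf c$. Since $S$ is invertible (all $\sigma_i>0$ for $i\le r$) and $V$ has orthonormal columns, $\mathbf J^\top\mathbf x=0$ forces $\mathbf c=0$, hence $\mathbf x=0$.

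Now suppose $a\,\mathbf J^\top\mathbf p_{\mathrm{MSE}}+b\,\mathbf J^\top\mathbf p_{\mathrm{task}}=0$. Then $\mathbf J^\top(a\mathbf p_{\mathrm{MSE}}+b\mathbf p_{\mathrm{task}})=0$. The argument $a\mathbf p_{\mathrm{MSE}}+b\mathbf p_{\mathrm{task}}$ lies in $\mathrm{col}(\mathbf J)$, so injectivity gives $a\mathbf p_{\mathrm{MSE}}+b\mathbf p_{\mathrm{task}}=0$. Linear independence of the projections then gives $a=b=0$. Therefore $\nabla_\theta\ell_{\mathrm{MSE}}$ and $\nabla_\theta\ell_{\mathrm{task}}$ are linearly independent. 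In particular both are nonzero, so their cosine $c$ is defined.

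The last step is routine. Write $\mathbf m=\nabla_\theta\ell_{\mathrm{MSE}}$ and $\mathbf t=\nabla_\theta\ell_{\mathrm{task}}$. The component of $\mathbf t$ orthogonal to $\mathbf m$ is $\mathbf t-(\mathbf t^\top\mathbf m/\|\mathbf m\|^2)\mathbf m$. Its squared norm is
\[
\|\mathbf t\|^2-\frac{(\mathbf t^\top\mathbf m)^2}{\|\mathbf m\|^2}=\|\mathbf t\|^2(1-c^2).
\]
By the equality case of Cauchy--Schwarz, $|c|=1$ holds only when $\mathbf m$ and $\mathbf t$ are collinear, which the previous step excludes. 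Hence $|c|<1$ and $\|\mathbf t\|\sqrt{1-c^2}>0$.

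The only place needing care is the injectivity step, specifically fixing the meaning of ``non-collinear'' so that nonzero projections are included. Once that is settled via the SVD, the argument is short. The contrast with Proposition~\ref{prop:dim1} is worth noting: when $\mathrm{rank}(\mathbf J)=1$, $\mathrm{col}(\mathbf J)$ is a line, so no two projections can be non-collinear and the hypothesis can never hold. This is consistent with the remark that rank greater than one is necessary but not sufficient.
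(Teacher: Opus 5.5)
Your proof is correct and follows the paper's argument: $\mathbf J^\top$ is injective on $\mathrm{col}(\mathbf J)$, so it preserves the non-collinearity of the projected gradients, and the stated norm follows from orthogonal decomposition. Your version also writes out two steps the paper's short proof leaves implicit: the identity $\mathbf J^\top\mathbf g=\mathbf J^\top P_J\mathbf g$, which identifies the images of the projections with the actual parameter gradients, and the Cauchy--Schwarz equality case that gives $|c|<1$.
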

\begin{proof}
On $\mathrm{col}(\mathbf{J})$, the map $\mathbf{J}^\top$ is injective. It therefore preserves non-collinearity of the two projected vectors assumed in the statement. The stated norm follows by orthogonal decomposition.
\end{proof}

\subsection{Counterexamples and Batch Geometry}
\label{app:batch}
Proposition~\ref{prop:batch} is an exact statement about the attainable first-order subspace at a fixed parameter value. It does not require a QP. Positive averaging weights can be absorbed into the rows of $J_{\mathrm{stack}}$ without changing the span; zero-weight examples can be omitted.

\paragraph{Different minimizers with one parameter.}
On $(0,3)$, set $f(\theta)=\theta$, $L_1=(f-1)^2$ and $L_2=(f-2)^2$. Then $J=1$, while $\arg\min L_1=1$ and $\arg\min L_2=2$. Their nonzero derivatives have opposite signs for $1<\theta<2$. At $\theta=1$, only the first derivative vanishes. Thus collinearity does not imply concordance, common stationary points, or common minimizers. The example can use the positive scalar covariance $\Sigma(\theta)=\theta$ directly.

\paragraph{Orthogonal batch gradients from rank-one examples.}
For $\theta\in\mathbb R^2$, $J_1=(1,0)$ and $J_2=(0,1)$ each have rank one, while $J_{\mathrm{stack}}=I_2$. With losses as in Section~\ref{sec:batch}, differentiation at zero gives $(1,1)$ and $(1,-1)$, whose inner product is zero. Positive covariance outputs can be obtained by replacing $f_i$ with $2+\theta_i$ near zero and shifting the loss targets correspondingly. The example establishes a failure of the proposed inference; it does not purport to reproduce the financial experiment.

\paragraph{A spectral gap without a leading task component.}
Let $J=\operatorname{diag}(1,\varepsilon)$, $g_1=e_1$ and $g_2=e_2$, where $\varepsilon>0$. Their parameter gradients remain orthogonal for every $\varepsilon$, although the singular-value ratio tends to zero. This is why the leading-component condition in Theorem~\ref{thm:spectral} cannot be omitted. At $\varepsilon=0$, the second parameter gradient vanishes, so its cosine is undefined.

\subsection{Proof of Proposition~\ref{prop:alignment}}
\label{app:proof}

\begin{proof}
We prove the gradient alignment bound in three steps.

\textbf{Step 1: Sparsity of the task gradient.}
Under the hard-$K$ formulation, the QP solution $\bw^* \in \mathbb{R}^K$ depends on $\bSigma$ through two quantities: the selected submatrix $\bSigma_{\cS\cS} \in \mathbb{R}^{K \times K}$ and the cross-covariance vector $\hat{\bsigma}_{\text{idx},\cS} = \bSigma_{\cS,:}\bw_{\text{idx}} \in \mathbb{R}^K$. Entry $(i,j)$ of $\bSigma$ contributes to the cross-covariance whenever $i \in \cS$ (regardless of $j$, since $\bw_{\text{idx}}$ has support on all $N$ assets), and to the sub-matrix whenever both $i,j \in \cS$. For any pair $(i,j)$ with $i \notin \cS$ and $j \notin \cS$, the entry $\bSigma_{ij}$ does not appear in either quantity. By the chain rule through the KKT conditions \citep{amos2017optnet}, $(\mathbf{g}_{\mathrm{task}})_{ij} = 0$ for all $(i,j) \in \bar{\mathcal{T}}$.

The task-relevant support is $\mathcal{T} = \{(i,j) : i \in \cS \text{ or } j \in \cS\}$. Its cardinality is $|\mathcal{T}| = N^2 - (N-K)^2 = 2NK - K^2$.

\textbf{Step 2: Cosine similarity bound via Cauchy--Schwarz.}
Since $\mathbf{g}_{\mathrm{task}}$ is supported on $\mathcal{T}$, we can write:
\begin{align}
\langle \mathbf{g}_{\mathrm{MSE}},\, \mathbf{g}_{\mathrm{task}} \rangle
&= \sum_{(i,j) \in \mathcal{T}} (\mathbf{g}_{\mathrm{MSE}})_{ij}\, (\mathbf{g}_{\mathrm{task}})_{ij} \notag \\
&= \langle \mathbf{g}_{\mathrm{MSE}}|_{\mathcal{T}},\, \mathbf{g}_{\mathrm{task}} \rangle,
\end{align}
where $\mathbf{g}_{\mathrm{MSE}}|_{\mathcal{T}}$ denotes the restriction of $\mathbf{g}_{\mathrm{MSE}}$ to the support $\mathcal{T}$. By Cauchy--Schwarz:
\begin{equation}
|\langle \mathbf{g}_{\mathrm{MSE}},\, \mathbf{g}_{\mathrm{task}} \rangle|^2 \leq \|\mathbf{g}_{\mathrm{MSE}}|_{\mathcal{T}}\|^2 \cdot \|\mathbf{g}_{\mathrm{task}}\|^2.
\end{equation}
Dividing both sides by $\|\mathbf{g}_{\mathrm{MSE}}\|^2 \cdot \|\mathbf{g}_{\mathrm{task}}\|^2$:
\begin{equation}
\cos^2(\mathbf{g}_{\mathrm{MSE}},\, \mathbf{g}_{\mathrm{task}}) \leq \frac{\|\mathbf{g}_{\mathrm{MSE}}|_{\mathcal{T}}\|^2}{\|\mathbf{g}_{\mathrm{MSE}}\|^2}.
\end{equation}

\textbf{Step 3: Isotropy assumption.}
For a fixed support, a sufficient assumption is isotropy of the normalized energy: $\mathbb{E}[g_{ij}^2/\|\mathbf g\|^2]=1/N^2$ for every coordinate. For data-dependent selection the corresponding conditional assumption is needed. Under this condition,
\begin{equation}
\mathbb{E}\!\left[\frac{\|\mathbf{g}_{\mathrm{MSE}}|_{\mathcal{T}}\|^2}{\|\mathbf{g}_{\mathrm{MSE}}\|^2}\right] = \frac{|\mathcal{T}|}{N^2} = \frac{2NK - K^2}{N^2} = \frac{2K}{N} - \frac{K^2}{N^2}.
\end{equation}
Combining with Step 2 and applying Jensen's inequality:
\begin{equation}
\mathbb{E}\!\left[|\cos(\mathbf{g}_{\mathrm{MSE}},\, \mathbf{g}_{\mathrm{task}})|\right] \leq \sqrt{\frac{2K}{N} - \frac{K^2}{N^2}}.
\end{equation}
For $K \ll N$, the $K^2/N^2$ term is negligible, yielding $\mathbb E|\cos(\mathbf{g}_{\mathrm{MSE}},\, \mathbf{g}_{\mathrm{task}})| \lesssim \sqrt{2K/N}$ under the same assumption.
\end{proof}

\paragraph{Scope of the energy assumption.}
Equal unnormalized second moments do not suffice: expectation of a ratio is not the ratio of expectations. Normalized-energy isotropy is an explicit sufficient condition, not a property of every covariance estimator. Selection can concentrate error energy; then the deterministic support-energy ratio in Step 2 remains valid, while the numerical $\sqrt{2K/N}$ bound need not hold.

\subsection{Sharpness of the Support-Only Inequality}
\label{app:lower_proof}
\begin{proposition}[Ambient-vector sharpness]
\label{prop:lower}
For a support $\mathcal T$ of size $m$ in $\mathbb R^{N^2}$, there are nonzero vectors $\mathbf g,\mathbf h$ with $\mathbf h$ supported on $\mathcal T$ and uniform squared coordinates in $\mathbf g$, such that $\cos(\mathbf g,\mathbf h)=\sqrt{m/N^2}$.
\end{proposition}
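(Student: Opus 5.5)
The construction is explicit. Take $\mathbf g=\mathbf 1\in\mathbb R^{N^2}$, the all-ones vector, so every squared coordinate equals $1$ and the uniformity requirement holds. Take $\mathbf h=\mathbf 1_{\mathcal T}$, the indicator of the support $\mathcal T$. Since $m\ge 1$ (for the task support, $m=2NK-K^2\geq1$ whenever $K\ge1$), $\mathbf h$ is nonzero and supported on $\mathcal T$.

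The cosine then reduces to counting:
\[
\langle\mathbf g,\mathbf h\rangle=m,\qquad \|\mathbf g\|=N,\qquad \|\mathbf h\|=\sqrt m,
\]
so
\[
\cos(\mathbf g,\mathbf h)=\frac{m}{N\sqrt m}=\sqrt{m/N^2}.
\]

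To connect this with Proposition~\ref{prop:alignment}, observe that $\mathbf g$ satisfies the support-energy assumption with equality, deterministically: $\|\mathbf g|_{\mathcal T}\|^2/\|\mathbf g\|^2=m/N^2$. Step 2 of that proof bounds the cosine by the square root of this ratio, and the construction attains the bound exactly. The reason is that $\mathbf h$ is a positive multiple of $\mathbf g|_{\mathcal T}$, which is precisely the equality case of Cauchy--Schwarz. Taking $\mathbf g$ to be deterministic (a point mass) makes the expectation trivial, so the bound $\sqrt{2K/N-K^2/N^2}$ is attained with $m=2NK-K^2$.

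There is no substantive obstacle. The only care needed is scope: the statement concerns ambient vectors, so this shows that the support-only inequality cannot be improved. It does not claim that gradients of the tracking QP realize this configuration, and that question stays open, as the remark after Proposition~\ref{prop:alignment} notes.
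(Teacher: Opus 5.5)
Your construction is the same as the paper's: $\mathbf g$ is the all-ones vector and $\mathbf h$ is the indicator of $\mathcal T$, so the inner product is $m$, the norms are $N$ and $\sqrt m$, and the cosine is $\sqrt m/N$. Your added remarks are also correct and match the paper's scope note: equality holds in Cauchy--Schwarz because $\mathbf h$ is a positive multiple of $\mathbf g|_{\mathcal T}$, and $m\ge 1$ is needed for $\mathbf h\neq 0$; nothing here shows that tracking-QP gradients realize this configuration.
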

\begin{proof}
Set every coordinate of $\mathbf g$ to one and let $\mathbf h$ be the indicator of $\mathcal T$. Their inner product is $m$, and their norms are $N$ and $\sqrt m$. Their cosine is therefore $\sqrt m/N$, attaining the support-energy bound.
\end{proof}
This proves sharpness of the linear-algebra inequality only. The earlier equicorrelation construction did not establish a nonzero tracking gradient with the asserted structure. In fact, adding $\epsilon\mathbf1\mathbf1^\top$ leaves the fully invested tracking objective unchanged because $\mathbf1^\top(\mathbf w-\mathbf w_{\rm idx})=0$. We do not claim a matching lower bound realized by the tracking QP.

\subsection{Proof of Proposition~\ref{prop:invariance}}
\label{app:invariance_proof}

\begin{proof}
Fix a selection set $\cS$ with $|\cS| = K$. For any portfolio $\bw$ with $w_i = 0$ for $i \notin \cS$, define $\mathbf{d} = \bw - \bw_{\mathrm{idx}}$. Then $d_i = w_i - w_{\mathrm{idx},i}$ for $i \in \cS$ and $d_i = -w_{\mathrm{idx},i}$ for $i \notin \cS$, where the latter is fixed. The QP objective decomposes as:
\begin{align}
\mathbf{d}^\top \bSigma\, \mathbf{d} &= \mathbf{d}_\cS^\top \bSigma_{\cS\cS}\, \mathbf{d}_\cS + 2\, \mathbf{d}_\cS^\top \bSigma_{\cS\bar\cS}\, \mathbf{d}_{\bar\cS} + \mathbf{d}_{\bar\cS}^\top \bSigma_{\bar\cS\bar\cS}\, \mathbf{d}_{\bar\cS}.
\end{align}
Since $\mathbf{d}_{\bar\cS} = -\bw_{\mathrm{idx},\bar\cS}$ is fixed, the third term is a constant that does not affect the argmin, and the second term involves only the $K$-vector $\bSigma_{\cS\bar\cS}\, \bw_{\mathrm{idx},\bar\cS}$. Thus, two estimates $\hat\Sigma_1, \hat\Sigma_2$ with $(\hat\Sigma_1)_{\cS\cS} = (\hat\Sigma_2)_{\cS\cS}$ and $(\hat\Sigma_1)_{\cS\bar\cS}\, \bw_{\mathrm{idx},\bar\cS} = (\hat\Sigma_2)_{\cS\bar\cS}\, \bw_{\mathrm{idx},\bar\cS}$ yield identical objectives up to a constant, hence $\bw^*(\hat\Sigma_1) = \bw^*(\hat\Sigma_2)$.

The sufficient condition $(\hat\Sigma_1)_\mathcal{T} = (\hat\Sigma_2)_\mathcal{T}$ (equality on the full task-relevant support) is stronger than needed---it implies both block-equality and cross-product equality---but matches the gradient support from Proposition~\ref{prop:alignment}.
\end{proof}

\subsection{Task Regret and Local Perturbation}
\label{app:regret_proof}
\begin{proposition}[Off-support invariance of regret]
\label{prop:regret}
Fix the selected set and the true covariance used to evaluate decisions. Changing only the estimated covariance entries in $\bar\cS\times\bar\cS$ leaves the QP solution and its evaluated task regret unchanged (with a common tie-breaking rule if necessary).
\end{proposition}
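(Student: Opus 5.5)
The plan is to reduce Proposition~\ref{prop:regret} to Proposition~\ref{prop:invariance}, after noting that the evaluated regret depends on the estimate only through the decision it produces. First, fix $\cS$ and write $\hat\Sigma_2=\hat\Sigma_1+E$, where $E$ is supported on $\bar\cS\times\bar\cS$.

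The QP~\eqref{eq:qp} reads the estimate through two inputs:
\begin{itemize}
\item the selected block $\hat\Sigma_{\cS\cS}$, which sees no entry of $E$, since $E_{\cS\cS}=0$;
\item the cross vector $\hat\Sigma_{\cS,:}\bw_{\mathrm{idx}}=\sum_j \hat\Sigma_{\cS j}w_{\mathrm{idx},j}$, which involves only rows indexed by $\cS$ and so is also untouched, since $E_{\cS,:}=0$.
\end{itemize}
So both inputs coincide for $\hat\Sigma_1$ and $\hat\Sigma_2$. By Proposition~\ref{prop:invariance}, or directly because the objective and the feasible set are literally identical, the two QP instances are the same optimization problem.

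Next comes the solution set. Identical problems have identical argmin sets. When $\hat\Sigma_{\cS\cS}$ is positive definite, the minimizer is unique and $\bw^*(\hat\Sigma_1)=\bw^*(\hat\Sigma_2)$ immediately. Otherwise, I will invoke the stated common tie-breaking rule: any deterministic selection map applied to the same argmin set returns the same point. The rule must depend only on the problem data (the QP inputs), not on $\hat\Sigma$ as a whole. I expect this to be the only delicate point, and I would state it explicitly as the assumption behind the parenthetical.

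Finally, regret. It is evaluated with the fixed true covariance $\Sigma$ and fixed $\bw_{\mathrm{idx}}$. Concretely, it takes the form $(\bw^*-\bw_{\mathrm{idx}})^\top\Sigma(\bw^*-\bw_{\mathrm{idx}})$ minus an oracle benchmark that does not depend on $\hat\Sigma$. Hence it is a function of $\bw^*$ alone, and equality of decisions gives equality of regret; the same holds for the realized-return tracking loss in~\eqref{eq:task}.

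I would close with two remarks:
\begin{itemize}
\item The hypothesis that $\cS$ is fixed matters. Under tracking-score selection, $\cS$ depends on $\hat\Sigma$, so off-block changes could alter the support; the proposition deliberately holds selection detached.
\item The argument extends beyond $\bar\cS\times\bar\cS$ to any perturbation preserving $\hat\Sigma_{\cS\cS}$ and $\hat\Sigma_{\cS,:}\bw_{\mathrm{idx}}$.
\end{itemize}
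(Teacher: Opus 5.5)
Your proof is correct and follows essentially the same argument as the paper: a perturbation supported on $\bar\cS\times\bar\cS$ leaves both QP inputs $\hat\Sigma_{\cS\cS}$ and $\hat\Sigma_{\cS,:}\bw_{\mathrm{idx}}$ unchanged, so the objective, feasible set and decision (up to a common tie-breaking rule) are identical, and any evaluation under the fixed true covariance depends only on that decision. Your added requirement that the tie-breaking map depend only on the QP data, not on the full $\hat\Sigma$, makes explicit what the paper's parenthetical leaves implicit.
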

\begin{proof}
Such changes preserve $Q=\hat\Sigma_{\cS\cS}$ and $b=\hat\Sigma_{\cS,:}w_{\rm idx}$, so they preserve the entire QP objective and feasible set. They therefore preserve its decision and any fixed evaluation of that decision.
\end{proof}

A local quantitative bound additionally requires stability assumptions. Let $Q=\Sigma_{\cS\cS}\succ0$ denote the true block, with $\lambda=\lambda_{\min}(Q)>0$. Write $\hat Q=Q+\Delta_Q$, $\hat b=b+\Delta_b$. Suppose $\|\Delta_Q\|_{\rm op}\leq\lambda/2$ and both solutions have the same active set. With $\delta w=\hat w-w^*$, subtracting the two KKT stationarity equations and multiplying by $\delta w$ gives
\[
\delta w^\top Q\delta w
 =\delta w^\top(\Delta_b-\Delta_Q\hat w),
\]
because $\mathbf1^\top\delta w=0$. Hence
\[
\|\delta w\|\leq\frac{2}{\lambda}
 (\|\Delta_Q\|_{\rm op}\|w^*\|+\|\Delta_b\|).
\]
For the tracking-variance objective $V(w)=(w-w_{\rm idx})^\top\Sigma(w-w_{\rm idx})$, KKT stationarity on the common free face makes the linear term in $V(\hat w)-V(w^*)$ exactly zero. Consequently,
\[
0\leq V(\hat w)-V(w^*)\leq
 \frac{4\|\Sigma\|_{\rm op}}{\lambda^2}
 (\|\Delta_Q\|_{\rm op}\|w^*\|+\|\Delta_b\|)^2.
\]
This is a local bound on tracking variance, not a global bound on annualized tracking-error standard deviation across active-set changes. The two perturbations involve only the selected block and index cross-covariance; they do not depend on estimated off-support entries.

\subsection{Selection-Robustness Motivation and Its Limits}
\label{app:dro_proof}
For any covariance error matrix $E$ and selected subset $\cS'$, $\|E_{\cS'\cS'}\|_{\mathrm{op}}\leq\|E\|_F$ and $\|E_{\cS',:}\bw_{\mathrm{idx}}\|\leq\|E\|_F\|\bw_{\mathrm{idx}}\|$. Thus full-matrix error control also controls each selected block and index cross-covariance term. Combined with a local QP perturbation bound under a stable active set, this supplies a conservative reason to regularize covariance predictions when selection changes.

An earlier argument inferred a worst-case coefficient $O(r^2/N^2)$ from the fraction of entries exposed by $r$ swaps. That inference is not valid without a bound on error concentration: one newly selected row can contain nearly all error energy. Nor can the oracle objective for one subset be replaced by that for another without a separate bound. We therefore make no minimax-equivalence or universal coefficient claim. BD-DFL is the empirical regularized objective in Eq.~\eqref{eq:bddfl_method}; its reported gains do not require that derivation. A theorem linking swap radius to a sharp regularization weight remains open.

\FloatBarrier
\section{Protocols, Provenance, and Computational Cost}
\label{app:protocol_group}

\subsection{Evidence Provenance and Reproduction Scope}
\label{app:provenance}
The source bundle compiles the paper and includes its figures. A separate reproduction bundle contains saved result files, figure generators, checks, and a snapshot of the local training source. The code snapshot does not reconstruct every historical run. The new controlled extension comprises 480 candidate fits and 160 selected models; the sensitivity study adds 720 optimization trajectories and 480 selected models. The fixed-class study adds 720 candidate fits and 240 selected models, bringing the new synthetic total to 1,920 fits and 880 selected models. The scalar financial target control saves 70 annual portfolio evaluations. The matched neural forward-target study adds 120 candidate trajectories and 60 selected models, with identical per-loss tuning budgets. These new financial controls use separate protocols and do not rerun the historical neural experiments. Historical spatial training was not rerun.

\paragraph{Current equity comparisons.}
The primary neural files identify the corrected rebalance harness. Each contains both a common naive baseline and a trained model, so selection must use model, loss, cardinality and fold jointly. For $K=20$, there are exactly nine neural rows per loss and 2,096 test-return observations. Recomputed sample-standard-deviation TE matches the saved metric in every selected fold. Earlier risk aggregation mixed in naive rows; the present risk table excludes them. Saved configurations and evaluation windows should be checked before combining other equity runs.

\paragraph{Evaluation-harness corrections.}
An earlier implementation held each fold's final rebalance to the end of the dataset instead of its test window. It also used calendar month-end labels: weekend month ends could be skipped, and labels beyond a window could extend holdings outside it. In validation, such extensions could reach test dates. The corrected harness uses the last trading day within each window and terminates holdings at that window's end. Current equity comparisons use the saved corrected runs; the archived diagnostic record does not.

These changes affected interpretation as well as magnitude. The corrected $N=451$, $K=10$ comparison favors pure DFL, contrary to an earlier overfitting description. For the nine-fold neural comparison at $K=20$, the corrected mean TE reduction is $16.5\%$. Source files, test windows and model identities are retained so that these comparisons can be distinguished from older evaluations.

\paragraph{Archived predictions.}
The 33-case diagnostic record predates the corrected equity evaluation. It contains 31 committed predictions with 27 correct outcomes and two abstentions. Timestamps and calibration separation have not been independently certified in this release. Threshold sensitivity and measured-$h$ re-scoring are post-hoc analyses and cannot be used as independent prospective validation.

\paragraph{Spatial and synthetic evidence.}
The synthetic heterogeneity sweep contains 18 settings with five folds each. NOAA and EPA provide only one or two evaluation windows per configuration, limiting inference about cross-domain replication. The listed sensor experiment has ten configurations with five trials. Hypothetical facility-location and resource-allocation applications are not additional experiments.

\paragraph{Combinatorial evidence.}
The custom shortest-path and PyEPO experiments have their own seeds, generators and training protocols. The custom shortest-path sweep measures input Jacobians. PyEPO rank estimates use a sampled stacked parameter Jacobian, whereas the equity bound experiment uses pointwise parameter Jacobians. The controlled extension uses all-output parameter probes. Full-capacity hyperparameter selection can favor that regime; unstable and negative cases are reported rather than removed by the rank argument.

\paragraph{What the checks establish.}
The numerical checkers compare selected table entries and figures with saved files. They do not certify data-provider revisions, every historical training configuration, independent market sampling, or the correctness of all scientific interpretations. The geometry script tests the stated identities on generated matrices and explicit counterexamples. The manuscript's mathematical arguments and experiment-design limitations remain necessary for interpretation.

\paragraph{Training-target limitation.}
The inspected equity trainer uses a trailing covariance reconstruction target for its MSE branch, while the task branch uses a forward return window. Consequently, these experiments compare specific implemented objectives rather than an optimal future-covariance forecasting baseline. For scalar shrinkage, reconstructing its own sample-covariance input is especially restrictive. Separate validation-tuned shrinkage and statistical baselines partly broaden the comparison, but a matched future-target forecasting experiment is still needed.

\subsection{Training Details}
\label{app:training}

\begin{algorithm}[htbp]
\caption{Accumulated-gradient training for sparse index tracking}
\label{alg:dfl}
\begin{algorithmic}[1]
\REQUIRE Predictor $f_\theta$, training dates $\mathcal D$, index weights, $K$, accumulation size $B=4$
\FOR{each epoch}
  \STATE Zero accumulated parameter gradients; set $c=0$ and $w_{\mathrm{prev}}$ unset
  \FOR{each valid date $t\in\mathcal D$}
    \STATE Form trailing features and covariance target using data before $t$
    \STATE Predict $\hat\Sigma_t=f_\theta(x_t)$; select a detached subset $\mathcal S_t$
    \STATE Solve the differentiable QP for $w_t^*$; record and skip solver failures
    \STATE Evaluate Eq.~\eqref{eq:task}; add Eq.~\eqref{eq:bddfl_method}'s penalty when $\beta>0$
    \STATE Accumulate $\nabla_\theta(\mathcal L_t/\min(B,|\mathcal D|))$; set $c\leftarrow c+1$
    \STATE Set $w_{\mathrm{prev}}\leftarrow\operatorname{stopgrad}(w_t^*)$
    \IF{$c$ is a multiple of $B$}
      \STATE Clip accumulated gradients; take an Adam step; zero gradients
    \ENDIF
  \ENDFOR
  \STATE Flush a nonempty partial accumulation with the same scaling
  \STATE Evaluate validation TE, update early stopping, and advance the scheduler
\ENDFOR
\end{algorithmic}
\end{algorithm}

The MSE-only branch uses its reconstruction loss without solving the QP during the backward pass. The pseudocode states the single-step task-training path and the accumulation convention in the inspected source. Partial accumulations retain the same denominator; they are not reweighted to an exact partial-batch mean. Previous holdings are detached, so the basic path does not differentiate through the full sequence of past decisions. Experiment-specific runners override some settings; saved configurations take precedence over the defaults below.

The default equity trainer uses Adam (lr $= 10^{-3}$, weight decay $10^{-4}$), cosine annealing, gradient clipping at norm 1.0, and gradient accumulation over 4 rebalancing dates. Early stopping with patience 10 monitors validation tracking error. Training uses weekly rebalancing; evaluation uses monthly rebalancing. The QP is solved via cvxpylayers \citep{agrawal2019differentiable} for backward passes and SCS for test-time inference. DFL training takes ${\sim}10$ min/fold on Apple M1 ($N{=}100$, 50 epochs).

\subsection{Computational Cost Comparison}
\label{app:cost}

Table~\ref{tab:cost} compares measured wall-clock cost and tracking error in one $N=100$, $K=20$ setting. The reported signed-rank comparisons do not detect a difference from POET; they are not equivalence tests. Timing and accuracy must be assessed jointly for the intended implementation, hardware and tolerance.

\begin{table}[h]
\centering
\caption{Computational cost per fold ($N{=}100$, $K{=}20$). Tracking errors are
regenerated from the result files by \texttt{scripts/emit\_table\_cost.py}; an
earlier version of this table carried pre-correction values (0.0285--0.0292) that
disagreed with Table~\ref{tab:exp3} for the same quantities. Timings are wall-clock for one fold, measured by
\texttt{scripts/measure\_training\_cost.py} and omitted where not measured. The
POET figure is its \emph{full} per-fold cost---the $3{\times}3$ validation grid over
(factors, threshold) plus a refit and QP solve at every rebalance date, 76 fits
in all---not a single estimate. Measured on one machine (Apple MPS); both absolute times and ratios depend on implementation and hardware.}
\label{tab:cost}

\fontsize{9}{10.8}\selectfont
\setlength{\tabcolsep}{5pt}
\renewcommand{\arraystretch}{1.08}
\publicationtable{%
\begin{tabular}{@{}lccl@{}}
\toprule
Method & Params & Time & TE \\
\midrule
POET + QP     & 0 (tuned) & 0.23\,s  & \textbf{0.0267} \\
GLasso + QP   & 0 (CV)    & ---      & 0.0269 \\
ValTuned      & 1         & ---      & 0.0269 \\
SPO+          & 1         & ---      & 0.0269 \\
LODL          & 12        & ---      & 0.0270 \\
Shrink-DFL    & 1         & 13\,min  & 0.0268 \\
Struct-DFL    & 12        & 3\,min   & 0.0267 \\
\bottomrule
\end{tabular}}
\end{table}

POET's reported per-fold cost includes the validation grid and refits. The saved timing record gives a roughly $3{,}329\times$ ratio for one-parameter DFL and $804\times$ for structured DFL; the displayed times are rounded. These measurements motivate a cost comparison, but do not imply that spending additional compute can never help a one-parameter predictor or that the ratios transfer to other implementations.

\FloatBarrier
\section{Equity Comparisons and Robustness}
\label{app:equity_group}
\begin{figure}[t]
\centering
\includegraphics[width=\figurewidth]{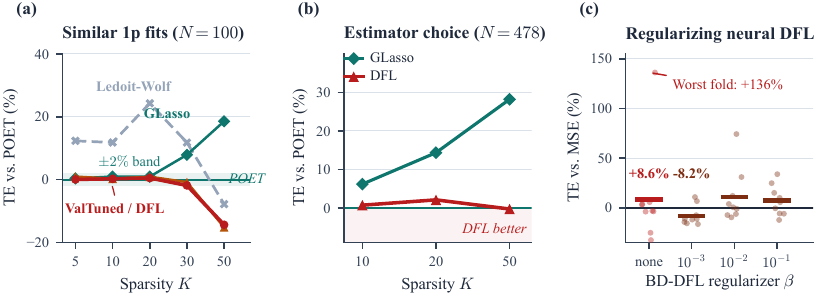}
\caption{\textbf{Loss choice, model capacity, and selection stability.} (a)--(b) Tracking error relative to POET; (c) relative to MSE (lower is better). At $N{=}100$, the three fitted one-parameter estimators improve on Ledoit--Wolf but meet the empirical equivalence criterion in Table~\ref{tab:exp3}. At $N{=}478$, estimator choice changes the ordering. Under dynamic selection, BD-DFL reduces the extreme errors of unregularized neural DFL; dots show individual folds.}
\label{fig:financial_overview}
\end{figure}

\subsection{DFL Gains across Sparsity Levels}
\label{app:sparsity}

We sweep the cardinality budget $K \in \{5, 10, 20, 30, 50\}$ using the neural model under hard-$K$ selection.

\begin{table}[h]
\centering
\caption{Neural model: DFL vs.\ MSE across sparsity levels. 95\% CI from 10{,}000-sample bootstrap; $p$-values from Wilcoxon signed-rank.}
\label{tab:exp1}

\fontsize{9}{10.8}\selectfont
\setlength{\tabcolsep}{5pt}
\renewcommand{\arraystretch}{1.08}
\publicationtable{%
\begin{tabular}{@{}ccccccc@{}}
\toprule
$K$ & MSE TE & DFL TE & Gain & 95\% CI & $p$-value & Wins \\
\midrule
5  & 0.0859 & 0.0784 & $-8.8\%$  & $[-13.9, -3.6]$ & 0.014 & 8/9 \\
10 & 0.0510 & 0.0462 & $-9.4\%$  & $[-13.1, -5.1]$ & 0.004 & 8/9 \\
20 & 0.0331 & 0.0276 & $-16.5\%$  & $[-26.1, -7.1]$ & 0.002 & 9/9 \\
30 & 0.0208 & 0.0193 & $-7.0\%$  & $[-15.1, -0.7]$ & 0.004 & 8/9 \\
50 & 0.0096 & 0.0096 & $-0.2\%$  & $[-0.5, +0.1]$ & 0.213 & 5/9 \\
\bottomrule
\end{tabular}}
\end{table}

Table~\ref{tab:exp1} and Figure~\ref{fig:main}(a) show neural DFL reductions of 7.0--16.5\% at the tested $K\leq30$, with one-sided Wilcoxon $p\leq0.014$. At $K=50$, the estimated reduction is 0.2\% ($p=0.213$, 5/9 wins). The largest reduction occurs at $K=20$ rather than the smallest portfolio. These finite comparisons support a dependence on the experimental setting, not a universal monotone law in $K/N$.

\paragraph{Tracking-risk summaries.}
For the same nine neural-model folds, worst-fold TE decreases from 0.0564 to 0.0360 and the standard deviation of fold TE decreases from 0.0098 to 0.0045. Recomputed daily-tail loss and arithmetic tracking drawdown decrease by 14.3\% and 15.5\%, respectively (Appendix~\ref{app:risk}). These are descriptive summaries of the evaluated folds, not prospective risk guarantees.

\begin{figure}[h]
\centering
\includegraphics[width=\figurewidth]{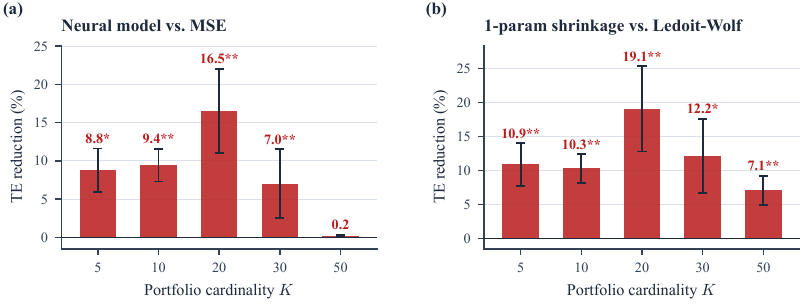}
\caption{\textbf{Tracking-error reduction over nine paired folds.} Bars show $100(1-\overline{\mathrm{TE}}_{\rm DFL}/\overline{\mathrm{TE}}_{\rm base})$; whiskers are $\pm1$ paired delta-method SE of this ratio, a descriptive summary across folds. Unadjusted one-sided Wilcoxon: $^{*}p<0.05$, $^{**}p<0.01$, $^{***}p<0.001$. (a) Neural DFL vs. MSE: reductions of 7.0--16.5\% at $K\leq30$, and 0.2\% at $K=50$. (b) Shrinkage DFL vs. Ledoit--Wolf: reductions of 7.1--19.1\%; comparisons with other fitted one-parameter estimators are in Table~\ref{tab:exp3}.}
\label{fig:main}
\end{figure}

Figure~\ref{fig:cumulative} shows the corrected test-return series. Each 63-day window lies within one fold; windows crossing train/test boundaries are excluded.

\begin{figure}[h]
\centering
\includegraphics[width=\figurewidth]{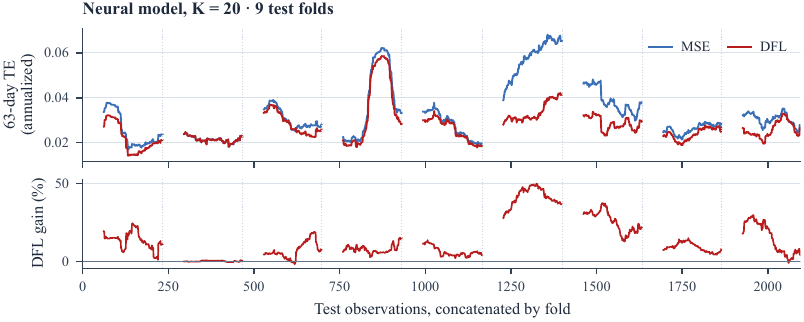}
\caption{\textbf{Corrected rolling tracking error for the neural model at $K{=}20$: 2,096 observations across nine test folds.} Top: annualized 63-day standard deviation of tracking returns. Bottom: relative TE reduction from DFL. Dotted lines mark fold boundaries; gaps exclude each fold's first 62 observations.}
\label{fig:cumulative}
\end{figure}

\subsection{Per-Fold Results}
\label{app:folds}

Table~\ref{tab:folds_exp1} reports per-fold tracking errors for Experiment~1 (neural model, $K{=}20$), demonstrating consistency across diverse market regimes.

\begin{table}[h]
\centering
\caption{Per-fold TE for neural model at $K{=}20$ (Experiment~1). DFL wins 9/9 folds, with the largest gains concentrated in the high-volatility regimes of 2021--2023 ($-40.0\%$ and $-26.6\%$) and the smallest in the calmest year, 2017--2018 ($-0.3\%$).}
\label{tab:folds_exp1}

\fontsize{9}{10.8}\selectfont
\setlength{\tabcolsep}{5pt}
\renewcommand{\arraystretch}{1.08}
\publicationtable{%
\begin{tabular}{@{}clccc@{}}
\toprule
Fold & Test period & MSE TE & DFL TE & Gain \\
\midrule
0 & 2016--2017 & 0.0260 & 0.0223 & $-14.3\%$ \\
1 & 2017--2018 & 0.0224 & 0.0223 & $-0.3\%$ \\
2 & 2018--2019 & 0.0306 & 0.0287 & $-6.2\%$ \\
3 & 2019--2020 & 0.0389 & 0.0360 & $-7.5\%$ \\
4 & 2020--2021 & 0.0279 & 0.0260 & $-6.7\%$ \\
5 & 2021--2022 & 0.0564 & 0.0339 & $-40.0\%$ \\
6 & 2022--2023 & 0.0394 & 0.0289 & $-26.6\%$ \\
7 & 2023--2024 & 0.0264 & 0.0242 & $-8.4\%$ \\
8 & 2024--2025 & 0.0299 & 0.0263 & $-11.8\%$ \\
\bottomrule
\end{tabular}}
\end{table}

\subsection{Sparsity vs.\ Model Capacity Interaction}
\label{app:interaction}

Our central hypothesis predicted that DFL gains depend on both sparsity ($K$) and model misspecification ($K_f$). To test this, we run a full $K \times K_f$ interaction grid: factor model with $K_f \in \{3, 5, 10, 20\}$ and $K \in \{5, 10, 20, 50\}$, yielding 16 cells $\times$ 2 losses $\times$ 9 folds = 288 experimental runs.

\begin{table}[h]
\centering
\caption{DFL gain (\%) over MSE: full $K \times K_f$ interaction grid (factor model, nine paired folds per cell). Unadjusted one-sided Wilcoxon: $^{*}p<0.05$, $^{**}p<0.01$, $^{***}p<0.001$, matching Figure~\ref{fig:grid}.}
\label{tab:exp2}

\fontsize{9}{10.8}\selectfont
\setlength{\tabcolsep}{5pt}
\renewcommand{\arraystretch}{1.08}
\publicationtable{%
\begin{tabular}{@{}c|cccc@{}}
\toprule
$K_f \backslash K$ & 5 & 10 & 20 & 50 \\
\midrule
3  & $-8.8\%^{**}$ & $-10.2\%^{**}$ & $-15.9\%^{**}$ & $-0.4\%$ \\
5  & $-8.6\%^{**}$ & $-9.8\%^{**}$ & $-14.9\%^{**}$ & $-2.1\%$ \\
10 & $-9.2\%^{**}$ & $-10.2\%^{**}$ & $-15.1\%^{**}$ & $-2.5\%$ \\
20 & $-8.6\%^{*}$ & $-9.7\%^{**}$ & $-13.9\%^{**}$ & $-1.1\%$ \\
\bottomrule
\end{tabular}}
\end{table}

Table~\ref{tab:exp2} and Figure~\ref{fig:grid} show that relative TE changes vary more across portfolio cardinalities $K$ than across factor counts $K_f$ in this experiment. At $K=20$, DFL reduces TE by $13.9$--$15.9\%$ across factor counts; at $K=50$, reductions are $0.4$--$2.5\%$. Within each fixed-$K$ column, the spread across factor counts is at most $2.1$ percentage points.

This pattern is consistent with a role for decision sparsity in the difference between the two objectives. The grid alone does not isolate sparsity from predictor capacity or establish a causal mechanism. Similar relative gains across factor counts also do not imply similar absolute MSE baselines. We therefore interpret the grid as an empirical comparison alongside the conditional geometric analysis in the main text.

\begin{figure}[h]
\centering
\includegraphics[width=.74\figurewidth]{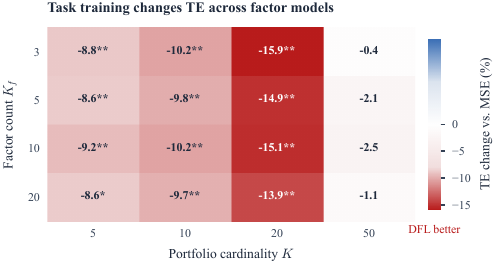}
\caption{\textbf{Task training across factor counts and portfolio cardinalities (nine paired folds per cell).} Values are relative TE changes, $100(\overline{\mathrm{TE}}_{\rm DFL}/\overline{\mathrm{TE}}_{\rm MSE}-1)$; negative values indicate improvement. In this grid, changes vary more across $K$ than across $K_f$, and improvements are smaller at $K=50$. Unadjusted one-sided Wilcoxon: $^{*}p<0.05$, $^{**}p<0.01$, $^{***}p<0.001$. The heatmap does not identify a causal effect of either axis.}
\label{fig:grid}
\end{figure}

\subsection{DFL with Richer Model Classes}
\label{app:richer}

We introduce two richer shrinkage models: (i) \textbf{Conditional shrinkage} ($\sim$385 params): $\alpha_t = \text{MLP}(\mathbf{z}_t)$ with market-level features, and (ii) \textbf{Structured shrinkage} (12 params): $\hat\Sigma = (1-\alpha)S + \alpha T$ with learnable per-sector variance targets. For baseline comparison, we use MSE training and \emph{random search} (200 trials for conditional, 500 for structured).

\begin{table}[h]
\centering
\caption{Richer model classes: MSE vs.\ DFL, 9 folds, one-sided Wilcoxon ($^{*}p{<}0.05$, $^{**}p{<}0.01$). The random-search rows of an earlier version are withheld pending a re-run: those runs stored no return series, so they cannot be placed on the corrected evaluation window that every row here uses.}
\label{tab:exp6}

\fontsize{9}{10.8}\selectfont
\setlength{\tabcolsep}{5pt}
\renewcommand{\arraystretch}{1.08}
\publicationtable{%
\begin{tabular}{@{}llcccc@{}}
\toprule
$K$ & Method & TE & TO & $p$ & Wins \\
\midrule
\multirow{2}{*}{10}
& Cond.\ Shrinkage + MSE          & 0.0462 & 0.034 & --- & --- \\
& \textbf{Cond.\ Shrinkage + DFL} & \textbf{0.0459} & 0.040 & $0.010^{**}$ & 8/9 \\
\cmidrule{2-6}
& Struct.\ Shrinkage + MSE         & 0.0459 & 0.041 & --- & --- \\
& \textbf{Struct.\ Shrinkage + DFL} & \textbf{0.0453} & 0.043 & $0.014^{*}$ & 8/9 \\
\midrule
\multirow{2}{*}{20}
& Cond.\ Shrinkage + MSE          & 0.0273 & 0.036 & --- & --- \\
& Cond.\ Shrinkage + DFL          & 0.0270 & 0.040 & $0.020^{*}$ & 7/9 \\
\cmidrule{2-6}
& Struct.\ Shrinkage + MSE         & 0.0270 & 0.045 & --- & --- \\
& \textbf{Struct.\ Shrinkage + DFL} & \textbf{0.0267} & 0.046 & $0.010^{**}$ & 8/9 \\
\bottomrule
\end{tabular}}
\end{table}

Table~\ref{tab:exp6} shows that \textbf{DFL helps both richer model classes, and helps the smaller one more}. The 12-parameter structured model gains $1.2\%$ at $K{=}10$ and $1.6\%$ at $K{=}20$ ($p = 0.002$, 9/9 folds), while the 385-parameter conditional model gains only $0.7\%$ and $1.0\%$. Parameter count alone does not predict the relative gain. The conditional model also trains less stably (validation-to-training loss ratios range from $0.64$ to $3.42$ across folds). These results favor the structured model in this comparison, but do not isolate the reason: input-dependent batch geometry, optimization and model specification can all affect the ordering.

\begin{figure}[h]
\centering
\includegraphics[width=.80\figurewidth]{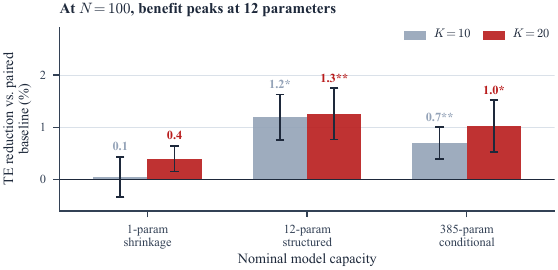}
\caption{\textbf{DFL gains for three model classes at $N=100$, using nine paired folds.} Baselines are validation-tuned shrinkage (one parameter) and MSE training (structured and conditional models). Bars show relative reduction in mean TE; whiskers are $\pm1$ paired delta-method SE. Unadjusted one-sided Wilcoxon: $^{*}p<0.05$, $^{**}p<0.01$, $^{***}p<0.001$. The 12-parameter model has the largest observed reduction here; the ordering differs at $N=478$ (Table~\ref{tab:n500}). Architecture changes alongside parameter count.}
\label{fig:model_complexity}
\end{figure}

\subsection{Cross-Market Dim-1 Validation}
\label{app:ftse}

\begin{table}[h]
\centering
\caption{Cross-market dim-1 validation: the 1-parameter shrinkage sweep across three sparsity levels, re-run under the repaired harness. DFL gain stays at or below $1.76\%$ across 5 markets, 3 sparsity levels ($K/N$ from $0.02$ to $0.43$) and 120 paired folds, and is unordered in $K/N$, an empirical observation separate from Proposition~\ref{prop:dim1} and Corollary~\ref{cor:vacuous}. Even where gains are statistically significant (e.g., Hang Seng $K{=}10$, $p{=}0.012$), their value depends on the application tolerance and computational budget.}
\label{tab:crossmarket_full}

\fontsize{9}{10.8}\selectfont
\setlength{\tabcolsep}{5pt}
\renewcommand{\arraystretch}{1.08}
\publicationtable{%
\begin{tabular}{@{}lrrrrrr@{}}
\toprule
Market & $N$ & $K$ & MSE TE & DFL TE & Shrink $\Delta\%$ & $p$ \\
\midrule
\multirow{3}{*}{FTSE 100} & 91 & 5 & 0.0896 & 0.0896 & $-0.02$ & 0.770 \\
 & 91 & 10 & 0.0516 & 0.0516 & $+0.01$ & 0.473 \\
 & 91 & 20 & 0.0261 & 0.0260 & $+0.17$ & 0.039 \\
\multirow{3}{*}{Nikkei 225} & 218 & 5 & 0.1106 & 0.1101 & $+0.46$ & 0.125 \\
 & 218 & 10 & 0.0823 & 0.0821 & $+0.17$ & 0.039 \\
 & 218 & 20 & 0.0511 & 0.0511 & $+0.02$ & 0.578 \\
\multirow{3}{*}{Euro Stoxx 50} & 47 & 5 & 0.0731 & 0.0731 & $+0.06$ & 0.074 \\
 & 47 & 10 & 0.0331 & 0.0330 & $+0.29$ & 0.055 \\
 & 47 & 20 & 0.0140 & 0.0140 & $+0.08$ & 0.191 \\
\multirow{3}{*}{ASX 200} & 159 & 5 & 0.0778 & 0.0769 & $+1.21$ & 0.004 \\
 & 159 & 10 & 0.0441 & 0.0437 & $+0.98$ & 0.004 \\
 & 159 & 20 & 0.0274 & 0.0273 & $+0.38$ & 0.004 \\
\multirow{3}{*}{Hang Seng} & 62 & 5 & 0.0803 & 0.0793 & $+1.28$ & 0.027 \\
 & 62 & 10 & 0.0441 & 0.0434 & $+1.76$ & 0.012 \\
 & 62 & 20 & 0.0223 & 0.0223 & $+0.42$ & 0.055 \\
\bottomrule
\end{tabular}}
\end{table}

The expanded cross-market results reveal a consistent pattern: shrinkage (1p) DFL gains stay at or below $1.76\%$ across all 15 market-$K$ combinations (120 paired folds), at $K/N$ from $0.02$ to $0.43$; only three clear $1\%$. Even where Wilcoxon $p < 0.05$ (ASX $K{=}10$: $p{=}0.004$; Hang Seng $K{=}10$: $p{=}0.012$), the absolute gains ($+0.98\%$, $+1.76\%$) are small in relative terms; the $1.76\%$ case exceeds the diagnostic's 1\% cutoff. The largest gain (Hang Seng $+1.76\%$) is not the largest $K/N$ in the sweep: Euro Stoxx at $K{=}20$ reaches $K/N{=}0.43$ and gains $+0.08\%$. Within this table the gain is unordered in $K/N$, which is not implied by Corollary~\ref{cor:vacuous}. The runtime comparison is confined to Appendix~\ref{app:cost}; no market-wide cost-benefit threshold is established.

\paragraph{Shared-run comparison with the structured model.}
Table~\ref{tab:crossmarket_4mkt} reports a later, self-contained run over four of these markets that trains \emph{both} the 1-parameter and the 12-parameter model on the current data pull, at $K \in \{10, 20\}$, 8 folds each. It supplies the structured-model column of Table~\ref{tab:crossmarket}, and the contrast within each row is the point: on the same folds and the same data, the two model classes have different relative changes on the same folds; the ordering and magnitude vary across markets and $K$.

We flag one thing an earlier draft claimed and this one does not. When both tables ran on different data pulls, their shrinkage columns constituted an independent replication, and we reported it as such. Re-running everything under the repaired harness put both on the current pull, the same splitter and the same per-fold seeds, and their overlapping cells ($K \in \{10,20\}$, four markets) are now identical to eight decimal places. That is a determinism check, not independent evidence, and we no longer describe it as the latter. Table~\ref{tab:crossmarket_full}'s remaining independent contribution is its $K{=}5$ column and its fifth market.

\begin{table}[h]
\centering
\caption{Four-market run training both model classes on one data pull (8 folds each). The shrinkage columns share a configuration with Table~\ref{tab:crossmarket_full} and match it exactly where they overlap; the structured columns supply Table~\ref{tab:crossmarket}. TE is the MSE-trained baseline; $\Delta\%$ is the DFL gain over it. $^{*}p{<}0.05$, $^{**}p{<}0.01$, one-sided Wilcoxon.}
\label{tab:crossmarket_4mkt}

\fontsize{9}{10.8}\selectfont
\setlength{\tabcolsep}{5pt}
\renewcommand{\arraystretch}{1.08}
\publicationtable{%
\begin{tabular}{@{}lrrrrrrrr@{}}
\toprule
& & & \multicolumn{3}{c}{Shrinkage (1p)} & \multicolumn{3}{c}{Structured (12p)} \\
\cmidrule(lr){4-6} \cmidrule(lr){7-9}
Market & $N$ & $K$ & TE & $\Delta\%$ & $p$ & TE & $\Delta\%$ & $p$ \\
\midrule
\multirow{2}{*}{Nikkei 225} & 218 & 10 & 0.0823 & $+0.17$ & $0.039^{*}$ & 0.0826 & $+0.87$ & $0.027^{*}$ \\
 & & 20 & 0.0511 & $+0.02$ & 0.578 & 0.0511 & $+0.20$ & 0.055 \\
\midrule
\multirow{2}{*}{Euro Stoxx 50} & 47 & 10 & 0.0331 & $+0.29$ & 0.055 & 0.0329 & $\mathbf{+0.95}$ & $0.008^{**}$ \\
 & & 20 & 0.0140 & $+0.08$ & 0.191 & 0.0141 & $+0.15$ & 0.230 \\
\midrule
\multirow{2}{*}{ASX 200} & 159 & 10 & 0.0441 & $+0.98$ & $0.004^{**}$ & 0.0436 & $\mathbf{+2.95}$ & $0.004^{**}$ \\
 & & 20 & 0.0274 & $+0.38$ & $0.004^{**}$ & 0.0273 & $\mathbf{+1.27}$ & $0.004^{**}$ \\
\midrule
\multirow{2}{*}{Hang Seng} & 62 & 10 & 0.0441 & $+1.76$ & $0.012^{*}$ & 0.0433 & $\mathbf{+2.77}$ & $0.008^{**}$ \\
 & & 20 & 0.0223 & $+0.42$ & 0.055 & 0.0223 & $\mathbf{+0.57}$ & $0.004^{**}$ \\
\bottomrule
\end{tabular}}
\end{table}

\subsection{Cross-Benchmark Validation (Russell 2000 / Tech Sector)}
\label{app:crossbench}

To test whether DFL's $K/N$ theory generalizes beyond the S\&P~500, we evaluate the 1-parameter shrinkage model on two additional benchmarks: Russell~2000 (proxied by S\&P~600, $N{=}100$, tracked against IWM) and S\&P~500 Information Technology sector ($N{=}48$, tracked against XLK). Table~\ref{tab:alt} reports results.

\begin{table}[h]
\centering
\caption{Cross-benchmark validation (1-parameter shrinkage). Every gain is under $0.3\%$ across a small-cap index and a single sector, including at $K/N{=}0.42$; these rows extend the observed small-gain pattern to additional tested universes.}
\label{tab:alt}

\fontsize{9}{10.8}\selectfont
\setlength{\tabcolsep}{5pt}
\renewcommand{\arraystretch}{1.08}
\publicationtable{%
\begin{tabular}{@{}llrccc@{}}
\toprule
Benchmark & $K$ & $K/N$ & MSE TE & DFL TE & Gain \\
\midrule
S\&P~500 ($N{=}100$) & 20 & 0.206 & 0.0269 & 0.0268 & $-0.3\%$ \\
Russell~2000 ($N{=}100$) & 10 & 0.100 & 0.0797 & 0.0797 & $-0.0\%$ \\
Russell~2000 ($N{=}100$) & 20 & 0.200 & 0.0492 & 0.0491 & $-0.2\%$ \\
Tech Sector ($N{=}48$) & 5  & 0.104 & 0.0874 & 0.0873 & $-0.1\%$ \\
Tech Sector ($N{=}48$) & 10 & 0.208 & 0.0524 & 0.0522 & $-0.3\%$ \\
Tech Sector ($N{=}48$) & 20 & 0.417 & 0.0222 & 0.0221 & $-0.2\%$ \\
\bottomrule
\end{tabular}}
\end{table}

Across three indices spanning large-cap, small-cap and sector portfolios, the 1-parameter shrinkage model shows near-zero DFL advantage---every gain under $0.3\%$, including the Tech Sector at $K/N{=}0.42$---consistent with the low-capacity pattern observed here; collinearity alone does not imply redundancy.

\subsection{BD-DFL Detailed Results}
\label{app:bddfl_detail}

\begin{table}[h]
\centering
\caption{Dynamic selection at $N{=}100$ (nine paired folds). Entries below the baseline are relative changes in mean TE; negative is better. Best $\beta$ denotes the best tested setting, an exploratory comparison.}
\label{tab:bddfl_dynamic_audit}

\fontsize{9}{10.8}\selectfont
\setlength{\tabcolsep}{5pt}
\renewcommand{\arraystretch}{1.08}
\publicationtable{%
\begin{tabular}{@{}lrrrr@{}}
\toprule
 & \multicolumn{2}{c}{$K{=}10$} & \multicolumn{2}{c}{$K{=}20$} \\
\cmidrule(lr){2-3}\cmidrule(lr){4-5}
Method & Shrink & Neural & Shrink & Neural \\
\midrule
MSE baseline & 0.0629 & 0.0512 & 0.0377 & 0.0327 \\
Pure DFL & $-5.4\%$ & $+8.6\%$ & $-0.9\%$ & $+6.7\%$ \\
BD-DFL (best tested) & $-10.3\%$ & $-8.2\%$ & $-7.0\%$ & $-11.6\%$ \\
\bottomrule\end{tabular}}\end{table}

\begin{table}[h]
\centering
\caption{Dynamic selection at $N{=}478$ (five paired folds). MSE TE / pure-DFL change / best-tested BD-DFL change. Best-tested comparisons are exploratory.}
\label{tab:bddfl_n478_audit}

\fontsize{9}{10.8}\selectfont
\setlength{\tabcolsep}{5pt}
\renewcommand{\arraystretch}{1.08}
\publicationtable{%
\begin{tabular}{@{}lrr@{}}\toprule
Model & $K{=}10$ & $K{=}20$ \\ \midrule
Shrinkage & --- & 0.0586 / $-0.8\%$ / $-8.3\%$ \\
Structured & --- & 0.0555 / $-2.4\%$ / $-5.5\%$ \\
Neural & 0.0698 / $+0.7\%$ / $-7.4\%$ & 0.0527 / $-0.1\%$ / $+0.5\%$ \\
\bottomrule\end{tabular}}\end{table}

\begin{table}[h]
\centering
\caption{Corrected neural BD-DFL results at $N{=}451$ (11 paired folds, dynamic selection). Changes use the ratio of fold means; $p$ is one-sided Wilcoxon against MSE.}
\label{tab:bddfl_n451}

\fontsize{9}{10.8}\selectfont
\setlength{\tabcolsep}{5pt}
\renewcommand{\arraystretch}{1.08}
\publicationtable{%
\begin{tabular}{@{}llcccc@{}}\toprule
$K$ & Method & Mean TE & vs.\ MSE & Wins & $p$ \\ \midrule
10 & MSE baseline & 0.0734 & --- & --- & --- \\
 & Pure DFL & 0.0638 & $-13.0\%$ & 8/11 & 0.0093 \\
 & BD-DFL ($\beta=0.001$) & 0.0620 & $-15.6\%$ & 9/11 & 0.0024 \\
20 & MSE baseline & 0.0516 & --- & --- & --- \\
 & Pure DFL & 0.0539 & $+4.5\%$ & 5/11 & 0.6499 \\
 & BD-DFL ($\beta=0.001$) & 0.0487 & $-5.5\%$ & 8/11 & 0.1030 \\
\bottomrule\end{tabular}}\end{table}

The corrected $N=451$ results differ from the earlier harness: unregularized DFL improves at $K=10$, while regularization adds a smaller further reduction. At $K=20$ its direction is favorable but not significant at 5\%. The choice $\beta=0.001$ is empirical; proximity to $r^2/N^2$ would not establish a minimax law.

\begin{figure}[h]
\centering
\includegraphics[width=\figurewidth]{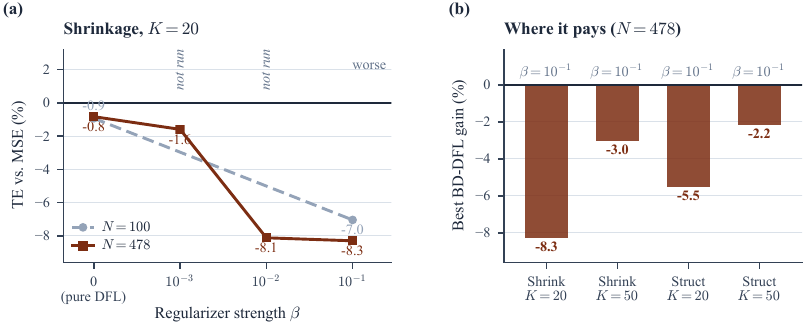}
\caption{\textbf{Regularization under dynamic selection.} (a) Shrinkage at $K=20$ across the tested coefficients; dashed segments bridge untested settings. (b) Best-tested coefficient for each model and sparsity at $N=478$. These exploratory comparisons do not establish that regularization gains grow monotonically with universe size.}
\label{fig:scaling_bddfl}
\end{figure}

\subsection{Turnover Penalty Ablation}
\label{app:turnover}

DFL methods produce higher turnover than MSE. A natural concern is that DFL's TE advantage is an artifact of excessive trading. We ablate the turnover penalty $\gamma \in \{0, 0.01, 0.1, 1.0\}$ for the neural DFL model at $K = 20$.

\begin{table}[h]
\centering
\caption{Turnover-TE Pareto frontier (Neural, $K = 20$).}
\label{tab:exp4}

\fontsize{9}{10.8}\selectfont
\setlength{\tabcolsep}{5pt}
\renewcommand{\arraystretch}{1.08}
\publicationtable{%
\begin{tabular}{@{}lcccc@{}}
\toprule
Method & $\gamma$ & TE & Turnover & vs MSE \\
\midrule
MSE baseline  & ---  & 0.0332 & 0.003 & --- \\
DFL            & 0    & 0.0275 & 0.030 & $-17.1\%$ \\
DFL            & 0.01 & 0.0290 & 0.005 & $-12.8\%$ \\
DFL            & 0.1  & 0.0313 & 0.000 & $-5.7\%$ \\
DFL            & 1.0  & 0.0332 & 0.000 & $-0.1\%$ \\
\bottomrule
\end{tabular}}
\end{table}

Table~\ref{tab:exp4} and Figure~\ref{fig:pareto} show the trade-off at the tested coefficients. At $\gamma=0.01$, mean TE improves by 12.8\%, while turnover remains higher than MSE (0.005 vs.\ 0.003). Larger penalties reduce turnover and erode the observed TE advantage. These measurements do not establish dominance at every turnover level or a universal optimal penalty.

\begin{figure}[h]
\centering
\includegraphics[width=.70\figurewidth]{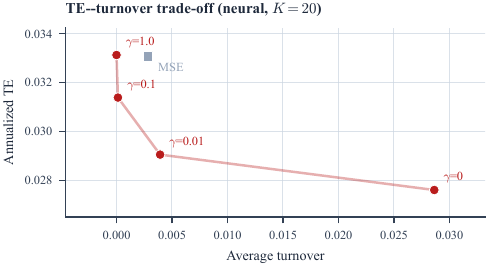}
\caption{\textbf{Tracking-error--turnover trade-off for the tested neural configurations at $K=20$.} Points are fold means; connectors join the sampled regularization coefficients and do not certify a continuous Pareto frontier. The MSE reference is shown separately. At $\gamma=0.01$, mean TE is lower than MSE while turnover is higher (0.005 vs. 0.003; Table~\ref{tab:exp4}); the preferred trade-off depends on trading costs.}
\label{fig:pareto}
\end{figure}

\subsection{Transaction Cost-Aware Optimization}
\label{app:txcost}

We augment the QP with an explicit transaction cost penalty: $\min_\bw \bw^\top \bSigma \bw - 2\bw^\top \bsigma_{\text{idx}} + c \|\bw - \bw_{\text{prev}}\|_1$, where $\bw_{\text{prev}}$ is the previous portfolio and $c$ is the cost coefficient.

\begin{table}[h]
\centering
\caption{Transaction cost experiment. Net cost = TE + $c \times$ TO. All rows use the TC-aware QP. Val-tuned grid search is omitted: no stored run for it survives, and comparing a window-corrected DFL row against an uncorrected baseline would be invalid.}
\label{tab:exp5}

\fontsize{9}{10.8}\selectfont
\setlength{\tabcolsep}{5pt}
\renewcommand{\arraystretch}{1.08}
\publicationtable{%
\begin{tabular}{@{}llcccc@{}}
\toprule
$K$ & Method & $c$ (bps) & TE & TO & Net Cost \\
\midrule
\multirow{4}{*}{10}
& Neural MSE + TC & 10  & 0.0510 & 0.003 & 0.0510 \\
& Neural DFL + TC & 10  & 0.0462 & 0.041 & \textbf{0.0462} \\
\cmidrule{2-6}
& Neural MSE + TC & 100 & 0.0510 & 0.001 & 0.0511 \\
& Neural DFL + TC & 100 & 0.0462 & 0.041 & \textbf{0.0466} \\
\midrule
\multirow{4}{*}{20}
& Neural MSE + TC & 10  & 0.0331 & 0.002 & 0.0331 \\
& Neural DFL + TC & 10  & 0.0277 & 0.026 & \textbf{0.0278} \\
\cmidrule{2-6}
& Neural MSE + TC & 100 & 0.0331 & 0.000 & 0.0331 \\
& Neural DFL + TC & 100 & 0.0277 & 0.025 & \textbf{0.0280} \\
\bottomrule
\end{tabular}}
\end{table}

Table~\ref{tab:exp5} shows that DFL's tracking-error advantage survives transaction costs. DFL trades roughly $15\times$ as much as the MSE baseline (turnover 0.030--0.046 vs.\ 0.002--0.003), which is the natural objection to any task-trained policy. But the cost of that turnover is small next to the TE it buys: even at $c = 100$ bps it adds only 4~bps of net cost at $K{=}10$ and 3~bps at $K{=}20$, against a TE reduction of 49 and 57~bps respectively. DFL therefore wins on net cost at both cost levels and both sparsity settings, and the margin narrows only slightly as $c$ rises.

\subsection{Design Choices and Robustness}
\label{app:design}

\paragraph{Hard-$K$ vs.\ $\ell_1$ relaxation.}
Since $\bw \geq 0$ and $\mathbf{1}^\top \bw = 1$ imply $\|\bw\|_1 = 1$, an $\ell_1$ penalty cannot induce sparsity in long-only portfolios. With elastic-net $\lambda\|\bw\|_2^2$ at $\lambda = 0.01$, the portfolio retains 97 of 100 stocks (TE $= 0.0065$), making fair comparison with hard-$K$ ($K{=}20$, TE $= 0.0332$) impossible. All experiments therefore use hard-$K$.

\paragraph{Fixed vs.\ dynamic stock selection.}
Dynamic selection (re-selecting stocks at each rebalance by tracking score) combined with DFL \emph{hurts} performance: TE $= 0.0346$ (vs.\ 0.0287 for fixed) with turnover $6\times$ higher. This comparison changes the selection rule as well as the resulting holdings. It suggests a stability cost for the tested dynamic rule, but does not isolate gradient feedback or establish that detachment causes the difference. Differentiation remains conditional on the selected subset in the stated protocol.

\paragraph{Market regime analysis.}
Table~\ref{tab:regime} reports regime-conditional tracking errors.

\begin{table}[h]
\centering
\caption{Regime analysis: Neural DFL vs.\ MSE at $K = 20$.}
\label{tab:regime}

\fontsize{9}{10.8}\selectfont
\setlength{\tabcolsep}{5pt}
\renewcommand{\arraystretch}{1.08}
\publicationtable{%
\begin{tabular}{@{}lcccc@{}}
\toprule
Regime & MSE TE & DFL TE & Gain & $p$-value \\
\midrule
High-volatility & 0.0343 & 0.0300 & $-12.7\%$ & 0.057 \\
Low-volatility  & 0.0308 & 0.0276 & $-10.5\%$ & 0.008 \\
\bottomrule
\end{tabular}}
\end{table}

\paragraph{Covariance window sensitivity.}
We ablate the lookback window for realized covariance estimation across $\{21, 42, 63, 126\}$ trading days (neural model, $K{=}20$, 10 rolling folds with 2-year training window). DFL's gain is stable across all windows: $+12.8\%$ for the 21-, 42- and 63-day windows and $+13.9\%$ for the 126-day window (all $p \leq 0.002$). The MSE baseline sits at TE $=0.0326$ regardless of window length---the neural model learns its own covariance representation---while DFL reaches $0.0281$--$0.0284$. The improvement persists across these archived lookback settings; these runs use a different fold protocol from the primary comparison.

\paragraph{Rebalancing frequency sensitivity.}
We ablate the test-time rebalancing frequency at weekly, biweekly and monthly intervals (neural model, $K{=}20$, 9 rolling folds; training uses weekly rebalancing in all cases). The tested rebalancing schedules give similar relative gains: $+14.9\%$ weekly (MSE $0.0325 \to$ DFL $0.0277$), $+14.4\%$ biweekly ($0.0325 \to 0.0278$) and $+14.7\%$ monthly ($0.0331 \to 0.0282$), all at $p = 0.002$ (Wilcoxon). The three gains agree to within half a percentage point, an empirical comparison within these three schedules. Turnover per rebalance increases as the interval lengthens (DFL: 0.013 weekly, 0.019 biweekly, 0.024 monthly), reflecting that each rebalance absorbs a larger drift when intervals are longer. The improvement is observed under all three tested schedules.

\paragraph{Position size constraints.}
Our main experiments impose $w_i \geq 0$ and $\sum w_i = 1$ but no upper bound. To test robustness under realistic portfolio constraints, we add $w_i \leq w_{\max}$ to the QP and re-evaluate neural DFL vs.\ MSE at $K{=}20$. With $w_{\max} = 10\%$ , DFL gain is $+12.7\%$ ($p = 0.002$, 9/9 folds); with $w_{\max} = 20\%$ it is $+14.7\%$ ($p = 0.002$, 9/9), matching the unconstrained case ($+14.7\%$) to the decimal. The tighter $w_{\max}{=}10\%$ constraint raises \emph{both} MSE and DFL tracking errors because it forces the QP away from the optimal solution, but the \emph{relative} DFL advantage is preserved. The relative improvement persists under both tested position caps.

\subsection{Negative Results: Multi-Step Lookahead and Curriculum Learning}
\label{app:negative}

We evaluated two training extensions that did not improve performance:

\paragraph{Multi-step lookahead.} Chaining 2--3 consecutive QP solves per training step (backpropagating through the trajectory $\bw^*_{t_1} \to \bw^*_{t_2} \to \bw^*_{t_3}$) did not reduce tracking error beyond single-step DFL. We tested lookahead depths of 2 and 3 with both shrinkage and conditional shrinkage models at $K \in \{10, 20\}$. In all cases, multi-step TE was within 0.1\% of the single-step baseline. This finite ablation does not identify why lookahead failed to help; dependence across dates and optimizer settings remain possible factors.

\paragraph{Curriculum learning (MSE $\to$ task).} We trained with a curriculum that begins with pure MSE loss, linearly transitions to task loss over 5--10 epochs, then continues with pure task loss. The intuition is that MSE provides a smoother loss landscape for initial optimization. However, the curriculum produced identical final TE to direct task-loss training. The task loss is not proved convex in the shrinkage parameter, and the ablation does not identify a basin-of-attraction mechanism.

\subsection{Value of the QP Structure}
\label{app:qpvalue}

An alternative to predict-then-optimize is to directly predict portfolio weights from features, bypassing the QP entirely. We test this with an MLP (same architecture as the neural model) that outputs $K$-sparse softmax weights, trained end-to-end on tracking error. At $K{=}20$, direct weight prediction achieves TE $= 0.0589$, more than double the DFL pipeline's $0.0276$; at $K{=}10$ the gap is $0.0806$ vs.\ $0.0462$. Both a normalized softmax and the QP can enforce budget and non-negativity constraints. The observed comparison favors the tested QP pipeline, but also changes the parameterization and optimization problem. It therefore does not isolate which aspect of the architecture causes the difference.

\subsection{Risk Metrics and Domain-Specific Analysis}
\label{app:risk}

Tracking error summarizes average dispersion but can conceal tail losses and variation across folds. We therefore examine additional tracking-risk measures on the same matched neural-model folds; these retrospective measures do not establish prospective investment performance.

\paragraph{Comprehensive risk metrics.}
Table~\ref{tab:risk_metrics} reports risk metrics at $K{=}20$ computed from daily portfolio returns. The reported reductions describe variability of tracking returns, not expected monetary savings.

\begin{table}[h]
\centering
\caption{Tracking-risk metrics at $K=20$ from exactly nine neural-model folds. Standard deviations use the sample convention; daily tail loss and arithmetic tracking drawdown are computed within folds. Negative relative change denotes a reduction. The common naive baseline is excluded.}
\label{tab:risk_metrics}

\fontsize{9}{10.8}\selectfont
\setlength{\tabcolsep}{5pt}
\renewcommand{\arraystretch}{1.08}
\publicationtable{%
\begin{tabular}{@{}lccc@{}}
\toprule
Metric & MSE & DFL & Improvement \\
\midrule
Annualized TE & 0.0331 & 0.0276 & $-16.5\%$ \\
21-day block TE (ann.) & 0.0296 & 0.0258 & $-12.7\%$ \\
Worst 21-day TE & 0.0538 & 0.0451 & $-16.1\%$ \\
Max tracking drawdown & 0.0362 & 0.0306 & $-15.5\%$ \\
CVaR$_{5\%}$ (daily) & 0.0044 & 0.0038 & $-14.3\%$ \\
Worst-fold TE & 0.0564 & 0.0360 & $-36.3\%$ \\
TE std across folds & 0.0098 & 0.0045 & $-54.0\%$ \\
\bottomrule
\end{tabular}}
\end{table}

\paragraph{Per-year regime breakdown.}
Each fold's test window is one year, so Table~\ref{tab:folds_exp1} is already a per-year breakdown; we do not repeat it here. Reading it as a regime series, DFL's gain peaks during volatile regime shifts (2021--2023: $26.6$--$40.0\%$) and is smallest in the calmest stretch (2017--2018: $0.3\%$, a small change), This descriptive variation does not identify covariance stability as its cause.

\paragraph{Factor exposure.}
A factor-neutrality claim requires the factor construction, aligned return series, regression specification and uncertainty estimates. The released results do not provide a complete auditable regression artifact, so we do not claim that the tracking improvement is factor-neutral.

\FloatBarrier
\section{Capacity Diagnostics and Spatial Transfer}
\label{app:diagnostic_group}

\subsection{Practitioner Diagnostic}
\label{app:diagnostic}
\begin{definition}[Heterogeneity-based capacity proxy]
For a structured model with $d$ trainable parameters and a specified partition into $R$ regions, let $m_r$ be the target index-weight mass in region $r$. Define $h=\operatorname{sd}(m_1,\ldots,m_R)/\operatorname{mean}(m_1,\ldots,m_R)$ and $d_{\mathrm{proxy}}=d h$ when the mean is positive. This proxy depends on the partition and is neither a Jacobian rank nor bounded by $d$. Archived equity predictions used $h=1$ by convention; measured heterogeneity and the resulting post-hoc re-scoring are reported separately in Appendix~\ref{app:diagnostic}.
\end{definition}
\begin{figure}[t]
\centering
\includegraphics[width=\figurewidth]{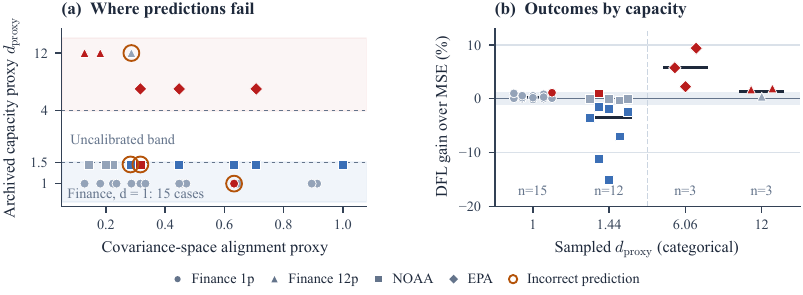}
\caption{\textbf{Archived blind-evaluation record (33 matched cases).} Colour shows DFL gain: red $>1\%$, grey within $\pm1\%$, blue $<-1\%$. (a)~Four incorrect predictions are ringed; overlapping finance points are counted. (b)~Equal spacing denotes the four sampled categories, not a continuous sweep; horizontal marks show means. This historical snapshot includes two abstentions.}
\label{fig:diagnostic}
\end{figure}

The archived diagnostic is a hypothesis-generating rule. A practical assessment should distinguish direct gradient measurements from proxy values:
\begin{enumerate}
\item Specify the input, parameterization, loss and aggregation unit. Measure output-space support energy and, when feasible, the pointwise and stacked Jacobians or the actual aggregate gradient angle.
\item Use $\sqrt{2K/N}$ only as an isotropic reference; it is not a measured cosine. The deterministic support-energy ratio is valid without that isotropy assumption.
\item Compare inexpensive baselines and task-trained models on held-out folds with paired settings. An observed small gain and a non-significant test do not establish equal minima.
\item Treat the proxy thresholds 1.5 and 4 as historical calibration choices. Neither the $d h$ proxy nor a measured spectral rank establishes a universal performance threshold. Recalibration requires separate validation data.
\item Under changing selection, tune covariance regularization on validation data and retain unstable runs in the report. The best tested coefficient on evaluation outcomes is an exploratory result, not a deployable selection rule.
\end{enumerate}

\paragraph{Selected standard estimators.} The released script probes six chosen covariance parameterizations on 252 days of returns for $N{=}97$ assets (\texttt{scripts/measure\_deployed\_deff.py}). The reported quantity is the entropy rank of a 24-row Gaussian projection of the Jacobian, rather than its full singular spectrum. It is distinct from the proxy $d\cdot h$; the sketch introduces additional approximation error. This sample is not a survey of deployment prevalence. Discrete hyperparameters contribute no differentiable Jacobian direction.

\begin{table}[h]
\centering
\caption{Spectral-rank estimates from 24 Gaussian Jacobian probes for six selected parameterizations. ``Low'' and ``higher'' describe this sketch, not a theorem about DFL performance. Zero denotes no trainable parameter. The sample does not establish deployment prevalence.}
\label{tab:deployed_deff}

\fontsize{9}{10.8}\selectfont
\setlength{\tabcolsep}{5pt}
\renewcommand{\arraystretch}{1.08}
\publicationtable{%
\begin{tabular}{@{}lrrl@{}}
\toprule
Estimator & $d$ & $r_{\mathrm{eff}}$ (sketch) & Sketch class \\
\midrule
Sample covariance & 0 & 0.00 & low \\
Ledoit-Wolf & 1 & 1.00 & low \\
RiskMetrics (EWMA) & 1 & 1.00 & low \\
Constant correlation & 1 & 1.00 & low \\
Two-target shrinkage & 2 & 1.29 & low \\
Structured shrinkage (12p) & 12 & 8.03 & higher \\
\bottomrule
\end{tabular}}
\end{table}

The three measured one-parameter families have rank-one Jacobians, so their nonzero parameter gradients satisfy Proposition~\ref{prop:dim1}. This geometric fact does not establish equal task optima or a universal cost advantage. The runtime comparison in Appendix~\ref{app:cost} concerns the tested implementation and configuration.

The archived scoring file contains 33 cases. The revised rule makes 31 committed predictions, of which 27 are correct, and abstains on two cases. Its historical 29/33 score counts both abstentions as correct and should not be interpreted as accuracy on 33 committed predictions. These archived outcomes predate the corrected equity harness.

\paragraph{Threshold sensitivity.} The diagnostic thresholds ($d_{\mathrm{proxy}} \leq 1.5$, $d_{\mathrm{proxy}} \geq 4$, alignment $< 0.4$) were examined post hoc: sweeping $d_{\text{low}} \in [1.0, 3.0]$, $d_{\text{high}} \in [3.0, 7.0]$, alignment $\in [0.25, 0.50]$ at 0.5/0.5/0.05 steps, using the archived convention that counts abstentions as correct, 73.3\% of all 270 parameter combinations achieve $\geq 85\%$ accuracy, and accuracy never falls below 78.8\% anywhere on the grid (median 87.9\%, max 97.0\%). Setting $d_{\text{low}} = 1.0$ yields 93.9\% but by classifying more cases as ``ambiguous'' (non-committal); our threshold of 1.5 makes the diagnostic more informative by issuing substantive predictions for low-heterogeneity domains ($d_{\mathrm{proxy}} = 1.4$ for weather).

\paragraph{Is $h$ doing any work on the equity corpus?}
A fair objection to $d_{\mathrm{proxy}} = d \cdot h$ is that our equity experiments
set $h{=}1$ by convention, so that on the corpus carrying most of the paper
$d_{\mathrm{proxy}}$ reduces to $d$ and the extra factor explains nothing. We test
this directly. Applying the same definition the synthetic sweep uses---$h$ is
the coefficient of variation of per-region index-weight mass, with regions
given by the structured model's sector partition---to the actual index weights
gives Table~\ref{tab:h_measured}. No equity universe is anywhere near $h{=}1$:
the measured values span $0.31$ to $0.71$, bracketing EPA ($0.61$) and sitting
well above NOAA ($0.14$).

\begin{table}[h]
\centering
\caption{Measured index-weight heterogeneity $h$ per universe (\texttt{scripts/measure\_heterogeneity.py}). The $h{=}1$ used in the main results is a conservative convention, not a measurement; every universe is in fact less heterogeneous than that.}
\label{tab:h_measured}

\fontsize{9}{10.8}\selectfont
\setlength{\tabcolsep}{5pt}
\renewcommand{\arraystretch}{1.08}
\publicationtable{%
\begin{tabular}{@{}lrrr@{}}
\toprule
Universe & $N$ & $R$ & measured $h$ \\
\midrule
S\&P 500 ($N{=}451$)   & 451 & 11 & 0.314 \\
S\&P 500 ($N{=}478$)   & 478 & 11 & 0.323 \\
S\&P 500 ($N{=}465$)   & 465 & 11 & 0.342 \\
S\&P 500 20y           & 100 & 11 & 0.378 \\
Nikkei 225             & 218 & 11 & 0.391 \\
FTSE 100               &  91 & 11 & 0.407 \\
S\&P 500 top-100       &  97 & 11 & 0.477 \\
ASX 200                & 162 & 11 & 0.505 \\
Euro Stoxx 50          &  47 &  9 & 0.636 \\
Hang Seng              &  62 & 11 & 0.713 \\
\midrule
\multicolumn{3}{@{}l}{NOAA weather (reference)} & 0.144 \\
\multicolumn{3}{@{}l}{EPA PM$_{2.5}$ (reference)} & 0.606 \\
\bottomrule
\end{tabular}}
\end{table}

Post-hoc re-scoring with measured heterogeneity gives 25/28 correct committed predictions (89.3\%), compared with 27/31 (87.1\%) under the archived convention. The committed case sets differ, so the percentages are not a paired estimate of improved diagnostic accuracy. The revised rule abstains more often and was examined after outcomes were available. Measured $h$ is reproducible from weights and a region partition, but these data do not establish that it explains the performance differences.

Three of the 38 one-parameter configurations clear a $1\%$ gain, and they come
from two markets: Hang Seng ($+1.76\%$ at $K{=}10$, $+1.28\%$ at $K{=}5$) and
ASX~200 ($+1.21\%$ at $K{=}5$). We have no measurement that explains them, and
we list what we ruled out rather than offer a mechanism we cannot support.
Measured $h$ does not: Euro~Stoxx carries the second-largest $h$ in the corpus
($0.636$, against Hang Seng's $0.713$ and ASX's $0.505$) and its three gains are
$+0.06\%$, $+0.29\%$ and $+0.08\%$. Sparsity does not: ASX at $K{=}5$ sits at
$K/N{=}0.031$, among the smallest ratios we test. Universe size does not:
Euro~Stoxx is the smallest universe at $N{=}47$. An earlier draft attributed
these gains to $h$ on the strength of Hang Seng alone; with the sweep re-run and
ASX also above $1\%$, that attribution no longer survives its own table. What
does survive is the conclusion that matters here: every one of these gains is
under $1.8\%$, far below what a $100$--$1000\times$ training overhead could
justify.

\subsection{Testing the Spectral-Gap Bound Directly}
\label{app:bound_check}

The saved experiment directly evaluates Theorem~\ref{thm:spectral} from the predictor Jacobian and the two output gradients. The computation is pointwise in the rebalance input.

For each (model, fold, rebalance date) we form $\mathbf{J} = \partial
\mathrm{vec}(\bSigma)/\partial\btheta$ by forward-mode differentiation
($d \ll N^2$, so this costs $O(d)$ passes), take its SVD, compute
$\mathbf{g}_{\mathrm{MSE}}$ analytically and $\mathbf{g}_{\mathrm{task}}$ through
the same differentiable QP the experiments train with, and compare the realised
$|\cos(\mathbf{J}^\top\mathbf{g}_{\mathrm{MSE}}, \mathbf{J}^\top\mathbf{g}_{\mathrm{task}})|$
against $\cos\Theta$. Across 162 points on the $N{=}97$ universe there are no recorded violations within numerical precision. The bound is informative at 108 rank-one or near-rank-one points and vacuous at all 54 structured-model points; vacuous cases do not provide a quantitative validation.

\paragraph{The rank-one endpoint.} For
the $d{=}1$ shrinkage model the bound evaluates to $1$ and the realised
alignment is $1.000000$ at all 54 points, with zero slack. This is the
theorem's endpoint and it is reproduced to the precision of the arithmetic.

\paragraph{Pointwise rank one with 385 parameters.}
The conditional model has median singular-value ratio approximately $1.2\times10^{-6}$ and absolute alignment rounding to one at all 54 sampled points. It maps each input to a scalar shrinkage intensity, so its covariance Jacobian is rank at most one for that input. Across inputs, the gradient of the intensity can rotate in parameter space. Consequently, this measurement does not identify the stacked rank or explain the ordering of test gains between the conditional and structured models. It establishes that parameter count is not pointwise rank; Proposition~\ref{prop:batch} describes the additional information needed for a training-level interpretation.

\paragraph{The near-rank-one bound is vacuous where it would be useful.} For the $d{=}12$ structured model the gap is genuinely open (median $0.089$) and the alignment is high but not unity (median $0.869$, minimum $0.625$)---the
regime the theorem's second part is meant to describe. There
$\Theta \geq \pi/2$ at every one of the 54 points, so the bound returns no
information. The cause is that $\rho_{\mathrm{task}}$ saturates:
$\mathbf{g}_{\mathrm{task}}$ sits almost orthogonal to the leading singular
direction $\mathbf{u}_1$ ($|\cos| \lesssim \sigma_2/\sigma_1$), while
$\mathbf{g}_{\mathrm{MSE}}$ does not. The bound assumes both gradients retain
non-trivial alignment with $\mathbf{u}_1$, and the task gradient does not.

We report this rather than omit it. The theorem is not contradicted---no point
violates it---but its quantitative content is confined to the rank-one endpoint,
and the intermediate regime is carried by the empirical diagnostic rather than
by the bound. Sharpening it for gradients that are near-orthogonal to
$\mathbf{u}_1$ is the obvious next piece of theory.

\subsection{Synthetic $d_{\mathrm{proxy}}$ Sweep}
\label{app:deff_sweep}

Table~\ref{tab:deff_sweep} and Figure~\ref{fig:deff_phase} report a controlled heterogeneity intervention; changing index weights also changes task difficulty. We fix $N{=}100$, $K{=}20$, 10 regions, 5 folds, and vary only the weight heterogeneity $h$ (via Dirichlet sampling of region weights), sweeping $d_{\mathrm{proxy}}$ from 0.5 to 20 across 18 settings.

\begin{figure}[h]
\centering
\includegraphics[width=\figurewidth]{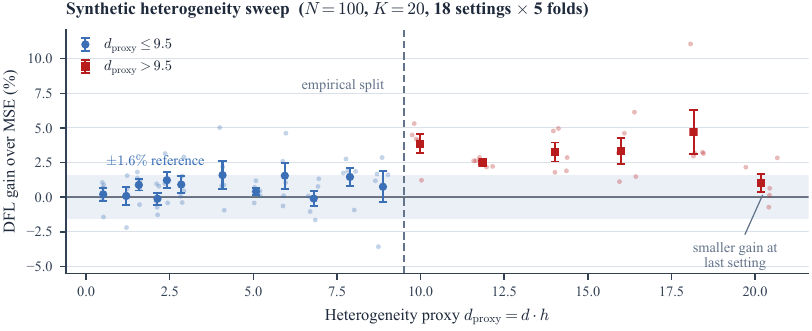}
\caption{\textbf{Synthetic heterogeneity sweep at $N=100$, $K=20$: 18 settings, five folds each.} Faint dots show every fold; larger markers show mean per-fold TE reduction and $\pm1$ SEM. The $\pm1.6\%$ band is a descriptive reference, not a confidence interval; individual folds can lie outside it. Mean gains are larger at several settings above the empirical split at $d_{\mathrm{proxy}}=9.5$, but fall to approximately 1.0\% at the final setting. The split is not a validated universal threshold.}
\label{fig:deff_phase}
\end{figure}

\begin{table}[h]
\centering
\caption{Synthetic heterogeneity sweep (18 settings, five folds each). Mean gains are larger at several high-heterogeneity settings, with an exception at the final setting. This exploratory pattern does not by itself establish a universal phase transition.}
\label{tab:deff_sweep}

\fontsize{9}{10.8}\selectfont
\setlength{\tabcolsep}{5pt}
\renewcommand{\arraystretch}{1.08}
\publicationtable{%
\begin{tabular}{@{}rrrrrr@{}}
\toprule
$h$ & $d_{\mathrm{proxy}}$ & TE$_\text{MSE}$ & TE$_\text{DFL}$ & Gain & $\sigma_\text{gain}$ \\
\midrule
0.05 & 0.51  & 4.296 & 4.286 & $+0.2\%$ & 0.9\% \\
0.12 & 1.20  & 4.251 & 4.246 & $+0.1\%$ & 1.2\% \\
0.16 & 1.58  & 4.145 & 4.108 & $+0.9\%$ & 0.8\% \\
0.21 & 2.13  & 4.169 & 4.175 & $-0.1\%$ & 0.9\% \\
0.24 & 2.41  & 3.805 & 3.758 & $+1.2\%$ & 1.1\% \\
0.28 & 2.84  & 3.962 & 3.925 & $+0.9\%$ & 1.2\% \\
0.41 & 4.08  & 3.766 & 3.704 & $+1.6\%$ & 2.0\% \\
0.51 & 5.08  & 3.457 & 3.442 & $+0.4\%$ & 0.5\% \\
0.59 & 5.94  & 3.348 & 3.295 & $+1.5\%$ & 1.9\% \\
0.68 & 6.80  & 3.474 & 3.477 & $-0.1\%$ & 1.1\% \\
0.79 & 7.89  & 2.410 & 2.375 & $+1.5\%$ & 1.2\% \\
0.89 & 8.88  & 3.309 & 3.282 & $+0.8\%$ & 2.2\% \\
\midrule
1.00 & 9.98  & 2.098 & 2.017 & $\mathbf{+3.9\%}$ & 1.4\% \\
1.19 & 11.85 & 1.975 & 1.926 & $\mathbf{+2.5\%}$ & 0.3\% \\
1.40 & 14.02 & 1.784 & 1.725 & $\mathbf{+3.3\%}$ & 1.4\% \\
1.60 & 15.98 & 1.297 & 1.254 & $\mathbf{+3.3\%}$ & 1.9\% \\
1.82 & 18.16 & 0.852 & 0.811 & $\mathbf{+4.7\%}$ & 3.2\% \\
2.02 & 20.17 & 1.101 & 1.090 & $+1.0\%$ & 1.3\% \\
\bottomrule
\end{tabular}}
\end{table}

The first twelve settings ($d_{\mathrm{proxy}}\leq9$) have mean gains of approximately $-0.1$--$1.6\%$. Five of the next six settings have means of 2.5--4.7\%, while the final setting returns to approximately 1.0\%. Baseline TE also changes with heterogeneity, so these measurements do not isolate effective dimension from task difficulty or establish a sharp universal transition.

\subsection{Generalization beyond Portfolio Optimization}
\label{app:generalization}

The deterministic support-energy inequality applies whenever the downstream loss depends on a restricted set of output coordinates. Its numerical isotropy bound requires an additional assumption and does not follow from selection alone. We examine one matching spatial QP and discuss other potential applications.

\textbf{Sensor placement (empirical validation).} Sensor placement is a canonical spatial selection problem \citep{krause2008near, joshi2009sensor}. We validate on a synthetic sensor network: $N$ sensors on a unit square with exponential spatial covariance, selecting $K$ sensors to track the field average (identical QP formulation). A structured shrinkage model with 10 parameters (9~spatial regions $+$ global $\alpha$) is trained under MSE and DFL. Table~\ref{tab:sensor} shows results across 10 configurations with 5 trials each.

\begin{table}[h]
\centering
\caption{Sensor placement: gradient alignment and DFL gain across $K/N$. The column $\sqrt{2K/N}$ is a reference under the energy assumption, not a universal bound on these measured cosines.}
\label{tab:sensor}

\fontsize{9}{10.8}\selectfont
\setlength{\tabcolsep}{5pt}
\renewcommand{\arraystretch}{1.08}
\publicationtable{%
\begin{tabular}{@{}rrrrrr@{}}
\toprule
$N$ & $K$ & $K/N$ & Reference & $|\cos|$ & DFL Gap \\
\midrule
200 & 10 & 0.05 & 0.316 & 0.297 & $-2.7\%$ \\
100 & 5  & 0.05 & 0.316 & 0.394 & $-0.9\%$ \\
50  & 5  & 0.10 & 0.447 & 0.295 & $-1.1\%$ \\
100 & 10 & 0.10 & 0.447 & 0.474 & $-1.5\%$ \\
50  & 10 & 0.20 & 0.632 & 0.349 & $-0.9\%$ \\
100 & 20 & 0.20 & 0.632 & 0.499 & $-0.8\%$ \\
200 & 40 & 0.20 & 0.632 & 0.617 & $-3.2\%$ \\
50  & 25 & 0.50 & 1.000 & 0.507 & $-0.4\%$ \\
100 & 50 & 0.50 & 1.000 & 0.606 & $-2.2\%$ \\
200 & 100 & 0.50 & 1.000 & 0.670 & $-0.3\%$ \\
\bottomrule
\end{tabular}}
\end{table}

Across the ten displayed configurations, median $|\cos|$ is approximately 0.346 at $K/N=0.05$ and 0.606 at $K/N=0.50$. All ten rows report a lower TE under DFL, but two measured cosines exceed the displayed isotropic reference. These observations do not establish the energy assumption or a universal sparsity law; the deterministic support-energy inequality remains the appropriate statement without that assumption.

\textbf{Facility location.} Selecting $K$ facilities from $N$ sites given predicted demand covariance. The allocation QP depends on demand correlations among selected sites, but MSE prediction does not distinguish selected from unselected.

\textbf{Sparse resource allocation.} Any optimization that distributes resources across $K \ll N$ options based on a predicted parameter matrix (e.g., sparse Markowitz with general objectives, sparse experimental design, or sparse regression with downstream optimization).

The key structural requirement is: (i)~the optimization depends on a $K$-dimensional sub-problem of an $N$-dimensional prediction, (ii)~the sub-problem selection is fixed or detached, and (iii)~the optimization admits efficient differentiation (e.g., KKT conditions). For each application, the actual support, differentiability, and stability assumptions must be checked separately. We do not establish a general grid-search sufficiency theorem or a universal regret guarantee for these applications.

\textbf{NOAA weather stations vs.\ EPA air quality (empirical validation).} Both use the same 10-parameter structured spatial model---the domains also differ in observations, targets, and index-weight heterogeneity $h$. NOAA stations ($N{\in}\{100, 189, 415\}$, temperature tracking): uniform coverage yields $h{=}0.14$, $d_{\mathrm{proxy}}{=}1.4$. EPA PM2.5 monitors ($N{\in}\{124, 132\}$, pollution index tracking): urban concentration yields $h{=}0.61$, $d_{\mathrm{proxy}}{=}6.1$. Table~\ref{tab:cross_domain} reports the exploratory contrast: The observed changes are negative for NOAA and positive for EPA; the small number of windows precludes a strong inferential claim.

\begin{table}[h]
\centering
\caption{Exploratory spatial comparisons with the $d_{\mathrm{proxy}}$ heuristic. Same 10-parameter structured model, different heterogeneity. NOAA ($d_{\mathrm{proxy}}{=}1.4$): DFL hurts. EPA ($d_{\mathrm{proxy}}{=}6.1$): DFL helps. \emph{The $n$ column is the point of caution}: these domains provide one evaluation window each (two for EPA at $N{=}124$), so each row has only one or two evaluation windows. We report them as the direction the diagnostic predicts, not as significance, and the equity results carry the statistical weight.}
\label{tab:cross_domain}

\fontsize{9}{10.8}\selectfont
\setlength{\tabcolsep}{5pt}
\renewcommand{\arraystretch}{1.08}
\publicationtable{%
\begin{tabular}{@{}llrrrrrr@{}}
\toprule
Domain & $d_{\mathrm{proxy}}$ & $N$ & $K$ & MSE & DFL & $\Delta\%$ & $n$ \\
\midrule
NOAA   & 1.4 & 100 & 20 & 20.03 & 20.76 & $-3.6\%$ & 1 \\
NOAA   & 1.4 & 189 & 20 & 14.87 & 15.14 & $-1.8\%$ & 1 \\
NOAA   & 1.4 & 415 & 20 & 14.41 & 14.76 & $-2.5\%$ & 1 \\
\midrule
EPA    & 6.1 & 132 & 10 & 33.94 & 31.98 & $\mathbf{+5.8\%}$ & 1 \\
EPA    & 6.1 & 132 & 50 & 24.39 & 22.09 & $\mathbf{+9.4\%}$ & 1 \\
EPA    & 6.1 & 124 & 50 & 13.83 & 13.27 & $\mathbf{+4.0\%}$ & 2 \\
\bottomrule
\end{tabular}}
\end{table}

\FloatBarrier
\section{Combinatorial Experiments}
\label{app:combinatorial_group}

\subsection{Shortest Path Experiment Details}
\label{app:shortest_path}

\begin{table}[h]
\centering
\caption{Archived shortest-path results on an $8\times8$ grid. $r_{\mathrm{in}}$ measures input sensitivity, not parameter-gradient rank. Regret is excess true path cost divided by optimal true path cost. $\Delta$ is MSE minus SPO+ regret in percentage points, averaged over five seeds.}
\label{tab:shortest_path}

\fontsize{9}{10.8}\selectfont
\setlength{\tabcolsep}{5pt}
\renewcommand{\arraystretch}{1.08}
\publicationtable{%
\begin{tabular}{@{}lrrrrr@{}}
\toprule
Model & Params & $r_{\mathrm{in}}$ & MSE Reg.\% & SPO+ Reg.\% & $\Delta$ \\
\midrule
$k{=}1$     &   149 & 1.0 & 53.8\% & 54.2\% & $-0.4$\% \\
$k{=}2$              &   234 & 2.0 & 58.4\% & 56.1\% & \textbf{$+2.3$\%} \\
$k{=}5$              &   489 & 4.8 & 63.8\% & 64.0\% & $-0.2$\% \\
$k{=}10$             &   914 & 9.1 & 72.5\% & 70.0\% & \textbf{$+2.6$\%} \\
$k{=}20$             &  1764 & 15.3 & 77.4\% & 74.2\% & \textbf{$+3.2$\%} \\
Full linear          &  1344 & 15.3 & 77.4\% & 73.3\% & \textbf{$+4.1$\%} \\
Full MLP             &   27K & 15.1 & 77.4\% & 75.9\% & $+1.5$\% \\
\bottomrule
\end{tabular}}
\end{table}

\begin{figure}[h]
\centering
\includegraphics[width=\figurewidth]{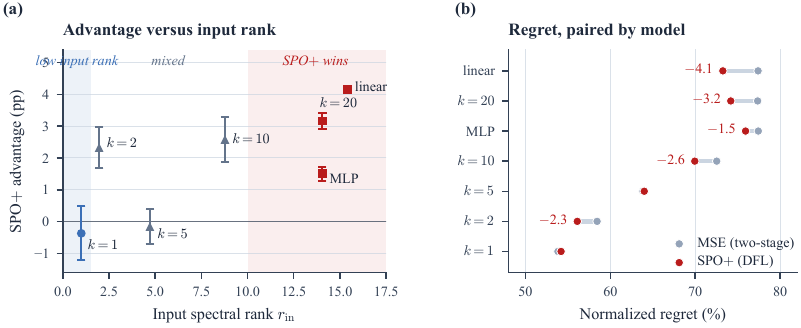}
\caption{\textbf{Shortest-path experiment on an $8\times8$ grid with five paired seeds.} (a) Mean SPO+ advantage in percentage points, $\Delta=\mathrm{regret}_{\rm MSE}-\mathrm{regret}_{\rm SPO+}$, with $\pm1$ SEM. Bottleneck width $k$ bounds input-Jacobian rank; full linear and MLP models also change architecture. The low-rank point has a small negative estimate, while intermediate ranks are mixed. (b) Mean normalized regret for the same paired runs; labels give the SPO+ change in percentage points. Shading describes sampled rank ranges, not a theorem about test regret.}
\label{fig:shortest_path}
\end{figure}

\paragraph{Setup.} We construct an $8 \times 8$ grid graph ($N{=}64$ vertices) with 8-connected edges. Each instance consists of features $x \in \mathbb{R}^{20}$ drawn i.i.d.\ from $\mathcal{N}(0, I)$, mapped to positive vertex costs via a fixed ground-truth network: $c = \text{softplus}(Wx + b + \varepsilon)$ where $W \in \mathbb{R}^{64 \times 20}$ and $\varepsilon \sim \mathcal{N}(0, 0.02^2 I)$. The optimal path $z^* = \arg\min_{z \in \mathcal{P}} c^\top z$ is solved by Dijkstra's algorithm (source: vertex 0, target: vertex 63; edge cost $=$ the entered vertex cost). We generate 8K training and 1.5K test instances.

\paragraph{Bottleneck architecture.} Each model maps $x \mapsto \hat{c}$ through a linear bottleneck of width $k$: $\hat{c} = \text{softplus}(\text{Dec}(\text{Enc}(x)))$ where $\text{Enc}: \mathbb{R}^{20} \to \mathbb{R}^k$ and $\text{Dec}: \mathbb{R}^k \to \mathbb{R}^{64}$. The bottleneck width $k$ controls the Jacobian rank: $\text{rank}(\partial \hat{c}/\partial x) \leq k$ by construction, bounding, but not fixing, the input spectral effective rank $r_{\mathrm{in}}$. We test $k \in \{1, 2, 5, 10, 20\}$ plus a full-rank linear ($k{=}20$, no bottleneck) and a 2-hidden-layer MLP (128 units, ${\sim}27$K parameters). All use the \texttt{softplus} activation on the output to ensure positive cost predictions.

\paragraph{Training.} Models are trained with Adam (lr $= 10^{-3}$ for MSE, $3{\times}10^{-4}$ for SPO+, weight decay $10^{-5}$), cosine annealing over 80 epochs, batch size 128, gradient clipping at norm 5.0. SPO+ loss uses the correct minimization formulation: $\ell_{\text{SPO+}}(\hat{c}, c, z^*) = (2\hat{c} - c)^\top (z^* - z_{\text{spo}})$ where $z_{\text{spo}} = \arg\min_{z \in \mathcal{P}} (2\hat{c} - c)^\top z$.

\paragraph{Spectral effective rank measurement.} For each trained model, we compute $r_{\mathrm{in}}$ of the input-space Jacobian $J_x = \partial \hat{c}/\partial x \in \mathbb{R}^{64 \times 20}$ (not the parameter-space Jacobian, for computational tractability). This bounds input sensitivity only. In particular, the trainable decoder bias supplies one parameter direction per output before numerical saturation; even a width-one bottleneck does not impose parameter-Jacobian rank one. We compute $J_x$ via autograd for 30 test inputs and report the mean $r_{\mathrm{in}}$.

\paragraph{Evaluation metrics.} \emph{Normalized regret}: $(c^\top z_{\text{pred}} - c^\top z^*) / c^\top z^*$, where $z_{\text{pred}}$ is the path obtained by running Dijkstra on the predicted costs $\hat{c}$. \emph{Accuracy}: fraction of test instances where $z_{\text{pred}} = z^*$ (exact path match). Results are averaged over 5 random seeds.

\paragraph{Prediction error and regret.}
Wider bottlenecks can give lower prediction error but higher decision regret in these saved results. The observation does not by itself identify overfitting or a gradient mechanism: architecture, optimization and path sensitivity change together. It should not be treated as evidence that a parameter-space no-go theorem extends to these input-rank measurements.

\paragraph{Scaling to $12{\times}12$ grid.} To test scale dependence, we replicate the bottleneck experiment on a $12{\times}12$ grid ($N{=}144$ vertices, 8-connected edges). We generate 5K training and 1K test instances with the same feature dimension ($n_{\text{features}}{=}20$) and training protocol (60 epochs, 5-fold cross-validation). Table~\ref{tab:sp12} and Figure~\ref{fig:sp_scale} report results.

\begin{table}[h]
\centering
\caption{Shortest path on $12{\times}12$ grid ($N{=}144$). The SPO+ advantage $\Delta$ increases monotonically with $r_{\mathrm{in}}$, reaching $+16.4\%$ for the full linear model---a $4\times$ amplification over the $8{\times}8$ grid (Table~\ref{tab:shortest_path}). Results averaged over 5 folds.}
\label{tab:sp12}

\fontsize{9}{10.8}\selectfont
\setlength{\tabcolsep}{5pt}
\renewcommand{\arraystretch}{1.08}
\publicationtable{%
\begin{tabular}{@{}lrrrrr@{}}
\toprule
Model & Params & $r_{\mathrm{in}}$ & MSE Reg.\% & SPO+ Reg.\% & $\Delta$ \\
\midrule
$k{=}1$     &   309 & 1.0 & 77.0\% & 74.6\% & $+2.4$\% \\
$k{=}2$              &   474 & 1.9 & 89.3\% & 83.7\% & \textbf{$+5.6$\%} \\
$k{=}5$              &   969 & 4.6 & 106.6\% & 99.5\% & \textbf{$+7.1$\%} \\
$k{=}10$             &  1794 & 8.7 & 118.5\% & 109.6\% & \textbf{$+8.9$\%} \\
$k{=}20$             &  3444 & 15.2 & 129.1\% & 118.1\% & \textbf{$+11.1$\%} \\
Full linear          &  3024 & 20.0 & 126.0\% & 109.6\% & \textbf{$+16.4$\%} \\
\bottomrule
\end{tabular}}
\end{table}

\begin{figure}[h]
\centering
\includegraphics[width=.75\figurewidth]{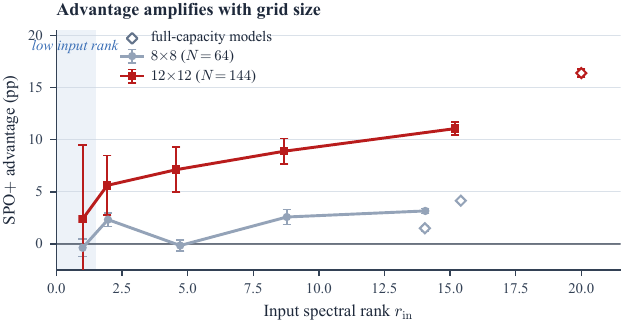}
\caption{\textbf{SPO+ advantage in percentage points at two grid sizes; positive values favor SPO+.} Whiskers show $\pm1$ SEM across five paired seeds. Filled markers join the controlled bottleneck settings; open diamonds denote full-capacity architectures. The larger grid has greater observed advantage at the sampled settings. A SEM bar crossing zero is not a hypothesis test; paired inferential results are reported separately in the corresponding table.}
\label{fig:sp_scale}
\end{figure}

The larger grid has greater measured SPO+ advantage at the sampled settings, but grid size changes the problem as well as the cost-space dimension; the experiment does not isolate the mechanism. The number of paired seeds remains five. In the $12\times12$ experiment, one-sided Wilcoxon gives $p=0.312$ at $k=1$ and $p=0.031$ at the tested $k\geq10$. With five pairs, 0.03125 is the smallest possible exact one-sided signed-rank p-value; these tests are unadjusted across settings.

\subsection{External Validation on PyEPO}
\label{app:pyepo}

We complement the custom tracking and path implementations with archived experiments using PyEPO benchmark generators and decision-loss implementations. This separates implementation provenance from the financial setting, while retaining the protocol limitations described below.

PyEPO \citep{tang2024pyepo} supplies the benchmark generator and optimization layers. The reported $r_{\mathrm{eff}}$ is measured on a sampled stacked Jacobian: the script takes up to 32 examples and the first eight output coordinates per example. It is an entropy-rank estimate for that probe matrix, not a full pointwise spectrum or the $d h$ proxy. We
use its shortest-path generator, its optimization model and its SPO+
implementation unmodified; the only thing we supply is the predictor.

\paragraph{A controlled parameterization sweep.} We hold the task, the data, the
optimizer and the training budget fixed and vary \emph{only} the number of
trainable parameters. The predictor is a fixed base map plus $d$ fixed
directions with trainable coefficients,
\begin{equation}
\mathbf{c}(\btheta) = W_0 \mathbf{x} + \textstyle\sum_{k=1}^{d} \theta_k (A_k \mathbf{x}),
\end{equation}
so $\partial \mathbf{c} / \partial \theta_k = A_k \mathbf{x}$ and the Jacobian has exactly
$d$ columns. $W_0$ is MSE-pretrained so that the restricted update starts from a useful predictor. This is a design choice for the comparison, not an implication of the one-parameter geometry.

Two variants we tried first do \emph{not} control $r_{\mathrm{eff}}$, and we
record them because both look reasonable. Constraining $W = UV^\top$ to low
matrix rank leaves $\partial \mathbf{c}/\partial(U,V)$ high-rank---measured
$r_{\mathrm{eff}}$ was $12.7$ at matrix rank one. Keeping a trainable bias adds
one free direction per output coordinate, putting $d{=}1$ at
$r_{\mathrm{eff}}{=}5.95$ on a 40-edge grid. Only the form above makes measured
$r_{\mathrm{eff}}$ track $d$.

\begin{table}[h]
\centering
\caption{PyEPO shortest path, $5{\times}5$ grid. Only the predictor's parameter
count varies; task, data, solver and budget are fixed. Regret is PyEPO's
normalised decision regret (lower is better); gain is the SPO+ reduction over
two-stage MSE training. $^{*}p{<}0.05$, one-sided Wilcoxon across seeds.}
\label{tab:pyepo}

\fontsize{9}{10.8}\selectfont
\setlength{\tabcolsep}{5pt}
\renewcommand{\arraystretch}{1.08}
\publicationtable{%
\begin{tabular}{@{}lrrrrrr@{}}
\toprule
$d$ & $r_{\mathrm{eff}}$ & MSE regret & SPO+ regret & Gain $\Delta\%$ & $p$ & wins \\
\midrule
1 & 1.00 & 0.0884 & 0.0887 & $-0.25$ & $0.754$ & 4/10 \\
2 & 1.94 & 0.0887 & 0.0890 & $-0.28$ & $0.722$ & 5/10 \\
4 & 3.73 & 0.0882 & 0.0874 & $+0.91$ & $0.042^{*}$ & 6/10 \\
8 & 7.02 & 0.0882 & 0.0876 & $+0.72$ & $0.246$ & 6/10 \\
16 & 12.70 & 0.0887 & 0.0870 & $+1.92$ & $0.065$ & 7/10 \\
32 & 20.39 & 0.0886 & 0.0845 & $+4.66$ & $0.002^{**}$ & 9/10 \\
full & 44.78 & 0.0886 & 0.0771 & $+12.99$ & $0.001^{**}$ & 10/10 \\
\bottomrule
\end{tabular}}
\end{table}

\paragraph{What it shows, and what it does not.} The gain rises with measured
$r_{\mathrm{eff}}$ across a $45\times$ range, from $-0.25\%$ at
$r_{\mathrm{eff}}{=}1.00$ to $+12.99\%$ at full rank, with Spearman
$\rho{=}{+}0.93$ ($p{=}0.0025$). At the lowest measured rank, the estimated gain is slightly negative, on a benchmark built by other authors and an SPO+ implementation we
did not write.

The $r_{\mathrm{eff}} \geq 4$ threshold does \emph{not} transfer. At
$r_{\mathrm{eff}}{=}7.02$ the gain is $+0.72\%$, below the $1\%$ the finance
calibration would predict, and the two rows either side of it ($+0.91\%$ at
$3.73$, $+1.92\%$ at $12.70$) win on only 6/10 and 7/10 seeds. What survives is
the ordering, not the cut point: $r_{\mathrm{eff}}$ ranks configurations by how
much DFL can buy, while the numeric thresholds are calibrated on equity data and
should be re-calibrated per domain. We report this because an earlier run at
five seeds did show both thresholds transferring, and the apparent agreement did
not survive ten.

\paragraph{The positive control.} The full-rank row recovers SPO+'s known
advantage over two-stage training ($+12.99\%$, $p{=}0.001$, 10/10 seeds) on the
library's own benchmark, through the same code path that produces the nulls at
low $r_{\mathrm{eff}}$. The positive full-capacity result shows that the implementation can improve this benchmark under at least one tested configuration; it does not validate every lower-capacity optimization setting.

\begin{figure}[h]
\centering
\includegraphics[width=\figurewidth]{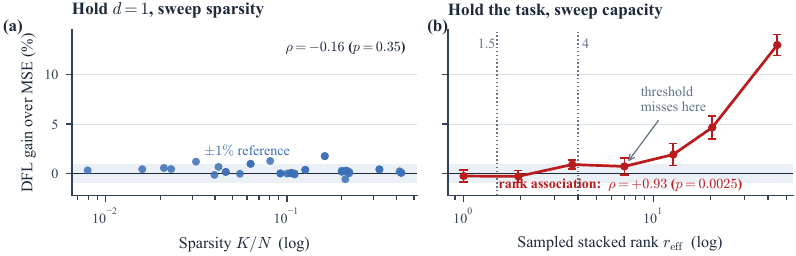}
\caption{\textbf{Two empirical axes, with a shared vertical scale.} (a) Thirty-eight one-parameter equity configurations across six markets and a $54\times$ range of $K/N$: gains remain below 1.8\%, with no detected monotone association ($\rho=-0.16$, $p=0.35$). The shaded $\pm1\%$ band is a reference, not an envelope containing all points. (b) PyEPO capacity sweep: relative reduction in mean regret, with $\pm1$ paired delta-method SE across ten seeds. Dotted thresholds were calibrated on equities; one transfer failure is marked. The two panels concern different tasks and support association rather than a causal comparison.}
\label{fig:two_axis}
\end{figure}

\paragraph{Does rank collapse kill every DFL method, or only the one we tested?}
Proposition~\ref{prop:dim1} is a statement about the Jacobian, not about a
particular decision loss: at $\mathrm{rank}(\mathbf{J}){=}1$ any loss reaching
$\btheta$ through $\mathbf{J}^\top$ produces a gradient collinear with the MSE
gradient. This statement constrains local directions for each loss; it does not predict that their trained solutions or performance coincide. We tested that with
four further PyEPO losses spanning unrelated mechanisms.

Method hyperparameters were selected using the full-capacity stage and then frozen for the reduced-capacity sweep (\texttt{scripts/tune\_pyepo\_methods.py}). This conditions the comparison on settings that work at full capacity; it does not give each reduced-capacity model its best validation-tuned configuration. The selection record does not establish a fully independent evaluation of the tuning procedure. We therefore report this as a transfer-of-hyperparameters experiment.

The black-box method did not improve the full-capacity baseline in the six tested configurations (best reported gain $-4.84\%$). It remains a negative result, rather than being removed from the scope of the conclusion. Other runs exceeding $1.5\times$ baseline regret are marked unstable. Such runs are relevant performance outcomes; rank-one collinearity does not rule them out or make them inadmissible counterevidence.

\begin{table}[h]
\centering
\caption{PyEPO decision losses at low and full measured capacity, with hyperparameters frozen from the full-capacity stage. Values are relative regret reductions over two-stage training. ``Unstable'' denotes regret exceeding $1.5\times$ baseline, a reported outcome rather than evidence of equivalence. Non-significant signed-rank comparisons do not establish equality. $^{*}p<0.05$, two-sided Wilcoxon over ten seeds.}
\label{tab:pyepo_methods}

\fontsize{9}{10.8}\selectfont
\setlength{\tabcolsep}{5pt}
\renewcommand{\arraystretch}{1.08}
\publicationtable{%
\begin{tabular}{@{}llrr@{}}
\toprule
Decision loss & Frozen hyper-parameters & $r_{\mathrm{eff}}{=}1.00$ & full rank ($44.8$) \\
\midrule
SPO+ & lr=0.01 & $-0.25$ & $+12.99^{*}$ \\
perturbedOpt & lr=0.01, $n{=}20$, $\sigma{=}0.5$ & $-2.32$ & $+4.98$ \\
negativeIdentity & lr=0.01 & \emph{unstable} & $+2.24$ \\
Perturbed Fenchel-Young & lr=0.01, $n{=}10$, $\sigma{=}0.5$ & $-0.57$ & $+15.80^{*}$ \\
\bottomrule
\end{tabular}}
\end{table}

At the lowest capacity, the finite results for SPO+, perturbed optimization and the perturbed Fenchel--Young loss are not significantly different from two-stage training under the reported tests. This does not establish equivalence. The negative-identity method is unstable at reduced capacity. Perturbed optimization has substantial negative gains at intermediate capacities, even though its full-capacity result is positive. These failures show that rank or a fixed hyperparameter configuration alone does not determine reliable optimization.

\paragraph{Multiple knapsack.}
The archived five-seed knapsack sweep does not show a detected monotone relationship between gain and measured capacity ($\rho=0.37$, $p=0.47$). At full capacity the estimated gain is $1.85\%$ ($p=0.41$, two of five wins), and several lower-capacity results favor MSE. Limited replication leaves substantial uncertainty, but the benchmark is still evidence of a setting where the proposed ordering did not appear. It should not be discarded solely because the full-capacity control failed to improve.

\paragraph{Relation to the main text.}
The equity sweep varies sparsity within one-parameter models, whereas this archived experiment varies an update subspace on a fixed generated task. The full-capacity endpoint also changes initialization, and hyperparameters are transferred across capacities. These experiments are complementary observations rather than a joint identification of the causal effect of rank. The following extension aligns initialization and validation budgets.

\subsection{Validation-Tuned Controlled Extension}
\label{app:controlled}
\paragraph{Purpose and relation to earlier experiments.}
This extension addresses two limitations of the archived PyEPO sweep: transferring hyperparameters from full capacity, and starting the full-capacity comparator differently from restricted models. It uses a new protocol and smaller exact-enumeration problems. It does not overwrite the earlier knapsack null result or establish that tuning caused the difference.

\paragraph{Tasks, splits and exact oracles.}
We use PyEPO's synthetic feature--cost generators \citep{tang2024pyepo} with five Gaussian features, degree four and multiplicative noise width 0.5. Ten seeds generate independent datasets per task, each split into 512 training, 256 validation and 512 test examples. The shortest-path task is a directed $4\times4$ grid with 24 edges and 20 feasible monotone paths. The two-dimensional knapsack has 12 items; each capacity is 35\% of the corresponding total item weight. All feasible subsets are enumerated. Knapsack values are negated to express both tasks as minimization. Exact enumeration is checked against dynamic programming for signed shortest-path costs and SciPy's MILP solver for knapsack instances. Costs are scaled using training data only.

\paragraph{Matched prediction families and selection.}
Let $\tilde x=(x^\top,1)^\top$. A ridge predictor $P_0$ is fitted only on training data, using penalty $10^{-3}$. Every loss and capacity starts from $P_0$ and uses
\[
\hat c(\theta;x)=\left(P_0+\sum_{k=1}^{d}\theta_k A_k\right)\tilde x,
\qquad\theta_0=0.
\]
The vectorized directions $A_k$ are a nested orthonormal basis, fixed within each dataset. We test $d=1,2,8$ and the complete basis ($d=144$ for path, $72$ for knapsack). Adam runs for 20 epochs with batch size 64. Each loss at each capacity independently selects among learning rates $0.003,0.01,0.03$ using validation decision regret at the final epoch. Initialization and minibatch order are paired. This gives 480 candidate fits and 160 selected models; test regret is evaluated for selected candidates only. The shared ridge baseline is evaluated separately. This initial protocol has one optimization seed per generated dataset. Appendix~\ref{app:robustness} adds a crossed batch-order analysis on fresh datasets.

\paragraph{Geometry, outcome and inference.}
We compute Jacobians for every output at the first 32 test examples after model selection. Spectral rank uses squared singular values, as in the main text; pointwise ranks are averaged, whereas stacked rank is computed from the full probe matrix. Gradients compare MSE with the SPO+ surrogate, not a derivative of discrete regret. The affine parameterization makes these Jacobians constant in $\theta$, while gradient alignment still depends on the selected predictor and data. Test regret is mean excess objective divided by mean absolute optimal objective. Reported gain is the reduction in mean regret across the ten paired datasets. Bootstrap intervals resample dataset pairs 10,000 times; they are pointwise, not simultaneous. Two-sided Wilcoxon tests receive Holm correction over all eight comparisons.

\begin{table}[t]
\centering
\caption{Controlled extension with independent validation at each capacity. Spectral ranks and batch alignment are measured at selected MSE predictors; means are over ten datasets. Gain is relative test-regret reduction in percent. $p_H$ is Holm-adjusted across eight comparisons. Full capacity differs by task.}
\label{tab:controlled}
\fontsize{9}{10.8}\selectfont
\setlength{\tabcolsep}{5pt}
\renewcommand{\arraystretch}{1.08}
\publicationtable{\begin{tabular}{@{}lrrrrrr@{}}
\toprule
Task & $d$ & Point $r_{\mathrm{eff}}$ & Stack $r_{\mathrm{eff}}$ & Batch $|\cos|$ & Gain & $p_H$ \\
\midrule
Path & 1 & 1.00 & 1.00 & 1.00 & $+0.24$ & 1.0000 \\
Path & 2 & 1.93 & 2.00 & 0.73 & $+1.10$ & 1.0000 \\
Path & 8 & 6.85 & 7.96 & 0.41 & $+1.89$ & 1.0000 \\
Path & 144 & 24.00 & 130.27 & 0.22 & $+11.59$ & 0.0684 \\
Knapsack & 1 & 1.00 & 1.00 & 1.00 & $-0.61$ & 1.0000 \\
Knapsack & 2 & 1.88 & 1.99 & 0.83 & $-0.51$ & 1.0000 \\
Knapsack & 8 & 5.82 & 7.90 & 0.55 & $+0.19$ & 1.0000 \\
Knapsack & 72 & 12.00 & 65.03 & 0.42 & $+10.59$ & 0.0156 \\
\bottomrule
\end{tabular}}
\end{table}

\paragraph{What transfers, and what remains unresolved.}
The one-direction models have collinear batch gradients, because all examples share one parameter direction. Full models have pointwise spectral ranks 24 and 12, but mean stacked ranks approximately 130 and 65; treating these measurements as interchangeable would hide the batch distinction. Low-capacity mean gains are small, but non-significance does not prove equivalence. Full-capacity pointwise bootstrap intervals are $[5.49,16.81]\%$ for shortest path and $[7.30,13.81]\%$ for knapsack. The former's adjusted $p=0.0684$ does not pass 0.05, despite its interval excluding zero, because the interval is unadjusted and the tests differ. All ten knapsack dataset pairs favor full-capacity SPO+; eight of ten do so on shortest path.

Changing capacity changes the attainable predictor family as well as its geometry. This intervention therefore does not identify rank as the sole cause of the gain. Neither task reproduces financial covariance estimation, detached asset selection, or transaction costs. The experiments test the broader predict--then--optimize question; the finance-specific support bound requires its own assumptions. Saved selected models and held-out arrays permit direct recomputation of decisions and gradients.

\subsection{Training Budget and Batch-Order Sensitivity}
\label{app:robustness}
\paragraph{Fresh data and crossed randomness.}
We retain the previous study and add ten new datasets per task, using disjoint generator seeds. Task sizes, noise and the 512/256/512 split remain unchanged. Within each dataset, the ridge initializer and update basis are fixed while three independently seeded minibatch orders vary. We test scalar and full-capacity updates at 20 and 80 epochs. Each 80-epoch optimization trajectory supplies both checkpoints, so training-budget comparisons share their initial 20 epochs. The same three learning rates are compared independently at each budget and for each loss. This entails 720 optimization trajectories, 1,440 trained candidate checkpoints and 480 selected models.

\paragraph{A common no-update option.}
Both MSE and SPO+ may additionally select the unchanged ridge predictor using validation regret. Ties favor no update, then the smaller learning rate. Thus the comparison does not force the prediction-trained baseline away from an already useful initializer. At full capacity, MSE selects no update in 11/30 path runs at each budget and 8/30, 6/30 knapsack runs at 20, 80 epochs. SPO+ selects no update in 1/30, 2/30 path runs and 3/30 knapsack runs at each budget. These are selection frequencies, not independent significance tests.

\paragraph{Independent units and uncertainty.}
For each task, capacity and budget, the three test regrets are first averaged within each dataset. The ten dataset-level pairs are then used for the relative reduction, paired bootstrap interval and two-sided Wilcoxon test. A separate Holm family covers the eight follow-up comparisons. This avoids treating optimization repetitions as additional independently generated tasks; bootstrap intervals remain pointwise. The study was designed after inspecting the first extension and is reported as a sensitivity analysis, not as a preregistered confirmation.

\begin{table}[t]
\centering
\caption{Fresh-data sensitivity study with three minibatch orders per dataset and a shared no-update candidate. Gain is relative test-regret reduction over validation-selected MSE. Intervals resample ten independent dataset pairs after averaging the three training seeds. $p_H$ is Holm-adjusted across the eight follow-up comparisons. Full means 144 path or 72 knapsack update directions.}
\label{tab:robustness}
\fontsize{9}{10.8}\selectfont
\setlength{\tabcolsep}{5pt}
\renewcommand{\arraystretch}{1.08}
\publicationtable{\begin{tabular}{@{}lrlrrr@{}}
\toprule
Task & Epochs & Capacity & Gain (\%) & 95\% interval & $p_H$ \\
\midrule
Path & 20 & 1 & $+0.47$ & $[+0.08, +1.00]$ & 0.1250 \\
Path & 20 & Full & $+12.12$ & $[+5.22, +19.68]$ & 0.0820 \\
Path & 80 & 1 & $+0.52$ & $[+0.02, +1.16]$ & 0.2812 \\
Path & 80 & Full & $+12.54$ & $[+5.22, +20.34]$ & 0.0977 \\
Knapsack & 20 & 1 & $+0.11$ & $[-0.31, +0.54]$ & 0.7422 \\
Knapsack & 20 & Full & $+13.67$ & $[+9.55, +17.83]$ & 0.0156 \\
Knapsack & 80 & 1 & $+0.27$ & $[-0.06, +0.67]$ & 0.5938 \\
Knapsack & 80 & Full & $+13.76$ & $[+9.42, +18.11]$ & 0.0273 \\
\bottomrule
\end{tabular}}
\end{table}

\paragraph{Interpretation.}
The full-capacity mean changes little between the two budgets on these datasets: path $12.12\%$ to $12.54\%$ and knapsack $13.67\%$ to $13.76\%$. This is evidence of limited sensitivity over the tested budgets, not proof of convergence. Knapsack passes the adjusted 0.05 threshold at both budgets, while path does not. Scalar changes remain below $0.6\%$; their lack of adjusted significance is not an equivalence result. This follow-up reduces concern about one minibatch ordering or a forced update baseline, but retains the small synthetic task sizes, linear predictor family and restricted learning-rate grid. Full selected parameters, data arrays and validation records are saved for audit.

\FloatBarrier
\section{Demonstration Controls}
\label{app:demonstration_group}
\subsection{Invertible Coordinates at Fixed Expressivity}
\label{app:reparameterization}
\paragraph{Question and protocol.}
Does the capacity contrast merely reflect lost expressivity? We use fresh datasets (seed offset 270927) from the same degree-four, noise-0.5 generators as Appendix~\ref{app:controlled}: ten datasets per task, 512 training, 256 validation and 512 test examples. A train-only ridge predictor $P_0$ initializes every run. We retain the full affine function class and scale its output coordinates by $D_\varepsilon=\mathrm{diag}(1,\varepsilon,\ldots,\varepsilon)$ for $\varepsilon\in\{1,0.1,0.01\}$. Exact pointwise Jacobian rank stays at 24 for path and 12 for knapsack. Spectral rank is the exponential entropy of normalized squared singular values; the reported values are pointwise, not stacked ranks.

\paragraph{Why compensation is an exact control.}
Write $G=\nabla_P L$. Ordinary SGD in $\Theta$ gives
\begin{equation}
\nabla_\Theta L=D_\varepsilon G,\qquad
\Delta P=-\eta D_\varepsilon^2G.
\end{equation}
The compensated step $\Delta\Theta=-\eta D_\varepsilon^{-2}\nabla_\Theta L$ instead gives $\Delta P=-\eta G$. With the same initializer, batch order and learning rate, induction therefore gives the same predictor trajectory as identity-coordinate SGD, for either loss. This equivalence uses invertibility and plain SGD; it is not a claim about arbitrary adaptive optimizers. The largest final parameter discrepancy in double precision was $6.7\times10^{-16}$.

Each loss and coordinate setting independently selects among learning rates $\{0.003,0.01,0.03\}$ and the unchanged initializer by validation decision regret, with ties favoring no update. We train 40 epochs with batch size 64 and matched minibatch order. There are 720 candidate fits and 240 selected models. The audit recomputes validation and test regret from saved predictors and exact decision oracles. Intervals below bootstrap the ten paired datasets 10,000 times; they are pointwise and no new multiplicity-adjusted significance claim is made.

\begin{table}[t]
\centering
\caption{Fixed-class coordinate control. Gain is mean test-regret reduction relative to matched MSE; positive favors SPO+. Exact rank is unchanged. All compensated rows within a task recover the same predictor outcomes.}
\label{tab:reparameterization}
\fontsize{9}{10.8}\selectfont
\publicationtable{\begin{tabular}{@{}llrrrr@{}}
\toprule
Task & Update & $\varepsilon$ & Spectral rank & Gain (\%) & 95\% interval\\
\midrule
Path & Ordinary & 1 & 24.00 & 11.79 & $[7.51, 15.76]$ \\
Path & Ordinary & 0.1 & 2.91 & 3.12 & $[0.85, 5.27]$ \\
Path & Ordinary & 0.01 & 1.02 & 1.19 & $[-0.50, 2.75]$ \\
Path & Compensated & 1 & 24.00 & 11.79 & $[7.51, 15.76]$ \\
Path & Compensated & 0.1 & 2.91 & 11.79 & $[7.51, 15.76]$ \\
Path & Compensated & 0.01 & 1.02 & 11.79 & $[7.51, 15.76]$ \\
Knapsack & Ordinary & 1 & 12.00 & 10.17 & $[5.73, 14.44]$ \\
Knapsack & Ordinary & 0.1 & 1.75 & 2.90 & $[1.30, 4.34]$ \\
Knapsack & Ordinary & 0.01 & 1.01 & 0.52 & $[-0.28, 1.29]$ \\
Knapsack & Compensated & 1 & 12.00 & 10.17 & $[5.73, 14.44]$ \\
Knapsack & Compensated & 0.1 & 1.75 & 10.17 & $[5.73, 14.44]$ \\
Knapsack & Compensated & 0.01 & 1.01 & 10.17 & $[5.73, 14.44]$ \\
\bottomrule
\end{tabular}}
\end{table}

\paragraph{Scope.}
The control rules out a change of function class as the explanation within this experiment. It does not isolate spectral rank from conditioning: both vary with the same coordinate transformation, and the unscaled first coordinate is fixed rather than randomized across orientations. The finite learning-rate grid and budget do not establish convergence. Exact rank collapse, spectral concentration and expressivity must therefore remain distinct concepts.

\subsection{Chronological Forward-Target Financial Baselines}
\label{app:forecast_control}
\paragraph{Question and data.}
We test whether fitting future covariance changes the conclusion drawn from a reconstruction baseline. This is a separate protocol using a frozen 100-stock universe and observed index returns from the cached 2005--2025 series (5,281 return observations). It is survivorship-biased and does not reconstruct point-in-time membership. Ten annual test folds cover 2016--2025, each preceded by six validation months and 36 training months. Shared histories make the annual folds unsuitable for an independence-based significance claim here.

At each month's first trading day, all estimators use strictly preceding returns. A common $K=20$ support is fixed for each fold using squared stock--index correlations from the 252 returns preceding validation. The input $S_t$ is the trailing 63-day joint covariance of stocks and the observed index. A long-only, unit-sum QP uses its selected stock block and stock--index cross-covariance. The weights are held numerically constant across daily return evaluations until the next monthly estimate; this corresponds to daily rebalancing to target weights. Transaction costs are omitted in this control.

\paragraph{Predictive and task baselines.}
The common scalar family is $M_\alpha=(1-\alpha)S+\alpha\,\mathrm{tr}(S)I/101$. Reconstruction uses $\alpha=0$. Future-target variants fit a single $\alpha\in[0,1]$ by closed-form Frobenius MSE against future 21- or 63-day covariance at monthly training origins. Every forward label ends before validation begins. Future-selected MSE chooses its horizon by validation tracking error. Task validation chooses among 21 uniformly spaced $\alpha$ values by the same metric; this is grid selection, not end-to-end DFL. Additional comparators are Ledoit--Wolf and EWMA with decay selected from $\{0.94,0.97,0.99\}$. Support, QP constraints and evaluation dates are matched across methods.

\begin{table}[t]
\centering
\caption{Future-target control over ten annual test folds. TE is mean annualized tracking error (lower is better). Covariance error is mean relative squared Frobenius error against future 21-day covariance, using only labels contained in the test year. Wins count years with lower TE than reconstruction. Changes are descriptive, not significance tests.}
\label{tab:forecast_targets}
\fontsize{9}{10.8}\selectfont
\publicationtable{\begin{tabular}{@{}lrrrr@{}}
\toprule
Estimator & TE (\%) & TE change (\%) & Cov. error & Wins / 10\\
\midrule
Reconstruction & 5.700 & +0.00 & 1.636 & --- \\
Future MSE: 21 days & 5.945 & +4.31 & 1.026 & 6 \\
Future MSE: 63 days & 6.083 & +6.73 & 0.983 & 6 \\
Future MSE: selected & 5.935 & +4.13 & 1.027 & 6 \\
Task validation & 5.616 & -1.46 & 1.453 & 7 \\
Ledoit--Wolf & 5.660 & -0.69 & 1.261 & 6 \\
EWMA: selected & 5.708 & +0.14 & 1.527 & 2 \\
\bottomrule
\end{tabular}}
\end{table}

\begin{figure}[t]
\centering
\includegraphics[width=\figurewidth]{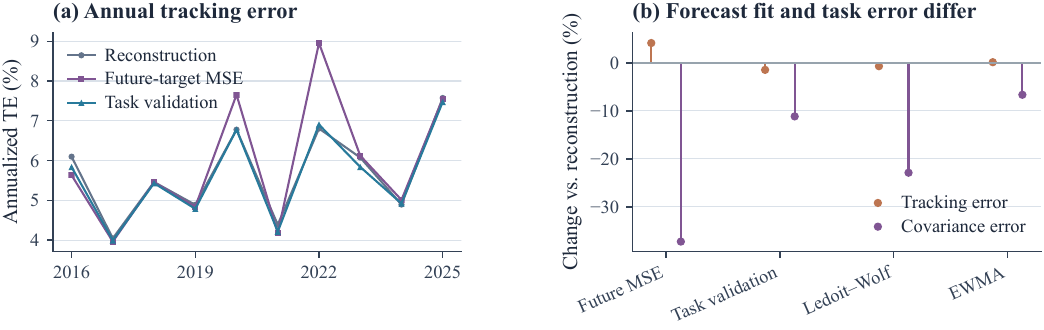}
\caption{\textbf{Improved covariance fit need not improve tracking.} Left: annual held-out TE reveals the uneven cost of future-target fitting across market years. Right: changes in two distinct metrics relative to reconstruction; negative is better for either metric. Their percentages have different denominators and are not a common utility scale.}
\label{fig:forecast_targets}
\end{figure}

\paragraph{Interpretation and remaining gap.}
The selected future-target estimator improves mean relative covariance error by $37.25\%$, yet worsens mean TE by $4.13\%$; it beats reconstruction in six years, illustrating why win counts alone miss the magnitude of failures. Task validation improves mean TE by $1.46\%$, and Ledoit--Wolf by $0.69\%$. These results do not show that reconstruction is the strongest predictive baseline or that DFL dominates forecasting. They demonstrate target sensitivity within one scalar family. The neural comparison in Appendix~\ref{app:neural_forward} extends this control beyond a scalar family. Point-in-time stock membership, transaction costs and broader forecast architectures remain needed. The release saves all 70 portfolios, daily returns, support selections and chronology; its independent audit recomputes TE, forward-label boundaries, fitted shrinkage and validation selections.

\subsection{Matched Neural Forward-Target Comparison}
\label{app:neural_forward}
\paragraph{Design.}
To address the scalar family's limited expressivity, we compare MSE and DFL using the same residual neural covariance predictor. This is a new matched experiment, not a rerun of the historical neural architecture. We reuse the chronology, frozen universe, training-only $K=20$ support and daily target-weight evaluation of Appendix~\ref{app:forecast_control}. Both objectives use the same monthly training origins with 21-day forward labels ending before validation. Ten annual test folds cover 2016--2025. Each fold uses three paired initialization seeds, a 36-month training interval and six validation months.

\paragraph{Architecture and objectives.}
For each of the 20 selected stocks and the index, six trailing features describe 21-/63-day means divided by 63-day standard deviation, log 21-/63-day volatility, correlation with the index, and the five-day mean divided by 63-day standard deviation. Feature means and scales are fitted only on training origins. A shared $6\to16$ tanh layer and $16\to5$ linear head contain 197 parameters. If its outputs are $(z_i,f_i)$, the normalized covariance is
\begin{equation}
\widehat M = D\widetilde S D + FF^\top + 10^{-4}I,
\quad D_{ii}=\exp\!\left(\tfrac12\tanh z_i\right),
\quad F_i=0.1f_i.
\end{equation}
Here $\widetilde S$ is trailing 63-day joint covariance divided by the mean training covariance trace per asset. The rank-four residual does not define the predictor's Jacobian rank. Small head weights initialize the network close to trailing covariance. The diagonal scaling is bounded and the residual is positive semidefinite, so this is one restricted neural family rather than an unrestricted covariance forecaster.

MSE minimizes mean squared entries against centered future 21-day sample covariance. DFL minimizes mean squared tracking returns on those same 21 days, using the exact long-only, unit-sum QP against the predicted joint stock--index covariance. DFL's training objective is a second moment, while reported TE uses centered tracking-return dispersion. This control omits turnover penalties and transaction costs.

\paragraph{Matched optimization and selection.}
Both losses use full-batch Adam with gradient norm clipped at one, learning rates $\{0.001,0.003\}$, and 40 epochs. Validation TE selects among checkpoints at 20 and 40 epochs and the unchanged initializer; both methods receive the same candidate budget. The 120 candidate trajectories yield 60 selected models. We determine the QP active set numerically and differentiate its equality-constrained solution; this derivative is local to a stable active set. Twelve double-precision directional finite-difference checks had maximum absolute discrepancy $2.1\times10^{-11}$. Independent saved-model checks reconstruct neural outputs in NumPy, verify KKT conditions and chronological boundaries, and recompute tracking returns and selected validation scores.

\begin{table}[t]
\centering
\caption{\textbf{Matched neural future-target comparison.} TE is annualized and averaged over three initialization seeds within each year. Positive gain denotes a reduction from MSE to DFL. The last row compares the two means over years, rather than averaging yearly percentage gains. Annual folds share history; results are descriptive.}
\label{tab:neural_forward}
\fontsize{9}{10.8}\selectfont
\publicationtable{\begin{tabular}{@{}rrrr@{}}
\toprule
Test year & Future MSE TE (\%) & DFL TE (\%) & Gain (\%)\\
\midrule
2016 & 6.090 & 6.127 & -0.60 \\
2017 & 4.106 & 4.039 & +1.64 \\
2018 & 5.463 & 5.454 & +0.17 \\
2019 & 4.885 & 4.899 & -0.28 \\
2020 & 6.789 & 6.836 & -0.69 \\
2021 & 4.366 & 4.463 & -2.21 \\
2022 & 6.828 & 6.795 & +0.48 \\
2023 & 6.085 & 6.081 & +0.08 \\
2024 & 4.929 & 4.897 & +0.64 \\
2025 & 7.569 & 7.560 & +0.12 \\
\midrule
Mean & 5.711 & 5.715 & -0.07 \\
\bottomrule
\end{tabular}}
\end{table}

\begin{figure}[t]
\centering
\includegraphics[width=\figurewidth]{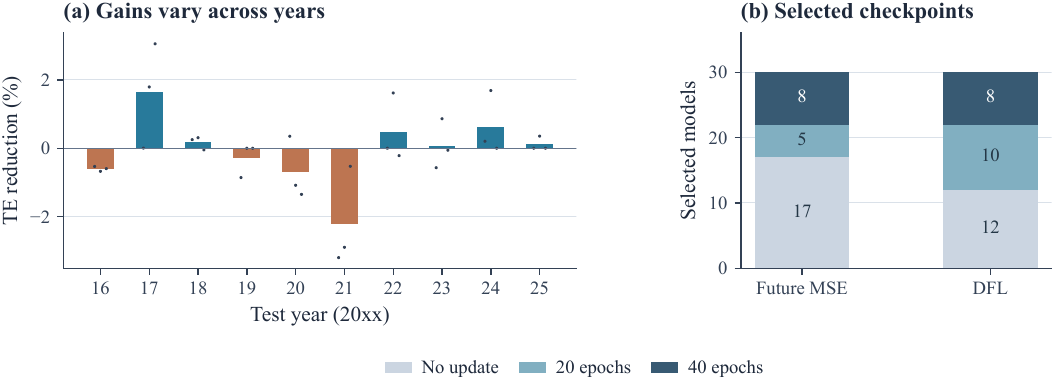}
\caption{\textbf{A matched neural control without an aggregate DFL gain.} (a) Bars compare seed-averaged TE within each year; dots show the three paired initialization results and are not additional independent datasets. Positive values favor DFL. (b) Validation-selected checkpoints across 30 models per loss. Frequent no-update selections limit conclusions about trained-model superiority.}
\label{fig:neural_forward}
\end{figure}

\paragraph{Results and limits.}
Mean TE is $5.711\%$ for future-target MSE and $5.715\%$ for DFL; the latter is $0.07\%$ higher in relative terms. DFL improves six of ten years, but its mean advantage is absent. Validation selects no update for 17/30 MSE models and 12/30 DFL models. This finding is a descriptive negative result, not evidence of statistical equivalence or proof that neural DFL is ineffective. It addresses a matched forward-target comparison for one 197-parameter family. The narrow learning-rate grid, 40-epoch budget, small number of monthly training origins and initialization near historical covariance warrant further sensitivity analysis. The result also cannot attribute differences from historical financial gains to the target alone: architecture, asset support and evaluation protocol differ. Broader architectures, more training and point-in-time membership remain unresolved.

\FloatBarrier
\section{Open Questions and Limitations}
\label{app:limits_group}

\subsection{Unresolved Empirical Findings}
\label{app:unexplained}

The following observations delimit the empirical interpretation. They distinguish failures of a proposed ordering from experiments that remain too limited to identify a mechanism. In particular, the new neural control cannot explain why its aggregate result differs from historical financial comparisons.

\paragraph{1. Two markets exceed $1\%$ at $d{=}1$, and no measured quantity
predicts which.} Three of the 38 one-parameter configurations clear a $1\%$ gain: Hang~Seng at $K=10$ ($1.76\%$) and $K=5$ ($1.28\%$), and ASX~200 at $K=5$ ($1.21\%$). No simple ordering by heterogeneity, sparsity or universe size explains these cases. Euro~Stoxx has the second-largest measured heterogeneity ($h=0.636$, versus $0.713$ for Hang~Seng and $0.505$ for ASX), yet its three gains are $0.06\%$, $0.29\%$ and $0.08\%$. ASX at $K=5$ has a small support ratio ($K/N=0.031$), while Euro~Stoxx has the smallest universe ($N=47$). These comparisons do not identify which market characteristics cause the differences.

\paragraph{2. The upper threshold does not transfer across domains.}
On PyEPO shortest path, SPO+ at sampled stacked rank $r_{\mathrm{eff}}=7.02$ gains $0.72\%$, below the equity rule's $1\%$ benefit threshold (Appendix~\ref{app:pyepo}). The rank--gain ordering is strong ($\rho=0.93$), but it does not validate a shared cut point. The equity proxy and sampled stacked rank also measure different quantities. A five-seed result suggesting threshold transfer did not persist with ten seeds.

\paragraph{3. Architecture orders differently across equity universes.}
At $N=100$, the largest displayed gain occurs for the structured model; at $N=478$, the conditional model has the largest relative reduction among these three classes. Its pointwise rank-one Jacobian does not resolve this difference, because input-dependent Jacobian directions can span a larger batch subspace. Changes in architecture, data, initialization and optimization also remain potential explanations.

\paragraph{4. \texttt{perturbedOpt} is significantly worse than two-stage
through the middle of the capacity range.} With hyper-parameters frozen from
the full-capacity stage, it loses to MSE at every intermediate
$r_{\mathrm{eff}}$---$-9.2\%$ at $1.94$, $-7.9\%$ at $3.73$, $-20.8\%$ at $7.02$,
$-30.3\%$ at $12.70$, $-13.2\%$ at $20.39$, all $p \leq 0.02$---then recovers to
$+4.98\%$ at full rank. At the lowest rank, its estimated change is $-2.32\%$ ($p=0.19$); this does not establish equivalence. Capacity-specific tuning is needed to separate optimization effects from geometry.

\paragraph{5. Negative-identity training is unstable at reduced capacity.}
The full-capacity configuration gives a $2.24\%$ gain, while the reduced-capacity runs exceed $1.5\times$ baseline regret. The instability could reflect the loss approximation, learning-rate transfer, or the restricted parameterization. It is retained as an experimental failure rather than excluded by the geometric argument.

\paragraph{6. The knapsack sweep does not support the proposed ordering.}
In the archived protocol, the full-capacity gain is imprecise ($1.85\%$, $p=0.41$, five seeds), and gain is not detectably monotone in measured capacity. More repetitions and separate tuning are needed to distinguish uncertainty from a task-specific failure of the hypothesis. The new validation-tuned extension in Appendix~\ref{app:controlled} gives a different result on a smaller task; changes in problem size, initialization, tuning and data prevent attributing that difference to one cause.

\paragraph{7. Spatial replication is limited.}
NOAA and EPA provide one evaluation window per configuration, except EPA at $N=124$, which provides two. Table~\ref{tab:cross_domain} therefore supports only a descriptive contrast. The synthetic sweep has five folds at each of 18 settings, but cannot establish that heterogeneity explains the real-domain difference. Measured equity heterogeneity is reported separately in Appendix~\ref{app:diagnostic}.

\paragraph{8. The neural forward-target control does not reproduce the historical gain.}
The matched residual-network experiment gives mean TE of $5.711\%$ for MSE and $5.715\%$ for DFL, whereas the historical nine-fold neural comparison favors DFL. Architecture, support, targets and evaluation protocol differ, so this contrast cannot identify which change matters. Moreover, validation frequently retains the initializer. Longer training, a wider learning-rate search and alternative covariance architectures are needed before attributing the result to objective choice or predictor geometry. Shared annual histories and the absence of an equivalence test further limit inference (Appendix~\ref{app:neural_forward}).

\paragraph{Implication for use.}
For a new task, first establish a predictive baseline with labels aligned to the intended forecasting horizon. Compare objectives within the same architecture, initialization distribution and validation budget, allowing both to retain an unchanged baseline. Interpret geometry alongside effect sizes and held-out outcomes; rank one alone does not justify skipping task training. The small financial gains and negative neural control reported here motivate these comparisons, but do not prescribe a universal choice of objective.

\end{document}